\documentclass[accepted]{uai2026} 
                        
\usepackage[american]{babel}

\usepackage{natbib} 
\usepackage{mathtools} 
\usepackage{booktabs} 
\usepackage{tikz} 

\usepackage{amsmath,amsfonts,bm}

\def\eqref#1{equation~\ref{#1}}

\def\1{\bm{1}}

\def\rva{{\mathbf{a}}}

\def\rvd{{\mathbf{d}}}

\def\rvg{{\mathbf{g}}}

\def\rvu{{\mathbf{i}}}

\def\rvm{{\mathbf{m}}}

\def\rvu{{\mathbf{u}}}
\def\rvv{{\mathbf{v}}}

\def\rvx{{\mathbf{x}}}
\def\rvy{{\mathbf{y}}}
\def\rvz{{\mathbf{z}}}

\DeclareMathAlphabet{\mathsfit}{\encodingdefault}{\sfdefault}{m}{sl}
\SetMathAlphabet{\mathsfit}{bold}{\encodingdefault}{\sfdefault}{bx}{n}

\DeclareMathOperator*{\argmin}{arg\,min}

\usepackage{mdframed}
\usepackage{amssymb}
\usepackage[normalem]{ulem}
\usepackage{amsthm}
\usepackage{thmtools} 
\usepackage{algorithm}
\usepackage{algorithmic}
\usepackage{thm-restate}
\theoremstyle{plain}
\newtheorem{theorem}{Theorem}[section]

\newtheorem{lemma}[theorem]{Lemma}
\newtheorem{corollary}[theorem]{Corollary}
\theoremstyle{definition}
\newtheorem{definition}[theorem]{Definition}
\newtheorem{assumption}[theorem]{Assumption}
\theoremstyle{remark}
\newtheorem{remark}[theorem]{Remark}

\title{The Convergence Behavior of Adam under Heavy-Tailed Noise}

\author[1]{\href{mailto:<pangyiji@msu.edu>?Subject=The Convergence Behavior of Adam under Heavy-Tailed Noise}{Yijiang Pang}{}}
\affil[1]{%
    Michigan State University\\
    East Lansing, USA
}  
  
\begin{document}
\maketitle

\begin{abstract}
We establish the first convergence guarantees for the plain vector-form \emph{Adam} optimizer under heavy-tailed stochastic noise. While several Adam variants are known to achieve optimal iteration complexity in bounded-variance nonsmooth nonconvex optimization, little is understood about their behavior when stochastic gradients admit only a bounded $p$-th central moment for some $p \in (1,2]$, a setting increasingly observed in modern deep learning. To address this gap, we generalize the recent online-to-nonconvex conversion framework to accommodate heavy-tailed martingale-difference noise. Building on this generalized framework, we develop a discounted regret analysis for Adam, without restrictive parameter coupling.
Our results show that Adam converges to $(\rho,\epsilon)$-stationary points under heavy-tailed noise. However, it exhibits a suboptimal iteration complexity and $p$-dependent convergence, a suboptimality that persists even in the bounded-variance case ($p=2$). Specifically, the $\epsilon$-dominant term in the iteration complexity for reaching in-expectation stationarity is
\(
T
=
\mathcal{O}\left(
\Delta \rho^{1/2}
(G+\sigma)^{\frac{5p}{3p-4}}
\epsilon^{-\left(\frac{5p}{3p-4}+\frac{3}{2}\right)}
\right)
\) for $p \in (\frac{4}{3},2]$,
which simplifies to $T=\mathcal{O}(\epsilon^{-13/2})$ when $p=2$. When the domain radius is known and used to control the online-learner output, a standard setup in related literature, the convergence rate improves to match the optimal complexity. In this case, the $\epsilon$-dominant iteration complexity is
\(
T
=
\mathcal{O}\left(
\Delta \rho^{1/2}
(G+\sigma)^{\frac{p}{p-1}}
\epsilon^{-\left(\frac{p}{p-1}+\frac{3}{2}\right)}
\right)
\) for $p \in (1,2]$,
which simplifies to $T=\mathcal{O}(\epsilon^{-7/2})$ when $p=2$. These findings provide new theoretical insight into the robustness and limitations of Adam in heavy-tailed regimes.
\end{abstract}

\section{Introduction}
Adaptive gradient methods, most notably Adam~\citep{kingma2014adam}, have become the de facto optimization tools for training modern deep neural networks.
Despite their remarkable empirical success, the theoretical understanding of Adam-type algorithms remains incomplete, particularly in stochastic environments where gradient noise deviates from classical bounded-variance assumptions.

Recent empirical and theoretical studies~\citep{battash2024revisiting, ahn2023linear, garg2021proximal} suggest that stochastic gradients in deep learning often exhibit heavy-tailed behavior. Instead of satisfying a bounded-variance condition, the noise may only admit a bounded $p$-th central moment for some $p \in (1,2]$. Such heavy-tailed phenomena have been observed in large-scale language model training and other high-dimensional learning problems. In these regimes, analyses relying on sub-Gaussian or bounded-variance assumptions are no longer applicable, and even classical stochastic gradient descent (SGD)~\citep{robbins1951stochastic} may suffer from instability or divergence~\citep{zhang2020adaptive}.

Over the past few years, substantial progress has been made in understanding both non-adaptive and adaptive methods under heavy-tailed noise. Clipped SGD, normalized gradient methods, and sign-based optimizers have been shown to provide improved robustness in heavy-tailed settings~\citep{zhang2020adaptive, liu2023breaking, sun2024gradient, he2025complexity}.
On the adaptive side, recent works analyze clipped or modified variants of AdaGrad and Adam-type methods and establish (high-probability) convergence guarantees~\citep{chezhegov2024clipping, fradin2025tight}. More recently, classical algorithms have been shown to achieve optimal convergence guarantees in convex settings and near-optimal rates for non-smooth non-convex optimization under heavy-tailed noise~\citep{liu2025online}.

Nevertheless, convergence guarantees for the Adam update under heavy-tailed noise, without algorithmic modification, remain largely unexplored. In fact, until very recently, rigorous convergence results for Adam-type methods under heavy-tailed stochastic gradients were unavailable~\citep{yu2026sign}. Even in the bounded-variance case ($p=2$), many existing analyses focus on simplified or modified variants of Adam rather than the exact algorithm used in practice.

A major conceptual development in recent years is the online-to-nonconvex conversion framework~\cite{cutkosky2023optimal, ahn2024understanding, zhang2024random}, which establishes a principled connection between adaptive optimization and online learning. We briefly recall the fundamental setup.

Online Linear Optimization (OLO) proceeds over $T$ rounds. At round $t$, the learner selects $\rvz_t \in \mathbb{R}^d$ and then observes a linear loss
$
\ell_t(\cdot) := \langle \rvv_t, \cdot \rangle,
$
where $\rvv_t$ is a cost vector revealed by the environment.
The goal is to minimize the \emph{static regret}
$
\mathrm{Regret}_T(\rvu)
:=
\sum_{t=1}^T \langle \rvv_t, \rvz_t - \rvu \rangle,
$
for any comparator $\rvu$ in a domain $\mathcal{D}$.
To connect online learning with non-convex optimization, recent works introduce the notion of \emph{$\beta$-discounted regret}.
The discounted loss is defined as
$
\ell_t^{[\beta]}(\cdot) = \langle \beta^{-t} \rvv_t, \cdot \rangle,
$
leading to the discounted regret
\[
\mathrm{Regret}_T^{[\beta]}(\rvu)
=
\sum_{t=1}^T
\langle \beta^{T-t}\rvv_t, \rvz_t - \rvu \rangle.
\]

Within this framework, effective nonconvex optimizers correspond to online learners achieving small discounted regret. This perspective provides structural insight into adaptive methods. For example, interpreting Adam as a Follow-the-Regularized-Leader (FTRL) procedure clarifies its update structure and helps explain its empirical advantages over SGD when problem properties are unknown~\citep{ahn2024adam}.
Moreover, the conversion framework has enabled sharp iteration complexity results for Adam. However, existing analyses often rely on technical simplifications, for example, removing bias-correction terms $1/(1-\beta_1^t)$ and $1/(1-\beta_2^t)$ or imposing parameter couplings such as $\beta_2 = \beta_1^2$ for analytical tractability. While these assumptions facilitate proof techniques, they depart from the exact algorithm used in practice and partially obscure its intrinsic behavior.
Furthermore, the current conversion framework does not directly accommodate heavy-tailed noise.

Given these developments, a central question remains open:

\emph{What convergence guarantees can be established for the Adam update, without restrictive modifications? This question becomes even more compelling in heavy-tailed settings.}

\paragraph{Our contributions.}
In this work, we provide the first convergence guarantees for the Adam update under heavy-tailed stochastic noise. Our main contributions are summarized as follows:

\begin{itemize}
    \item \textbf{Heavy-tail conversion theory.} We generalize the online-to-nonconvex conversion framework to accommodate heavy-tailed martingale-difference noise.
    
    \item \textbf{Vector-form Adam regret analysis.}
    We establish a complete discounted regret analysis for the exact vector-form Adam update without enforcing restrictive parameter coupling: (a) we only require $\beta_{1}\le \beta_{2}$; (b) the numerical stabilizer $\epsilon$ in
    Adam's denominator is treated as a fixed small constant and is not required to scale with either $G$ or $\sigma$.
    
    \item \textbf{Heavy-tailed nonconvex convergence.} We show that Adam converges to $(\rho,\epsilon)$-stationary points under heavy-tailed noise. The resulting complexity is
    $p$-dependent and suboptimal relative to the optimal nonsmooth stochastic rate; this suboptimality remains even in the bounded-variance case $p=2$.
    
    \item \textbf{Optimal rates under known domain radius.} When the domain radius is known and used to clip the online-learner output, a standard assumption in related literature, the iteration complexity improves to match optimal rates.
\end{itemize}

\section{Related Work}

\paragraph{Heavy-tailed stochastic optimization}
Heavy-tailed gradient noise, typically modeled by assuming only a bounded $p$-th central moment for some $p\in(1,2]$, has been widely studied as a realistic alternative to bounded-variance assumptions in deep learning.
In non-adaptive optimization, a large body of work develops robust methods for this regime.
\citet{zhang2019gradient} analyzed clipped SGD and established convergence guarantees together with lower bounds under heavy-tailed noise.
\citet{gorbunov2020stochastic} and \citet{nguyen2023improved} strengthened these results to high-probability guarantees and studied accelerated variants.
Beyond explicit clipping, \citet{gorbunov2020stochastic} and \citet{yu2026sign} analyzed normalization-based and sign-based methods, showing that magnitude suppression can achieve optimal or near-optimal rates without explicit truncation.
For adaptive methods, theoretical guarantees are comparatively limited.
\citet{chezhegov2024clipping} studied clipped AdaGrad- and Adam-type algorithms and established high-probability convergence guarantees under heavy-tailed noise.
\citet{ilboudo2023adaterm} proposed robust moment estimators based on heavy-tailed modeling and analyzed their stability.
These works demonstrate that adaptive methods can be made robust in heavy-tailed regimes, but require explicit algorithmic modifications.

\paragraph{Online-to-nonconvex conversion and heavy-tailed online learning}
A major recent development in adaptive optimization is the online-to-nonconvex conversion framework.
\citet{cutkosky2023optimal, ahn2024understanding, zhang2024random, ahn2024general} connected nonconvex stationarity guarantees to regret bounds of online learners through discounted regret.
These works established optimal iteration complexity in the bounded-variance setting for carefully designed online learners with momentum-type updates.
Parallel to these developments, heavy-tailed online learning has also been studied.
\citet{zhang2022parameter} established high-probability regret guarantees via gradient truncation techniques for online convex optimization.
\citet{liu2025online} established regret guarantees for classical online convex optimization algorithms without explicit truncation.

\section{Preliminaries}

In this section, we introduce the assumptions and basic notions used throughout the paper.
Our assumptions largely follow prior work on non-smooth non-convex stochastic optimization and adaptive methods
\citep{cutkosky2023optimal, ahn2024adam, zhang2024random},
while explicitly allowing only bounded $p$-th moments of the stochastic gradient noise.
We also recall the notion of $(\rho,\epsilon)$-stationarity, which serves as the optimality criterion in our analysis.
This notion is standard in recent studies of non-smooth non-convex stochastic optimization
\citep{zhang2024random, ahn2024adam, zhang2019gradient, jordan2023deterministic, tian2022finite}.
Although $(\rho,\epsilon)$-stationarity relaxes Goldstein stationarity~\citep{goldstein1977optimization},
it retains sufficient structure to recover first-order stationarity under additional smoothness assumptions.

\begin{assumption}
\label{ass_func}
Let $F:\mathbb{R}^{d}\rightarrow\mathbb{R}$ be a differentiable function with the following properties:
\begin{itemize}
    \item  The \textbf{function} is bounded below: $\inf_{\rvx}F(\rvx)>-\infty$. We define $\Delta:=F(\rvx_0)-\inf_{\rvx}F(\rvx)$.
    \item The \textbf{function} F is well-behaved, i.e., $\forall \rvx$ and $\rvy$, $F(\rvy) - F(\rvx) = \int_{0}^{1}\langle\nabla F(\rvx + t(\rvy - \rvx), \rvy - \rvx)\rangle dt$.
    \item The \textbf{function} F is $G$-Lipschitz, i.e., $\forall \rvx, \Vert\nabla F(\rvx)\Vert\leq G$.
    \item For every query point $\rvx$, the \textbf{stochastic gradient}
$\rvg\leftarrow \mathrm{StoGrad}(\rvx,r)$ satisfies
\(
\mathbb{E}[\rvg\mid \rvx]=\nabla F(\rvx),
\quad
\mathbb{E}\!\left[\|\rvg-\nabla F(\rvx)\|^p\mid \rvx\right]\le \sigma^p.
\)
\end{itemize}
\end{assumption}

\begin{definition}[$(\rho, \epsilon)$-stationary point]
\label{def_stat_point}
Supposing $F:\mathbb{R}^{d}\rightarrow\mathbb{R}$ is differentiable. Then $\rvx$ is a $(\rho, \epsilon)$-stationary point of $F$ if $\Vert\nabla F(\rvx)\Vert^{[\rho]} \leq \epsilon$ where
$
\Vert\nabla F(\rvx)\Vert^{[\rho]}:=\inf_{p\in\mathcal{P}(\mathbb{R}^{d}), \mathbb{E}_{\rvy\sim p}[\rvy]=\rvx}\left\{\Vert\mathbb{E}[\nabla F(\rvy)]\Vert + \rho\mathbb{E}[\Vert\rvy-\rvx\Vert^{2}]\right\}.
$
\end{definition}

To facilitate an online-learning interpretation of Adam,
we introduce an equivalent assumption on the cost vectors corresponding to the stochastic gradients in Assumption~\ref{ass_func}.

\begin{assumption}
\label{ass_vector_assumption}
For each $t\in[T]$, let $\rvv_t\in\mathbb{R}^d$ denote a stochastic cost vector
revealed after the history $\mathcal{F}_{t-1}$. Define
\[
\boldsymbol{\mu}_t:=\mathbb{E}[\rvv_t\mid\mathcal{F}_{t-1}],
\qquad
\boldsymbol{\xi}_t:=\rvv_t-\boldsymbol{\mu}_t .
\]
Assume that, almost surely,
\[
\|\boldsymbol{\mu}_t\|\le G,
\quad
\mathbb{E}\!\left[\|\boldsymbol{\xi}_t\|^p\mid\mathcal{F}_{t-1}\right]\le \sigma^p,
\quad
\mathbb{E}\!\left[\boldsymbol{\xi}_t\mid\mathcal{F}_{t-1}\right]=0 .
\]
\end{assumption}




\begin{assumption}[Ball-constrained domain]
\label{ass_bounded_domain}
Let $\mathcal{D}\subseteq\mathbb{R}^d$ denote the Euclidean ball of
radius $D>0$,
$
\mathcal{D}
:=\{x\in\mathbb{R}^d:\|x\|\le D\}.
$
Any comparator $\rvu\in\mathcal{D}$ satisfies $\|\rvu\|\le D$.
\end{assumption}
Assumption~\ref{ass_bounded_domain} is used to specify the comparator class in the online-learning analysis.

\begin{remark}
The symbol $D$ plays two related but distinct roles. For the Adam regret bound, $D$ is only the comparator radius and need not be used by the algorithm. In the known-domain-radius setting~\citep{ahn2024adam, liu2025online}, $D$ is additionally available to the algorithm and can be used to control the online-learner output. For example, \citet{ahn2024adam} employ a $D$-based clipping operation $
\mathrm{clip}_D(\rvz)
:=
\frac{\rvz}{\|\rvz\|}\min(D, \|\rvz\|),
$ to control the outputs of the online learner. 
In our analysis, this distinction explains the difference between the plain-Adam rate and the optimal rate obtained when $D$-based output control is available.
\end{remark}
\section{Discounted-to-nonconvex conversion}
\label{sec_dtnc}

\begin{algorithm}[t]
   \caption{Discounted-to-nonconvex conversion algorithm~\citep{zhang2024random}}
   \label{alg_dtnc}
\begin{algorithmic}[1]
   \STATE {\bfseries Input:} Initial point $\rvx_{0}$, $T$,  online learner $\mathcal{A}$ outputting $\rvz$, and discounting factor $\beta\in(0,1)$
   \FOR{$t=1$ {\bfseries to} $T$}
   \STATE Receive $\rvz_{t}$ from $\mathcal{A}$ 
   \STATE Update $\rvx_{t}\leftarrow \rvx_{t-1} + \kappa_{t}\rvz_{t}$, where $\kappa_{t}\sim\text{Exp(1)}$ i.i.d.
   \STATE Compute $\rvg_{t}\leftarrow \text{StoGrad}(\rvx_{t},r_{t})$
   \STATE Send $\ell_{t}^{[\beta]}(\rvz_{t}):=\langle\beta^{-t}\rvg_{t}, \rvz_{t}\rangle$ to $\mathcal{A}$
   \ENDFOR
\end{algorithmic}
\end{algorithm}

We now present a generalized discounted-to-nonconvex conversion framework that accommodates both heavy-tailed and bounded-variance noise. The results of this section are of independent interest beyond the analysis of Adam.

To obtain a $(\rho,\epsilon)$-stationary point, both components in Definition~\ref{def_stat_point} must be controlled:
(i) the expected gradient norm/smoothed first-order term $\|\mathbb{E}[\nabla F(\rvy)]\|$, and
(ii) the variance term $\mathbb{E}[\|\rvy - \rvx\|^2]$.
We bound these two terms separately.

\subsection{Bounding the Two Terms}

\paragraph{Bounding the expected gradient norm via discounted regret}
We first establish a heavy-tailed analogue of the classical interaction between expected gradient norm and discounted regret.

The proof follows the standard discounted-to-nonconvex conversion argument~\citep{ahn2024adam}, with a key modification to accommodate heavy-tailed noise. In particular, we invoke the von Bahr--Esseen inequality for martingale differences~\citep{von1965inequalities} to control stochastic error terms without assuming bounded variance.
The key result is summarized in Proposition\ref{prop_heavy_tail_conversion}, together with its supporting Lemma~\ref{lemma_discounted_sum_iid}; the proofs are deferred to Appendix~\ref{append_sec_dtnc}.

\begin{restatable}[]{lemma}{LemmaDiscountedSumIID}
\label{lemma_discounted_sum_iid}
Suppose Assumption~\ref{ass_vector_assumption} holds. Then for any $\beta \in (0,1)$, $n \in [T]$, and comparator $\rvu_n \in \mathcal{D}$,
\[
\mathbb{E}\sum_{t=1}^{n}\beta^{n-t}\langle\boldsymbol{\xi}_{t}, \rvu_{n}\rangle \le 2(1 - \beta)^{-\frac{1}{p}}\sigma D.
\]
\end{restatable}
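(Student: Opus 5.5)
The plan is to avoid using any measurability of the comparator: $\rvu_n$ may depend on the whole trajectory (for instance $\rvu_n$ aligned with the accumulated noise), so the expectation need not vanish. I would reduce the inner product to a norm of a weighted martingale sum and then control that norm with the von Bahr--Esseen inequality. Write $\mathbf{S}_n := \sum_{t=1}^{n}\beta^{n-t}\boldsymbol{\xi}_t$. By linearity and Cauchy--Schwarz, together with $\|\rvu_n\|\le D$ from Assumption~\ref{ass_bounded_domain},
\[
\mathbb{E}\sum_{t=1}^{n}\beta^{n-t}\langle\boldsymbol{\xi}_t,\rvu_n\rangle=\mathbb{E}\langle \mathbf{S}_n,\rvu_n\rangle\le D\,\mathbb{E}\|\mathbf{S}_n\|\le D\left(\mathbb{E}\|\mathbf{S}_n\|^p\right)^{1/p},
\]
where the last step is Jensen's inequality, valid since $p\ge 1$.

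The next step is to bound $\mathbb{E}\|\mathbf{S}_n\|^p$. Set $X_t:=\beta^{n-t}\boldsymbol{\xi}_t$ for $t\le n$. Because $n$ is fixed and the weights are deterministic, $(X_t)$ is a martingale difference sequence with respect to $(\mathcal{F}_t)$: $\mathbb{E}[X_t\mid\mathcal{F}_{t-1}]=0$ by Assumption~\ref{ass_vector_assumption}. I would apply the von Bahr--Esseen inequality for martingale differences in a Hilbert space, which for $p\in(1,2]$ gives
\[
\mathbb{E}\Big\|\sum_{t=1}^n X_t\Big\|^p\le C_p\sum_{t=1}^n\mathbb{E}\|X_t\|^p,\qquad C_p\le 2^{2-p}\le 2 .
\]
The tower property and the conditional moment bound give $\mathbb{E}\|X_t\|^p\le\beta^{p(n-t)}\sigma^p$. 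The resulting geometric sum satisfies $\sum_{t=1}^n\beta^{p(n-t)}\le (1-\beta^p)^{-1}\le(1-\beta)^{-1}$, because $\beta^p\le\beta$ for $\beta\in(0,1)$ and $p\ge1$.

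Combining these bounds gives $\mathbb{E}\|\mathbf{S}_n\|\le C_p^{1/p}(1-\beta)^{-1/p}\sigma$. Since $C_p^{1/p}\le 2^{(2-p)/p}\le 2$ for $p\in(1,2]$, we obtain the claimed bound $2(1-\beta)^{-1/p}\sigma D$.

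The main obstacle is justifying the vector-valued von Bahr--Esseen inequality with a dimension-free constant. The classical statement is scalar. My first approach would be the standard induction on $n$. It relies on the pointwise inequality
\[
\|a+b\|^p\le\|a\|^p+p\|a\|^{p-2}\langle a,b\rangle+2^{2-p}\|b\|^p,
\]
valid in any Hilbert space for $p\in[1,2]$. This inequality follows from $(p-1)$-Hölder continuity of the gradient $p\|x\|^{p-2}x$ of $\|x\|^p$. Taking conditional expectations kills the cross term by the martingale property, and the induction closes. If only a weaker constant is available (e.g.\ via the Burkholder inequality), the factor $2$ would have to be replaced by that constant; any absolute constant suffices for the later use.
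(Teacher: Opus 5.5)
Your proposal is correct and follows the paper's proof step for step: Cauchy--Schwarz with $\|\rvu_n\|\le D$ (so a trajectory-dependent comparator is allowed), Jensen to pass to the $p$-th moment, a martingale von Bahr--Esseen bound with constant at most $2$, and $\sum_{t}\beta^{p(n-t)}\le(1-\beta^p)^{-1}\le(1-\beta)^{-1}$. The one addition is that you justify the dimension-free Hilbert-space version with constant $2^{2-p}$ through the pointwise smoothness inequality and induction, whereas the paper simply cites von Bahr--Esseen and Pinelis for it; that argument is sound, and the $2^{2-p}$-H\"older bound on $x\mapsto\|x\|^{p-2}x$ it relies on does hold.
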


\begin{restatable}[]{proposition}{LemmaHeavyTailConversion}
\label{prop_heavy_tail_conversion}
Suppose $F$ satisfies Assumption~\ref{ass_func}.
For any comparator sequence $\{\rvu_t\} \subseteq \mathcal{D}$, Algorithm~\ref{alg_dtnc}
\begin{align*}
&\mathbb{E}_{\tau}\left\|\mathbb{E}_{\rvy_{\tau}}\nabla F(\rvy_{\tau})\right\| \le 2(1-\beta + \frac{\beta}{T})(1 - \beta)^{-\frac{1}{p}}\sigma + \frac{\Delta}{DT}\nonumber\\
& +\frac{1}{DT}\left((1-\beta)\sum_{t=1}^{T}\mathbb{E}\mathrm{Regret}^{[\beta]}_t(\rvu_t) + \beta\mathbb{E}\mathrm{Regret}^{[\beta]}_T(\rvu_T)\right)
\end{align*}
where $\tau$ is a random index in $[T]$ with distribution $\mathbb{P}(\tau = t) = \begin{cases}
\frac{1-\beta^{t}}{T} & \text{if}\quad t = 1,\cdots,T-1,\\
\frac{1 - \beta^{T}}{(1-\beta)T} & \text{if} \quad t = T
\end{cases}$.
Moreover, $\rvy_t$ is sampled from $\{\rvx_s\}_{s=1}^t$ according to $\mathbb{P}(\rvy_t=\rvx_s) \;=\; \frac{(1-\beta)\beta^{t-s}}{1-\beta^t}, s=1,\ldots,t$.
\end{restatable}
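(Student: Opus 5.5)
We follow the discounted-to-nonconvex conversion of \citet{zhang2024random, ahn2024adam} and change only the noise step, where Lemma~\ref{lemma_discounted_sum_iid} replaces the variance argument.

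\textbf{Step 1: per-step identity.} Because $\kappa_t\sim\mathrm{Exp}(1)$ and $F$ is well-behaved, the standard exponential-step identity gives
\[
\mathbb{E}[F(\rvx_t)-F(\rvx_{t-1})]=\mathbb{E}\langle\nabla F(\rvx_t),\rvz_t\rangle=\mathbb{E}\langle\rvg_t,\rvz_t\rangle.
\]
The second equality uses $\mathbb{E}[\rvg_t\mid\rvx_t,\rvz_t]=\nabla F(\rvx_t)$, where $\rvg_t$ plays the role of $\rvv_t$ in Assumption~\ref{ass_vector_assumption}. Summing over $t$ yields
\[
-\Delta\le\sum_{t=1}^T\mathbb{E}\langle\rvg_t,\rvz_t\rangle .
\]

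\textbf{Step 2: discounted regrets.} We rewrite the undiscounted sum through discounted regrets, using the algebraic identity of \citet{zhang2024random}:
\[
\sum_{t=1}^T\langle\rvg_t,\rvz_t\rangle=(1-\beta)\sum_{t=1}^T\sum_{s\le t}\beta^{t-s}\langle\rvg_s,\rvz_s\rangle+\beta\sum_{s\le T}\beta^{T-s}\langle\rvg_s,\rvz_s\rangle .
\]
This identity can be checked by telescoping. We then add and subtract the comparator terms $\sum_{s\le t}\beta^{t-s}\langle\rvg_s,\rvu_t\rangle$. This produces $(1-\beta)\sum_t\mathrm{Regret}_t^{[\beta]}(\rvu_t)+\beta\,\mathrm{Regret}_T^{[\beta]}(\rvu_T)$ plus comparator-inner-product terms.

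\textbf{Step 3: split the gradient and choose the comparator.} In each comparator term we write $\rvg_s=\nabla F(\rvx_s)+\boldsymbol{\xi}_s$.

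For the gradient part, we choose $\rvu_t=-D\,\bar{\nabla}_t/\|\bar{\nabla}_t\|$ with $\bar{\nabla}_t:=\sum_{s\le t}\beta^{t-s}\nabla F(\rvx_s)$. Since $\sum_{s\le t}\beta^{t-s}=(1-\beta^t)/(1-\beta)$, this gives
\[
\sum_{s\le t}\beta^{t-s}\langle\nabla F(\rvx_s),\rvu_t\rangle=-D\,\frac{1-\beta^t}{1-\beta}\,\|\mathbb{E}_{\rvy_t}\nabla F(\rvy_t)\| .
\]
The weights $(1-\beta)$ on $t<T$ plus the extra $\beta$ on $t=T$ combine to $T\,\mathbb{P}(\tau=t)$ with the stated distribution of $\tau$. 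The gradient part therefore contributes exactly $-DT\,\mathbb{E}_\tau\|\mathbb{E}_{\rvy_\tau}\nabla F(\rvy_\tau)\|$. The statement is phrased for arbitrary comparators $\{\rvu_t\}$; the real content is this choice. For any other comparator the inequality is read with the favorable sign, as in \citet{ahn2024adam}.

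For the noise part, Lemma~\ref{lemma_discounted_sum_iid} with $n=t$ bounds each term by $2(1-\beta)^{-1/p}\sigma D$. Summing with weights $(1-\beta)T+\beta$ and dividing by $DT$ gives the stated $2(1-\beta+\beta/T)(1-\beta)^{-1/p}\sigma$.

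\textbf{Step 4: rearrange.} Combining the three steps and dividing by $DT$ produces the claimed bound.

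\textbf{Main obstacle.} The comparator $\rvu_t$ depends on the whole trajectory up to $t$, so it is not $\mathcal{F}_{s-1}$-measurable. Hence $\langle\boldsymbol{\xi}_s,\rvu_t\rangle$ is not a martingale-difference sum, and its expectation is not zero. Lemma~\ref{lemma_discounted_sum_iid} must handle exactly this case. To use it, I would bound
\[
\Bigl|\sum_{s}\beta^{t-s}\langle\boldsymbol{\xi}_s,\rvu_t\rangle\Bigr|\le D\,\Bigl\|\sum_s\beta^{t-s}\boldsymbol{\xi}_s\Bigr\|.
\]
I would then apply Jensen and the von Bahr--Esseen inequality to the vector martingale, giving
\[
\Bigl(\sum_s\beta^{p(t-s)}\sigma^p\Bigr)^{1/p}\le\sigma(1-\beta^p)^{-1/p}\le\sigma(1-\beta)^{-1/p},
\]
where the constant $2$ comes from the von Bahr--Esseen constant. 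I expect the lemma's statement already covers this. Otherwise, checking the measurability and the constants in this step is the delicate part.
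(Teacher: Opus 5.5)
Your argument is essentially the paper's proof. The paper packages your Steps 1 and 3 into a per-$n$ lemma bounding $\mathbb{E}\sum_{t\le n}\beta^{n-t}(F(\rvx_t)-F(\rvx_{t-1}))$. That lemma uses the same comparator $\rvu_n=-D\bar\nabla_n/\|\bar\nabla_n\|$ and the same Cauchy--Schwarz plus von Bahr--Esseen bound on the noise, so Lemma~\ref{lemma_discounted_sum_iid} does cover trajectory-dependent comparators. The paper then applies your $(1-\beta)$/$\beta$ telescoping identity to the function differences rather than to $\langle\rvg_s,\rvz_s\rangle$, which is equivalent by linearity.

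The one thing to fix is your remark that for other comparators ``the inequality is read with the favorable sign.'' That is not true. For a general $\rvu_t\in\mathcal{D}$, the gradient part $\sum_{s\le t}\beta^{t-s}\langle\nabla F(\rvx_s),\rvu_t\rangle$ is only bounded \emph{below} by $-D\frac{1-\beta^t}{1-\beta}\|\mathbb{E}_{\rvy_t}\nabla F(\rvy_t)\|$, and the bound can genuinely fail. Take $\rvz_t\equiv\mathbf{0}$, $\rvu_t\equiv\mathbf{0}$ and $\sigma=0$: every regret vanishes, $\rvx_t\equiv\rvx_0$, and the claim becomes $\|\nabla F(\rvx_0)\|\le\Delta/(DT)$, which is false for large $T$.

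What you and the paper actually establish is the bound for the specific choice $\rvu_t=\rvu_t^\star$, or equivalently with each regret replaced by its supremum over $\mathcal{D}$. You should state the result that way and drop the ``favorable sign'' sentence. This costs nothing downstream, because the regret bounds are derived pathwise for every $\|\rvu\|\le D$, including trajectory-dependent comparators.
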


When $p=2$, the above result recovers the existing discounted-to-nonconvex conversion under bounded variance (Lemma 7 of~\citet{ahn2024adam}).

\paragraph{Bounding the variance term}

We now bound the second term in Definition~\ref{def_stat_point}.
This result directly adapts existing analyses~\citep{ahn2024general, zhang2024random} and highlights the role of norm control in achieving optimal guarantees.

\begin{lemma}[Lemma 8 of~\cite{ahn2024general}, Lemma 3.2 of~\cite{zhang2024random}]
\label{lemma_bounding_second_term}
Using the notations of Proposition~\ref{prop_heavy_tail_conversion}. Let $\mathbb{E}[\rvy_{\tau}] = \widetilde{\rvx}_{\tau}$,  then $\forall t$, it holds that
\begin{align*}
\rho\mathbb{E}_{\tau}\mathbb{E}_{\rvy_{\tau}}\Vert\rvy_{\tau} - \widetilde{\rvx}_{\tau}\Vert^{2} &\le  \frac{2\rho\beta\sup_{t\in [T]}\mathbb{E}\left[\|\rvz_{t}\|^{2}\right]}{(1-\beta)^{2}}
\end{align*}
\end{lemma}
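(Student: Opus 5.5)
We bound the conditional variance of $\rvy_t$ for each fixed $t$, then average over $\tau$. Fix $t\in[T]$. By construction $\rvy_t=\rvx_s$ with probability $w_s:=\frac{(1-\beta)\beta^{t-s}}{1-\beta^t}$, and $\widetilde{\rvx}_t=\sum_{s=1}^t w_s\rvx_s$. Since the mean minimizes expected squared distance,
\[
\mathbb{E}_{\rvy_t}\|\rvy_t-\widetilde{\rvx}_t\|^2\le \sum_{s=1}^t w_s\|\rvx_s-\rvx_t\|^2 .
\]
This replaces the mean by the last iterate $\rvx_t$. From Algorithm~\ref{alg_dtnc},
\[
\rvx_t-\rvx_s=\sum_{k=s+1}^{t}\kappa_k\rvz_k .
\]
Apply Cauchy--Schwarz to this sum of $t-s$ terms. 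Then take expectations, using that $\kappa_k\sim\mathrm{Exp}(1)$ is independent of $\rvz_k$ ($\rvz_k$ is chosen before $\kappa_k$ is drawn) and $\mathbb{E}\kappa_k^2=2$. This gives
\[
\mathbb{E}\|\rvx_t-\rvx_s\|^2\le (t-s)\sum_{k=s+1}^t \mathbb{E}\kappa_k^2\,\mathbb{E}\|\rvz_k\|^2\le 2(t-s)^2 M,\qquad M:=\sup_{k\in[T]}\mathbb{E}\|\rvz_k\|^2 .
\]

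Next we sum the weights. We need
\[
\sum_{s=1}^t w_s\, 2(t-s)^2 M\le \frac{2M(1-\beta)}{1-\beta^t}\sum_{j\ge0}j^2\beta^j .
\]
Using $\sum_j j^2\beta^j=\beta(1+\beta)/(1-\beta)^3$, the right-hand side is of order $M\beta/(1-\beta)^2$. This has the right dependence, but the constant is off by a factor involving $(1+\beta)/(1-\beta^t)$. We flag this step as the main obstacle: matching the stated constant $2$ rather than, say, $4$.

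To sharpen the constant we will avoid the crude Cauchy--Schwarz step and expand the square directly. Write
\[
\|\rvx_t-\rvx_s\|^2=\sum_{k,l}\kappa_k\kappa_l\langle\rvz_k,\rvz_l\rangle .
\]
The diagonal terms carry $\mathbb{E}\kappa^2=2$ and the off-diagonal terms carry $\mathbb{E}\kappa_k\kappa_l=1$, with $|\langle\rvz_k,\rvz_l\rangle|\le(\|\rvz_k\|^2+\|\rvz_l\|^2)/2$. This gives $\mathbb{E}\|\rvx_t-\rvx_s\|^2\le((t-s)^2+(t-s))M$.

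There is a further option, which is the route taken in \citet{zhang2024random}. Because the weights are geometric, the variance can be written via the recursion
\[
\widetilde{\rvx}_t=\beta\frac{1-\beta^{t-1}}{1-\beta^t}\widetilde{\rvx}_{t-1}+\frac{1-\beta}{1-\beta^t}\rvx_t .
\]
This yields a one-step recursion for the variance $V_t$ of the form
\[
V_t\le \beta V_{t-1}+\frac{\beta}{1-\beta}\,\mathbb{E}\|\kappa_t\rvz_t\|^2+\cdots .
\]
Unrolling it gives $V_t\le 2\beta M/(1-\beta)^2$ uniformly in $t$. We would try this recursion first, and fall back on the direct sum, with a slightly worse constant, if the bookkeeping fails.

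Finally, the bound is uniform in $t$. Since $\mathbb{P}(\tau=\cdot)$ is a probability distribution on $[T]$, averaging over $\tau$ preserves it. Multiplying by $\rho$ gives the stated inequality.
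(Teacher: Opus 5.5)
The paper gives no proof of its own here; it only cites the two lemmas. So the question is whether your argument closes, and as written it does not reach the constant $2$.

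\textbf{Routes 1 and 2.} Write $J:=t-s$ with $s\sim(w_s)$. Your one fully justified route (last iterate as reference, plus Cauchy--Schwarz) gives $2M\,\mathbb{E}[J^2]\le\frac{2\beta(1+\beta)M}{(1-\beta)^2}$. The factor $1/(1-\beta^t)$ you worry about is harmless, since truncating a geometric only lowers $\mathbb{E}[J^2]$. The factor $1+\beta$ is a real loss: enough for the paper's rates up to constants, but not the stated inequality. Your sharpened expansion rests on $\mathbb{E}[\kappa_k\kappa_l]=1$ off the diagonal, and that factorization is not available. For $k<l$, only $\kappa_l$ is independent of $(\kappa_k,\rvz_k,\rvz_l)$. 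But $\rvz_l$ is computed from $\rvg_k=\mathrm{StoGrad}(\rvx_{k-1}+\kappa_k\rvz_k,r_k)$, so it depends on $\kappa_k$. You are left with $\mathbb{E}[\kappa_k\langle\rvz_k,\rvz_l\rangle]$, and your AM--GM split produces $\mathbb{E}[\kappa_k\|\rvz_l\|^2]$, which $\sup_t\mathbb{E}\|\rvz_t\|^2$ does not control. If instead $\|\rvz_t\|\le R$ almost surely (which the paper does have, via Proposition~\ref{prop_staticincremental_bound} or clipping), the step is valid with $M=R^2$. Then $\sum_s w_s\bigl((t-s)^2+(t-s)\bigr)=\mathbb{E}[J^2+J]\le\frac{2\beta}{(1-\beta)^2}$ gives exactly the stated constant. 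You never carried out this sum and wrongly expected it to be worse.

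\textbf{Route 3.} The recursion is only a sketch. The exact identity is $V_t=(1-a_t)V_{t-1}+a_t(1-a_t)\|\rvx_{t-1}-\widetilde{\rvx}_{t-1}+\kappa_t\rvz_t\|^2$, with $a_t=\frac{1-\beta}{1-\beta^t}$. The term $\|\rvx_{t-1}-\widetilde{\rvx}_{t-1}\|^2$ hidden in your dots is itself of order $\beta^2M/(1-\beta)^2$. Your displayed one-step inequality does not follow until you control that term by a second, coupled recursion.

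\textbf{The fix.} The missing idea is to measure spread around the mean rather than the last iterate. Write $\mathbb{E}_{\rvy_t}\|\rvy_t-\widetilde{\rvx}_t\|^2=\frac12\sum_{s,s'}w_sw_{s'}\|\rvx_s-\rvx_{s'}\|^2$. Then apply Minkowski's inequality in $L^2$: $\bigl(\mathbb{E}\|\rvx_s-\rvx_{s'}\|^2\bigr)^{1/2}\le\sum_{k}\bigl(\mathbb{E}[\kappa_k^2]\,\mathbb{E}\|\rvz_k\|^2\bigr)^{1/2}\le|s-s'|\sqrt{2M}$. This uses only the independence of $\kappa_k$ and $\rvz_k$, so it survives adaptivity and unbounded $\rvz_t$. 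The expected variance is then at most $2M\,\mathrm{Var}(J)$. To bound $\mathrm{Var}(J)$, write an untruncated geometric variable as $J+tN$, with $N$ independent and geometric with ratio $\beta^t$. This gives $\mathrm{Var}(J)=\frac{\beta}{(1-\beta)^2}-\frac{t^2\beta^t}{(1-\beta^t)^2}\le\frac{\beta}{(1-\beta)^2}$. The result is exactly $\frac{2\beta M}{(1-\beta)^2}$, uniformly in $t$. Your final averaging over $\tau$ (whose law does not depend on the trajectory) and multiplication by $\rho$ are then fine.
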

Thus, controlling $\|\rvz_t\|$, for example via $D$-based clipping, directly controls the variance term in the stationarity measure

\subsection{Optimization guarantee after conversion}

Combining Proposition~\ref{prop_heavy_tail_conversion} and Lemma~\ref{lemma_bounding_second_term}, we obtain the following $(\rho,\epsilon)$-stationarity guarantee under heavy-tailed noise:

\begin{align}
\label{eq_stationary_point_guarantee}
&\mathbb{E}_{\tau}\Vert\nabla F(\widetilde{\rvx}_{\tau})\Vert^{[\rho]} \le 
2(1-\beta + \frac{\beta}{T})(1 - \beta)^{-\frac{1}{p}}\sigma \nonumber\\
& + \frac{1}{DT}\left((1-\beta)\sum_{t=1}^{T}\mathbb{E}\mathrm{Regret}^{[\beta]}_t(\rvu_t) + \beta\mathbb{E}\mathrm{Regret}^{[\beta]}_T(\rvu_T)\right) \nonumber\\
&\qquad + \frac{\Delta}{DT} + \frac{2\rho\beta\sup_{t\in [T]}\mathbb{E}\left[\|\rvz_{t}\|^{2}\right]}{(1-\beta)^{2}}
\end{align}

From an algorithmic perspective, an effective method should:
(i) achieve small discounted regret
and (ii) control the magnitude of $\rvz_t$.

With this heavy-tailed discounted-to-nonconvex conversion framework in place, our next focus is to establish a regret analysis for Adam under heavy-tailed noise.
This constitutes the main technical contribution of the paper.
\begin{algorithm*}[!ht]
   \caption{Online learner: Adam (vector-form)}
   \label{alg_adam_learner}
\begin{algorithmic}[1]
   \STATE {\bfseries Input:} 
   $\beta \triangleq \beta_{1},\beta_{2}\in(0,1)$,
   step-size sequence $\{\alpha_t\}$,
   learning-rate sequence $\{\eta_t\}_{t\in[T]}$,
   and $\epsilon>0$.
   \FOR{$t=1$ {\bfseries to} $T$}
   \STATE
   $
   \rvz_{t}
   =
   \argmin_{\rvz\in\mathbb{R}^{d}}
   \frac{\|\rvz\|^{2}}{2\alpha_{t}}
   +\sum_{s=1}^{t-1}
   \left\langle
   \beta_{1}^{-s}\rvv_{s},\,\rvz
   \right\rangle = -\alpha_{t}\sum_{s=1}^{t-1}\beta_{1}^{-s}\rvv_{s}
   $$^{[a, b]}$
   \STATE Observe loss $\ell_t(\cdot)=\langle \rvv_t,\cdot\rangle$
   \ENDFOR
\end{algorithmic}
\hrule
\footnotesize
$^{[a]}$ Selecting $\alpha_{t}\triangleq  \eta_t  \frac{1 - \beta_{1}}{1 - \beta_{1}^{t-1}}\frac{\beta_{1}^{t-1}}{\sqrt{
\frac{1-\beta_{2}}{1-\beta_{2}^{t-1}}
\sum_{s=1}^{t-1}\beta_{2}^{t-1-s}\|\rvv_{s}\|^{2}}
+\epsilon}$, the incremental $\rvz_{t}$ can be formulated as
\[
\boxed{
\rvz_{t}  = -\eta_{t}\frac{\frac{1-\beta_{1}}{1-\beta_{1}^{t-1}}
\sum_{s=1}^{t-1}\beta_{1}^{t-1-s}\rvv_{s}}
{\sqrt{
\frac{1-\beta_{2}}{1-\beta_{2}^{t-1}}
\sum_{s=1}^{t-1}\beta_{2}^{t-1-s}\|\rvv_{s}\|^{2}}
+\epsilon}
\xRightarrow[\text{More familiar Adam update}]{\text{Rolling as EMA}}
\begin{cases}
\rvm_{t}=\beta_{1}\rvm_{t-1}+(1-\beta_{1})\rvv_{t-1} \\
v_{t}= \beta_{2}v_{t-1}+(1-\beta_{2})\|\rvv_{t-1}\|^{2} \\
\rvz_{t}=-\eta_{t}\frac{\rvm_{t}/(1-\beta_{1}^{t-1})}{\sqrt{v_{t}/(1-\beta_{2}^{t-1})}+\epsilon}.\\
\end{cases}
}
\]

$^{[b]}$ For the notation simplification purpose, we further define
\[
\boxed{
\rvz_{t} \triangleq -\frac{\eta_{t}\,\rva_{t}}{\sqrt{b_{t}}+\epsilon} \quad \text{where}\quad \rva_{t}
=
\frac{1-\beta_{1}}{1-\beta_{1}^{t-1}}
\sum_{s=1}^{t-1}\beta_{1}^{t-1-s}\rvv_{s}, \quad b_{t}
=
\frac{1-\beta_{2}}{1-\beta_{2}^{t-1}}
\sum_{s=1}^{t-1}\beta_{2}^{t-1-s}\|\rvv_{s}\|^{2}.
}
\]
\end{algorithm*}

\section{Algorithm: Adam}
\label{sec_alg}

In this section, we present Adam from an online-learning perspective.
Algorithm~\ref{alg_adam_learner} gives an equivalent formulation of Adam as a Follow-the-Regularized-Leader (FTRL) procedure with exponentially weighted costs and a quadratic regularizer.
Throughout the paper, we analyze this \emph{vector} version of Adam; an extension to coordinate-wise variants is discussed later.

With the choice of $\alpha_t$ in Algorithm~\ref{alg_adam_learner}$^{[a]}$, the output admits the algebraically equivalent form to the standard vector-form Adam update.
For later use, we adopt the shorthand $\rvz_t = -\eta_t \rva_t/(\sqrt{b_t}+\epsilon)$ defined in Algorithm~\ref{alg_adam_learner}$^{[b]}$.

\noindent
\textbf{Notation.}
We summarize the key quantities used throughout the analysis:
\begin{itemize}
    \item $\rvz_{t}$: Incremental / output of online learner Adam
    \item $\{\alpha_t\}$: step-size sequence
    \item $\{\eta_t\}$: learning-rate sequence
\end{itemize}

\noindent
\textbf{Organization.}
This section states three preparatory components used in the discounted regret analysis:
heavy-tail moment controls for Adam's adaptive denominator and the discounted cost norms that appear in the regret bound, a $\beta_2$-selection rule that yields a deterministic bound on $\|\rvz_t\|$, and monotonicity of effective step size $\alpha_{t}$.

\subsection{Heavy-tail control of discounted norms}

We next record the heavy-tail moment controls needed in the regret analysis.
Unlike bounded-variance analyses, the quantity $\sqrt{b_{t+1}}$ and the discounted norm aggregate $\sqrt{\sum_{s=1}^t\beta^{t-s}\|\rvv_s\|^2}$ cannot be bounded by second moments.
The key observation is that their expectations can still be controlled under a bounded $p$-th moment assumption by using the concavity of $x^{p/2}$ for $p\le2$.
The resulting bounds quantify the heavy-tail cost through the factor $(1-\beta)^{-(1/p-1/2)}$, which is the price of replacing a second-moment assumption by a $p$-moment assumption.


\begin{restatable}[]{proposition}{PropHeavyTailControl}
\label{proposition_heavy_tail_control}
Under Assumption~\ref{ass_vector_assumption}, for every $t\in[T]$,
\[
\mathbb{E}\sqrt{b_{t+1}}
\le
G
+
\sigma
\left(\frac{2}{p}\right)^{1/p}
(1-\beta_2)^{-\left(\frac1p-\frac12\right)} .
\]
\end{restatable}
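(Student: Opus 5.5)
Write $w_s := \frac{(1-\beta_2)\beta_2^{t-s}}{1-\beta_2^{t}}$ for $s=1,\dots,t$. These weights are nonnegative and sum to one, and $b_{t+1}=\sum_{s=1}^t w_s\|\rvv_s\|^2$. The plan is to decompose $\rvv_s=\boldsymbol{\mu}_s+\boldsymbol{\xi}_s$ and apply Minkowski's inequality in the weighted $\ell^2$ sense:
\[
\sqrt{b_{t+1}} \le \Big(\sum_{s} w_s\|\boldsymbol{\mu}_s\|^2\Big)^{1/2} + \Big(\sum_{s} w_s\|\boldsymbol{\xi}_s\|^2\Big)^{1/2}.
\]
Since $\|\boldsymbol{\mu}_s\|\le G$ almost surely and the weights sum to one, the first term is at most $G$. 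Everything then reduces to bounding $\mathbb{E}\big(\sum_s w_s\|\boldsymbol{\xi}_s\|^2\big)^{1/2}$ using only $p$-th moments.

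For the noise term, the second moment may not exist, so we avoid squaring it. Instead we use subadditivity of $x\mapsto x^{p/2}$ for $p\le 2$, i.e.\ $(\sum_s a_s)^{p/2}\le \sum_s a_s^{p/2}$ for $a_s\ge 0$. This gives
\[
\Big(\sum_s w_s\|\boldsymbol{\xi}_s\|^2\Big)^{1/2}\le \Big(\sum_s w_s^{p/2}\|\boldsymbol{\xi}_s\|^p\Big)^{1/p}.
\]
Jensen's inequality (concavity of $x^{1/p}$) lets us pull the expectation inside the $1/p$ power, and then the tower property with $\mathbb{E}[\|\boldsymbol{\xi}_s\|^p\mid\mathcal F_{s-1}]\le\sigma^p$ yields
\[
\mathbb{E}\Big(\sum_s w_s\|\boldsymbol{\xi}_s\|^2\Big)^{1/2}\le \sigma\Big(\sum_s w_s^{p/2}\Big)^{1/p}.
\]
Note that no martingale structure is needed at this step; only the conditional moment bound is used.

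The main obstacle is bounding $\sum_s w_s^{p/2}$ sharply enough to recover the constant $(2/p)^{1/p}$. First,
\[
\sum_s w_s^{p/2}=\Big(\frac{1-\beta_2}{1-\beta_2^t}\Big)^{p/2}\sum_{k=0}^{t-1}\beta_2^{kp/2}=\Big(\frac{1-\beta_2}{1-\beta_2^t}\Big)^{p/2}\frac{1-\beta_2^{tp/2}}{1-\beta_2^{p/2}}.
\]
Two elementary facts are needed. The first is $1-\beta_2^{p/2}\ge \frac p2(1-\beta_2)$, which follows from concavity of $x\mapsto x^{p/2}$ on $[0,1]$: the chord through $(\beta_2,\beta_2^{p/2})$ and $(1,1)$ lies below the tangent at $1$. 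Since $(p/2)^{-1/p}=(2/p)^{1/p}$, this step is exactly what produces the constant. The second is
\[
\frac{1-\beta_2^{tp/2}}{(1-\beta_2^t)^{p/2}}\le 1,
\]
which holds because $1-y^{p/2}\le(1-y)^{p/2}$ for $y=\beta_2^t\in[0,1]$, again by subadditivity of $x^{p/2}$ applied to $1 = y + (1-y)$. Combining the two facts gives $\sum_s w_s^{p/2}\le \frac{2}{p}(1-\beta_2)^{p/2-1}$.

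Raising this to the power $1/p$ gives $(2/p)^{1/p}(1-\beta_2)^{1/2-1/p}=(2/p)^{1/p}(1-\beta_2)^{-(1/p-1/2)}$. Adding the $G$ bound from the first paragraph completes the claim.
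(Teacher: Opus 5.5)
Your proposal is correct and follows the paper's proof essentially step for step. It uses the same Minkowski split into a $G$ term and a noise term, and the same combination of subadditivity of $x\mapsto x^{p/2}$ with a Jensen/Lyapunov step; the only difference is that the paper applies Lyapunov's inequality before subadditivity while you apply subadditivity pointwise first, which changes nothing. The weight-sum bound rests on the same two elementary facts, $1-y^{p/2}\le(1-y)^{p/2}$ and $1-\beta_2^{p/2}\ge\frac{p}{2}(1-\beta_2)$.
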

The proof of Proposition~\ref{proposition_heavy_tail_control} is deferred to Appendix~\ref{append_sec_alg}.

\begin{corollary}
\label{corollary_cost_verctor_tail_norm_bound}
Assume Assumption~\ref{ass_vector_assumption} holds. Then for any $\beta\in(0,1)$ and $t\in[T]$,
\begin{align*}
&\mathbb{E}\sqrt{\sum_{s=1}^{t}\beta^{t-s}\|\rvv_{s}\|^{2}} \\
&\qquad \le \sqrt{\frac{1 - \beta^{t}}{1 - \beta}}\left(G + \sigma\left(\frac{2}{p}\right)^{1/p}(1 - \beta)^{-\left(\frac1p-\frac12\right)}\right)
\end{align*}
\end{corollary}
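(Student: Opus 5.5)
The corollary follows from Proposition~\ref{proposition_heavy_tail_control} by renormalization. The quantity $b_{t+1}$ is a normalized discounted average of squared cost norms:
\[
b_{t+1} = \frac{1-\beta_2}{1-\beta_2^{t}}\sum_{s=1}^{t}\beta_2^{t-s}\|\rvv_s\|^2 .
\]
Fix any $\beta\in(0,1)$ and apply the proposition with $\beta_2$ replaced by $\beta$. This is legitimate because the proposition's argument uses only Assumption~\ref{ass_vector_assumption} and the structure of the weights; it never uses a relation between $\beta_2$ and $\beta_1$. Writing
\[
S_t := \sum_{s=1}^{t}\beta^{t-s}\|\rvv_s\|^2 ,
\]
we get the identity
\[
\sqrt{S_t} = \sqrt{\frac{1-\beta^t}{1-\beta}}\,\sqrt{\frac{1-\beta}{1-\beta^t}\,S_t}.
\]
The second factor is exactly $\sqrt{b_{t+1}}$ with parameter $\beta$. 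The prefactor is deterministic, so we take expectations and invoke the proposition to obtain
\[
\mathbb{E}\sqrt{S_t}\le \sqrt{\frac{1-\beta^t}{1-\beta}}\Bigl(G+\sigma\bigl(\tfrac{2}{p}\bigr)^{1/p}(1-\beta)^{-(\frac1p-\frac12)}\Bigr),
\]
which is the claim.

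To keep the argument self-contained in case one prefers not to reuse the proposition with a relabeled parameter, I would also sketch the direct route. Split $\|\rvv_s\| \le \|\boldsymbol{\mu}_s\| + \|\boldsymbol{\xi}_s\|$ and apply Minkowski's inequality to the weighted $\ell_2$ norm:
\[
\sqrt{S_t}\le \sqrt{\sum_{s}\beta^{t-s}\|\boldsymbol{\mu}_s\|^2}+\sqrt{\sum_{s}\beta^{t-s}\|\boldsymbol{\xi}_s\|^2}.
\]
The first term is at most $G\sqrt{(1-\beta^t)/(1-\beta)}$.

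For the noise term, the main step is to use $\|x\|_2\le\|x\|_p$ for $p\le 2$:
\[
\Bigl(\sum_s \beta^{t-s}\|\boldsymbol{\xi}_s\|^2\Bigr)^{1/2}\le\Bigl(\sum_s \beta^{(t-s)p/2}\|\boldsymbol{\xi}_s\|^p\Bigr)^{1/p}.
\]
Jensen's inequality (concavity of $x^{1/p}$) then moves the expectation inside, and the tower property with $\mathbb{E}[\|\boldsymbol{\xi}_s\|^p\mid\mathcal{F}_{s-1}]\le\sigma^p$ gives
\[
\mathbb{E}\sqrt{\sum_s \beta^{t-s}\|\boldsymbol{\xi}_s\|^2}\le \sigma\,(1-\beta^{p/2})^{-1/p}.
\]
Finally, $1-\beta^{p/2}\ge \tfrac p2(1-\beta)$ by concavity of $x\mapsto x^{p/2}$, which yields the bound $\sigma(2/p)^{1/p}(1-\beta)^{-1/p}$.

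The only delicate point is reconciling this with the stated form, which carries the prefactor $\sqrt{(1-\beta^t)/(1-\beta)}$. We need $(1-\beta)^{-1/p}\le\sqrt{(1-\beta^t)/(1-\beta)}\,(1-\beta)^{-(1/p-1/2)}$, that is, $1\le\sqrt{1-\beta^t}$, which fails for finite $t$. Because of this slack, I would rely primarily on the rescaling argument through Proposition~\ref{proposition_heavy_tail_control}, whose normalized form already absorbs the $(1-\beta^t)$ factor. The direct route would need a finite-sum version of the bound $1-\beta^{p/2}\ge\frac p2(1-\beta)$, keeping the $(1-\beta^{tp/2})$ terms, to match the statement exactly.
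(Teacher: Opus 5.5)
Your primary argument is correct and is how the corollary follows in the paper. Proposition~\ref{proposition_heavy_tail_control} never uses a relation between $\beta_2$ and $\beta_1$, so it applies with $\beta$ in place of $\beta_2$, and the deterministic rescaling $\sqrt{S_t}=\sqrt{(1-\beta^t)/(1-\beta)}\,\sqrt{b_{t+1}}$ then gives the claim. The slack you flag in your direct route comes only from discarding the factor $1-\beta^{tp/2}$; if you keep it and use $1-\beta^{tp/2}\le(1-\beta^{t})^{p/2}$ (subadditivity of $x\mapsto x^{p/2}$, already used in the proposition's proof), you recover the stated bound exactly.
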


\subsection{Deterministic output bounds and effective step size control}
\label{subsec_lr_scheduler}

In this subsection, we first state a simple parameter condition that yields a deterministic bound on $\|\rvz_t\|$.

\begin{remark}[$\beta_{2}$ and $C(\beta)$ selection strategy]
\label{remark_beta2_Cbeta}
Fix $\beta_1 \in (0,1)$. Choose any $\beta_2 \in [\beta_{1},1)$ and a constant $C(\beta)\ge 1$ such that
$
1 - \beta_{1} = C(\beta)(1 - \beta_{2}).
$
For an integer-valued choice, it suffices to take
$
C(\beta)
\ge
\left\lceil \frac{1 - \beta_{1}}{1 - \beta_{2}} \right\rceil.
$
\end{remark}

The following proposition (Proof detailed in Appendix~\ref{append_sec_alg}) shows that this mild relation between $\beta_1$ and $\beta_2$ implies a deterministic norm bound for Adam's output, which will be crucial for our in-expectation analysis. 

\begin{restatable}[]{proposition}{PropStaticIncrementalBound}
\label{prop_staticincremental_bound}
Under Remark~\ref{remark_beta2_Cbeta}, Adam's output satisfies the deterministic bound
$
\|\rvz_t\| \;\le\; \eta_t\sqrt{C(\beta)}, \qquad \forall t\ge 1.
$
\end{restatable}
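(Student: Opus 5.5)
The plan is to reduce the claim to a ratio bound between Adam's first-moment and second-moment estimates. Using the shorthand of Algorithm~\ref{alg_adam_learner}$^{[b]}$ and dropping $\epsilon>0$ from the denominator,
\[
\|\rvz_t\|=\eta_t\frac{\|\rva_t\|}{\sqrt{b_t}+\epsilon}\le \eta_t\frac{\|\rva_t\|}{\sqrt{b_t}}.
\]
So it suffices to prove the deterministic inequality $\|\rva_t\|^2\le C(\beta)\, b_t$, pathwise and for every realization of $\rvv_1,\dots,\rvv_{t-1}$. The degenerate cases are handled separately. For $t=1$ the sums are empty and $\rvz_1=0$ by convention. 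If $b_t=0$, then all $\rvv_s=0$ for $s<t$, so $\rva_t=0$ and $\rvz_t=0$. In both cases the bound holds trivially.

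Write $\rva_t=\sum_{s=1}^{t-1}w_s\rvv_s$ and $b_t=\sum_{s=1}^{t-1}\lambda_s\|\rvv_s\|^2$, with weights
\[
w_s=\frac{(1-\beta_1)\beta_1^{t-1-s}}{1-\beta_1^{t-1}},\qquad \lambda_s=\frac{(1-\beta_2)\beta_2^{t-1-s}}{1-\beta_2^{t-1}}.
\]
First apply the triangle inequality to get $\|\rva_t\|\le\sum_s w_s\|\rvv_s\|$. Then use weighted Cauchy--Schwarz, splitting $w_s\|\rvv_s\| = (w_s/\sqrt{\lambda_s})\cdot(\sqrt{\lambda_s}\|\rvv_s\|)$:
\[
\|\rva_t\|^2\le\Big(\sum_{s=1}^{t-1}\frac{w_s^2}{\lambda_s}\Big)\, b_t .
\]
This reduces everything to bounding the purely deterministic sum $S_t:=\sum_s w_s^2/\lambda_s$ by $C(\beta)$.

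The key computation, and the only step where the hypothesis $\beta_1\le\beta_2$ from Remark~\ref{remark_beta2_Cbeta} enters, is the estimate of $S_t$. Setting $k=t-1-s\in\{0,\dots,t-2\}$,
\[
S_t=\frac{(1-\beta_1)^2(1-\beta_2^{t-1})}{(1-\beta_1^{t-1})^2(1-\beta_2)}\sum_{k=0}^{t-2}\Big(\frac{\beta_1^2}{\beta_2}\Big)^k .
\]
Since $\beta_1\le\beta_2$, we have $\beta_1^2/\beta_2\le\beta_1$. Hence the geometric sum is at most $\sum_{k=0}^{t-2}\beta_1^k=(1-\beta_1^{t-1})/(1-\beta_1)$, which gives
\[
S_t\le\frac{1-\beta_1}{1-\beta_2}\cdot\frac{1-\beta_2^{t-1}}{1-\beta_1^{t-1}}=C(\beta)\,\frac{1-\beta_2^{t-1}}{1-\beta_1^{t-1}}.
\]
The final factor is at most $1$, because $\beta_2^{t-1}\ge\beta_1^{t-1}$ again follows from $\beta_1\le\beta_2$. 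Therefore $S_t\le C(\beta)$.

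Substituting back gives $\|\rva_t\|\le\sqrt{C(\beta)}\sqrt{b_t}$, and hence $\|\rvz_t\|\le\eta_t\sqrt{C(\beta)}$ for every $t\ge1$. The argument uses no probabilistic assumption on the $\rvv_s$. The main thing to watch is the bias-correction factors: the geometric-sum bound must absorb $(1-\beta_1^{t-1})$ exactly, so that the leftover ratio $(1-\beta_2^{t-1})/(1-\beta_1^{t-1})$ is at most one rather than growing for small $t$.
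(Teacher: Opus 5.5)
Your proof is correct, but its key inequality differs from the paper's. The paper drops $\epsilon$ and uses Jensen (with $\sum_s w_s=1$) to get $\|\rva_t\|^2\le\sum_s w_s\|\rvv_s\|^2$. It then bounds the ratio of two weighted sums by the largest weight ratio, $\max_s w_s/\lambda_s=\frac{1-\beta_1}{1-\beta_2}\cdot\frac{1-\beta_2^{t-1}}{1-\beta_1^{t-1}}\max_s(\beta_1/\beta_2)^{t-1-s}$, and invokes $\beta_1\le\beta_2$ to make the last maximum equal to $1$. You instead combine the triangle inequality with weighted Cauchy--Schwarz. This keeps the square on the weights and yields the averaged constant $S_t=\sum_s w_s^2/\lambda_s$, which you then evaluate as a geometric series.

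Since $S_t=\sum_s w_s(w_s/\lambda_s)\le\max_s w_s/\lambda_s$, your constant is never worse. After your relaxation $\beta_1^2/\beta_2\le\beta_1$, the two final bounds coincide exactly. What your route gains is robustness. Before that relaxation, $S_t$ stays bounded uniformly in $t$ under the classical weaker condition $\beta_1^2<\beta_2$ (e.g.\ by $(1-\beta_1^2/\beta_2)^{-1}$ when $\beta_2\le\beta_1$). The paper's max-ratio constant, by contrast, grows like $(\beta_1/\beta_2)^{t-2}$ once $\beta_1>\beta_2$. That extra generality is not needed here, because $\beta_1\le\beta_2$ is used elsewhere anyway (Proposition~\ref{prop_non_increase_step_size} and step (b) of Term B.2 in Theorem~\ref{theorem_regret_bound}). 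The paper's route is simply a line shorter.

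Your explicit treatment of $t=1$ and $b_t=0$ fills a small gap the paper leaves implicit. One small correction: under the integer-valued choice in Remark~\ref{remark_beta2_Cbeta}, you should write $\frac{1-\beta_1}{1-\beta_2}\le C(\beta)$ rather than equality, which is all you use.
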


It is also worth noting that Remark~\ref{remark_beta2_Cbeta} and Proposition~\ref{prop_staticincremental_bound} allow $C(\beta)$ to be small, or even equal to~1 when $\beta_2=\beta_1$

Additionally, the following Proposition~\ref{prop_non_increase_step_size} records the natural monotonicity of the resulting effective step size.
Throughout the paper, we take a constant base step size, i.e., $\eta_t \equiv \eta$. 
\begin{restatable}[]{proposition}{PropNonincreasingStepSize}
\label{prop_non_increase_step_size}
Let $\beta_{1} \le \beta_{2}$, use $\eta_{s} = \eta$, and initialize $\alpha_{1}:=\alpha_{2}$. Then $\alpha_{s} = \frac{\eta(1 - \beta_{1})\beta_{1}^{s-1}}{(1 - \beta_{1}^{s-1})(\sqrt{b_{s}} + \epsilon)}, s\ge 2$, is non-increasing: $\alpha_{s}\ge \alpha_{s+1}$ for all $s \ge 1$.
\end{restatable}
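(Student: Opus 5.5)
The plan is to rewrite $\alpha_s$ so that the monotonicity reduces to comparing two increasing quantities. Using $\frac{1-\beta_1^{s-1}}{(1-\beta_1)\beta_1^{s-1}}=\sum_{j=1}^{s-1}\beta_1^{-j}=:S_s$, we have
\[
\alpha_s=\frac{\eta}{S_s(\sqrt{b_s}+\epsilon)},\qquad s\ge 2 .
\]
So $\alpha_s\ge\alpha_{s+1}$ is equivalent to $S_s\sqrt{b_s}+S_s\epsilon\le S_{s+1}\sqrt{b_{s+1}}+S_{s+1}\epsilon$. The case $s=1$ is immediate from the initialization $\alpha_1:=\alpha_2$. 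Since $S_{s+1}=S_s+\beta_1^{-s}>S_s$, the $\epsilon$ terms compare correctly. It therefore suffices to show $S_s^2 b_s\le S_{s+1}^2 b_{s+1}$, which is trivial when $b_s=0$.

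Next I relate $b_{s+1}$ to $b_s$ through the EMA recursion. Writing $B_s:=\sum_{k=1}^{s-1}\beta_2^{s-1-k}\|\rvv_k\|^2=\frac{1-\beta_2^{s-1}}{1-\beta_2}b_s$, we get
\[
b_{s+1}=\frac{1-\beta_2}{1-\beta_2^{s}}\bigl(\beta_2B_s+\|\rvv_s\|^2\bigr)\ge \frac{\beta_2(1-\beta_2^{s-1})}{1-\beta_2^{s}}\,b_s .
\]
Here we simply drop the nonnegative new term $\|\rvv_s\|^2$. This lower bound is deterministic, so no probabilistic argument is needed.

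It then remains to verify the scalar inequality
\[
\Bigl(\tfrac{S_{s+1}}{S_s}\Bigr)^2\cdot\frac{\beta_2(1-\beta_2^{s-1})}{1-\beta_2^{s}}\ge 1 .
\]
This is the only step that uses $\beta_1\le\beta_2$, and it is the main (though mild) obstacle. To handle it, I write everything with geometric sums $\Sigma_m(x):=\sum_{k=0}^{m}x^k$. Then
\[
\frac{S_{s+1}}{S_s}=\frac{\Sigma_{s-1}(\beta_1)}{\beta_1\Sigma_{s-2}(\beta_1)},\qquad
\frac{\beta_2(1-\beta_2^{s-1})}{1-\beta_2^{s}}=\frac{\sum_{k=1}^{s-1}\beta_2^k}{\Sigma_{s-1}(\beta_2)}=1-\frac{1}{\Sigma_{s-1}(\beta_2)} .
\]
The last expression is increasing in $\beta_2$, so it is at least its value at $\beta_1$, namely $\frac{\beta_1\Sigma_{s-2}(\beta_1)}{\Sigma_{s-1}(\beta_1)}$. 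Substituting, the product becomes
\[
\frac{\Sigma_{s-1}(\beta_1)}{\beta_1\Sigma_{s-2}(\beta_1)}=\frac{1}{\beta_1}\Bigl(1+\frac{\beta_1^{s-1}}{\Sigma_{s-2}(\beta_1)}\Bigr)\ge 1 .
\]
This closes the argument for all $s\ge 2$, and together with the initialization it gives $\alpha_s\ge\alpha_{s+1}$ for all $s\ge1$.
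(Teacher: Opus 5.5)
Your proof is correct and is essentially the paper's argument. Both drop $\|\rvv_s\|^2$ from the bias-corrected EMA recursion to get $b_{s+1}\ge \phi(\beta_2)\,b_s$ with $\phi(\beta)=1-1/\sum_{j=0}^{s-1}\beta^j$, observe that the bias-correction prefactor ratio $S_s/S_{s+1}$ equals $\phi(\beta_1)$, and conclude from $\phi(\beta_1)^2\le\phi(\beta_1)\le\phi(\beta_2)$; the only cosmetic difference is that you handle $\epsilon$ via $S_{s+1}>S_s$, whereas the paper uses $\frac{x+\epsilon}{\sqrt{w}\,x+\epsilon}\le 1/\sqrt{w}$.
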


\section{Regret and Optimization Guarantees}
\label{sec_guarantees}

This section presents our main performance guarantees for Adam under heavy-tailed noise.
We first establish a discounted regret bound for the Adam online learner.
We then combine this regret bound with the heavy-tailed discounted-to-nonconvex conversion (Section~\ref{sec_dtnc}) to obtain convergence guarantees to $(\rho,\epsilon)$-stationary points.
Finally, we show how the guarantee improves when the domain radius $D$ is available to the algorithm and is used to control the online-learner output.

\subsection{Discounted Regret under Heavy Tails}
\label{sec_regret}

We analyze Adam as an online learner receiving discounted linear losses and producing increments $\rvz_t$ as in Algorithm~\ref{alg_adam_learner}. The comparator radius $D$ specifies the regret benchmark; in the Adam result, $D$ is not used to modify the algorithm.
Our goal is to bound the discounted regret
\[
\mathbb{E}\left[\mathrm{Regret}^{[\beta]}_{t}(\rvu)\right]
=
\mathbb{E}\left[\sum_{s=1}^{t}\langle \beta_{1}^{t-s}\rvv_{s},\, \rvz_{s}-\rvu\rangle\right].
\]
The result of the regret bound under heavy-tailed noise is summarized in Theorem~\ref{theorem_regret_bound}, together with its supporting Proposition~\ref{prop_most_difficult_one}.

\begin{restatable}[]{proposition}{LemmaMostDifficultOne}
\label{prop_most_difficult_one}
Under Assumption~\ref{ass_vector_assumption}, for $t \ge T_{0}: = 2+\frac{\log(1/(1 - \beta_{1}))}{\log(\sqrt{\beta_{2}}/\beta_{1})}$, it holds that
\begin{align*}
&\mathbb{E}\left[
\sum_{s=1}^{t}\frac{\alpha_s}{2}\beta_1^{t-2s}\|\overline{\rvv}_s\|^2\right] \\
&\qquad \le \eta\frac{2\sqrt{\beta_2}(1-\beta_1)}{\beta_1(1-\beta_2)}\left(G + 2\sigma(1 - \beta_{2})^{-\left(\frac1p-\frac12\right)}\right)
\end{align*}
where $\overline{\rvv}_s:= \mathrm{clip}_{\sqrt{\sum_{n=1}^{s-1}\beta_2^{2s-2n}\|\rvv_n\|^2}}(\rvv_{s})$.
\end{restatable}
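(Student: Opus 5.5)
The plan is to bound each summand deterministically by a quantity whose expectation the heavy-tail controls of Section~\ref{sec_alg} already handle, and then sum a geometric series in which the bias-correction factors are tamed by the condition $t\ge T_0$.

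\textbf{Step 1: unfold $\alpha_s$.} Substituting the formula from Proposition~\ref{prop_non_increase_step_size} (with $\alpha_1:=\alpha_2$) gives, for $s\ge2$,
\[
\frac{\alpha_s}{2}\beta_1^{t-2s}\|\overline{\rvv}_s\|^2
=\frac{\eta(1-\beta_1)\beta_1^{t-s-1}}{2(1-\beta_1^{s-1})}\cdot\frac{\|\overline{\rvv}_s\|^2}{\sqrt{b_s}+\epsilon}.
\]

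\textbf{Step 2: use the clipping.} Write $M_s:=\sqrt{\sum_{n<s}\beta_2^{2s-2n}\|\rvv_n\|^2}$ for the clip level. Split $\|\overline{\rvv}_s\|^2\le \|\overline{\rvv}_s\|\,M_s$. Since $\beta_2^{2(s-n)}\le \beta_2^{s-n}=\beta_2\cdot\beta_2^{s-1-n}$, we get
\[
M_s\le \sqrt{\beta_2}\sqrt{\sum_{n<s}\beta_2^{s-1-n}\|\rvv_n\|^2}=\sqrt{\frac{\beta_2(1-\beta_2^{s-1})}{1-\beta_2}}\;\sqrt{b_s}.
\]
Hence $\|\overline{\rvv}_s\|^2/(\sqrt{b_s}+\epsilon)\le \sqrt{\beta_2/(1-\beta_2)}\,\|\overline{\rvv}_s\|$. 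The denominator $\sqrt{b_s}$ is thus cancelled pathwise, with no expectation of a ratio required. This is exactly why the clipped $\overline{\rvv}_s$ appears in the statement.

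\textbf{Step 3: take expectations.} Bound $\|\overline{\rvv}_s\|\le M_s$ once more and apply Corollary~\ref{corollary_cost_verctor_tail_norm_bound} with discount $\beta_2^2$ (after pulling out one factor $\beta_2^2$ from the exponent). This gives
\[
\mathbb{E}M_s\lesssim \frac{G+\sigma(2/p)^{1/p}(1-\beta_2^2)^{-(1/p-1/2)}}{\sqrt{1-\beta_2^2}}.
\]
Using $1-\beta_2^2\ge 1-\beta_2$ and $(2/p)^{1/p}\le 2$, this is at most $(1-\beta_2)^{-1/2}\bigl(G+2\sigma(1-\beta_2)^{-(1/p-1/2)}\bigr)$. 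Combined with the $(1-\beta_2)^{-1/2}$ from Step 2, this produces the claimed factor $\sqrt{\beta_2}/(1-\beta_2)$ multiplying $G+2\sigma(1-\beta_2)^{-(1/p-1/2)}$. An alternative for the heavy-tail part is to use $\mathbb{E}\|\overline{\rvv}_s\|\le\mathbb{E}\|\rvv_s\|\le G+\sigma$ and keep the extra powers of $\beta_2$ from $M_s$ as a geometric weight; I would choose whichever version makes the sum in Step 4 close.

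\textbf{Step 4: the sum over $s$ (the main obstacle).} It remains to show that
\[
(1-\beta_1)\sum_{s=1}^{t}\frac{\beta_1^{t-s-1}}{1-\beta_1^{s-1}}\le \frac{2(1-\beta_1)}{\beta_1},
\]
up to the bookkeeping of the factor $\tfrac12$ and any leftover $\beta_2$-powers. The difficulty is the bias-correction factor $1/(1-\beta_1^{s-1})$, which can be as large as $1/(1-\beta_1)$ for $s=2$. The $s=1$ term is handled by $\alpha_1=\alpha_2$.

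I would split the sum at $s\approx t/2$:
\begin{itemize}
\item For large $s$, $1-\beta_1^{s-1}$ is bounded below by a constant, so the geometric tail $\sum\beta_1^{t-s-1}$ gives the $1/\beta_1$-type constant.
\item For small $s$, I would not discard the $\beta_2$-powers. Keeping a factor like $\beta_2^{(s-?)/2}$ from $M_s$ turns the weight into $(\beta_1/\sqrt{\beta_2})^{t-s}$-type terms. The hypothesis $t\ge T_0$, which is exactly $(\beta_1/\sqrt{\beta_2})^{t-2}\le 1-\beta_1$, then makes these early terms at most a constant multiple of the bulk contribution.
\end{itemize}
Getting the exponents to line up here is where I expect the real work. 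If they do not, my fallback is to use $1-\beta_1^{s-1}\ge (s-1)(1-\beta_1)\beta_1^{s-2}$ for small $s$ and absorb the resulting polynomial factor using $t\ge T_0$.
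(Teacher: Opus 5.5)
\textbf{There is a genuine gap in Step 4.} Your Steps 1--3 are valid, but the inequality you set as the goal of Step 4 is false. The obstruction is not the bias-correction terms at small $s$ but the bulk of the sum. Since $1-\beta_1^{s-1}\le 1$,
\[
(1-\beta_1)\sum_{s=2}^{t}\frac{\beta_1^{t-s-1}}{1-\beta_1^{s-1}}\;\ge\;\frac{1-\beta_1^{t-1}}{\beta_1}\;\ge\;\frac{3}{4\beta_1}\qquad(t\ge T_0),
\]
using $\beta_1^{t-2}\le(\beta_1/\sqrt{\beta_2})^{t-2}\le 1-\beta_1$. So the geometric series $\sum_s\beta_1^{t-s}\approx(1-\beta_1)^{-1}$ consumes the factor $1-\beta_1$ carried by $\alpha_s$. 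Your route then gives at best a constant times $\eta\frac{\sqrt{\beta_2}}{\beta_1(1-\beta_2)}\bigl(G+2\sigma(1-\beta_2)^{-(\frac1p-\frac12)}\bigr)$, which exceeds the claim by a factor $\asymp(1-\beta_1)^{-1}$. The $G+\sigma$ alternative in Step 3 still leaves an excess $\asymp(1-\beta_2)^{-1/2}$ on the $G$-term.

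No choice of splitting point or bookkeeping of $\beta_2$-powers repairs this, because the loss is incurred termwise. Each use of $\|\overline{\rvv}_s\|\le M_s$ costs a factor $(1-\beta_2)^{-1/2}$. For instance, if $\|\rvv_s\|\equiv G$, then $b_s=G^2$ but $M_s\approx G/\sqrt{2(1-\beta_2)}$. The true summand is then about $\frac{\eta}{2}(1-\beta_1)\beta_1^{t-s-1}G$, while your bound for it is larger by $\asymp(1-\beta_2)^{-1}$. Pathwise, $\|\overline{\rvv}_s\|^2/\sqrt{b_s}$ really can be of order $\sqrt{b_s}/(1-\beta_2)$ at an individual round. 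It cannot be so at every round simultaneously, since a large $\|\rvv_s\|$ inflates all later denominators, and a per-term bound cannot exploit this.

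\textbf{The missing idea is the AdaGrad-type telescoping used in the paper.} Set $y=\sum_{n<s}\beta_2^{-n-1}\|\rvv_n\|^2$, so that $\sqrt{b_s}$ is $\sqrt{y}$ times a deterministic factor, and $x=\beta_2^{-s-1}\|\overline{\rvv}_s\|^2$.
\begin{itemize}
\item The clipping is used only to ensure $x\le y$, whence $1/\sqrt y\le\sqrt2/\sqrt{x+y}$.
\item Then $x/\sqrt{x+y}\le 2(\sqrt{x+y}-\sqrt y)\le 2(Q_s-Q_{s-1})$, with $Q_s=\sqrt{\sum_{n\le s}\beta_2^{-n-1}\|\rvv_n\|^2}$.
\item This bounds the whole sum by $\beta_1^t\sum_s P_s(Q_s-Q_{s-1})$ with explicit deterministic weights $P_s$.
\item Summation by parts makes the remainder $\sum_s(P_s-P_{s+1})Q_s$ nonpositive. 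This uses two facts: $P_s$ first decreases and then increases, and $t\ge T_0$ guarantees $P_t\ge P_2$. The latter holds because the geometric growth $(\sqrt{\beta_2}/\beta_1)^s$ overcomes the bias-correction inflation of $P_2$, so $P_t\ge P_j$ for all $2\le j\le t$.
\end{itemize}
Only the terminal term $\beta_1^tP_tQ_t$ survives. A single application of Corollary~\ref{corollary_cost_verctor_tail_norm_bound} with discount $\beta_2$ then yields the stated constant with the factor $1-\beta_1$ intact. In particular, $T_0$ serves to make the last weight dominate all earlier ones, not to tame the early terms of a geometric series.
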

The proof of Proposition~\ref{prop_most_difficult_one} is deferred to Appendix~\ref{append_sec_guarantees}.

\begin{restatable}[]{theorem}{RegretStaticAdamLearner}
\label{theorem_regret_bound}
Assume Assumption~\ref{ass_vector_assumption} holds, and employ the $\beta_{2},C(\beta)$ selection strategy in Remark~\ref{remark_beta2_Cbeta}.
For any $t\ge T_{0}$ and any comparator $\rvu\in\mathcal{D}$ with $\|\rvu\|\le D$, if the online learner is Adam with $\eta = \frac{D}{(1 - \beta_{1})^{1/4}}$, then the discounted regret satisfies
\[
\mathbb{E}\left[\text{Regret}^{[\beta]}_{t}(\rvu) \right] \le \frac{C(R)D\left(G+2\sigma(1-\beta_2)^{-\left(\frac1p-\frac12\right)} + \epsilon\right)}{(1 - \beta_{1})^{3/4}}.
\]
where $C(R): = \max\big(\frac{1}{2} + \frac{2\sqrt{\beta_2}(1-\beta_1)^{3/2}}{\beta_1(1-\beta_2)} + \sqrt{8C(\beta)}, \frac{1}{2} + \frac{2\sqrt{\beta_2}(1-\beta_1)}{\beta_1(1-\beta_2)} + \sqrt{8}\big)$.
\end{restatable}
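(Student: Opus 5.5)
We follow the standard FTRL analysis applied to the rescaled losses $\ell_s^{[\beta]}(\cdot)=\langle\beta_1^{-s}\rvv_s,\cdot\rangle$. Writing $\mathrm{Regret}^{[\beta]}_t(\rvu)=\beta_1^{t}\sum_{s=1}^t\langle\beta_1^{-s}\rvv_s,\rvz_s-\rvu\rangle$, we apply the FTRL regret lemma with the regularizers $\psi_s(\rvz)=\|\rvz\|^2/(2\alpha_s)$. These regularizers are admissible because Proposition~\ref{prop_non_increase_step_size} makes $\alpha_s$ non-increasing. This gives a bound of the form
\[
\sum_{s=1}^t\langle\beta_1^{-s}\rvv_s,\rvz_s-\rvu\rangle\le \frac{\|\rvu\|^2}{2\alpha_{t+1}}+\sum_{s=1}^t\Big(\text{stability term at round } s\Big).
\]
Multiplying by $\beta_1^t$, the comparator term becomes
\[
\frac{\beta_1^tD^2}{2\alpha_{t+1}}=\frac{D^2(1-\beta_1^t)(\sqrt{b_{t+1}}+\epsilon)}{2\eta(1-\beta_1)}.
\]
Taking expectations and using Proposition~\ref{proposition_heavy_tail_control} bounds this by $\frac{D^2}{2\eta(1-\beta_1)}\big(G+2\sigma(1-\beta_2)^{-(1/p-1/2)}+\epsilon\big)$, since $(2/p)^{1/p}\le 2$. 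With $\eta=D(1-\beta_1)^{-1/4}$ this is $\tfrac12 D(\cdots)(1-\beta_1)^{-3/4}$, which accounts for the $\frac12$ in $C(R)$.

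For the stability terms, we do not use the naive bound $\frac{\alpha_s}{2}\beta_1^{-2s}\|\rvv_s\|^2$, because it is not controlled when $\|\rvv_s\|$ is large relative to the running denominator. Instead we split $\rvv_s=\overline{\rvv}_s+(\rvv_s-\overline{\rvv}_s)$, using the clipping threshold of Proposition~\ref{prop_most_difficult_one}.

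\begin{itemize}
\item \textbf{Clipped part.} The usual FTRL stability bound applies and contributes $\sum_s\frac{\alpha_s}{2}\beta_1^{t-2s}\|\overline{\rvv}_s\|^2$ after discounting. Proposition~\ref{prop_most_difficult_one} handles this directly for $t\ge T_0$, giving $\eta\frac{2\sqrt{\beta_2}(1-\beta_1)}{\beta_1(1-\beta_2)}(G+2\sigma(\cdots))$. Substituting $\eta$ produces the second term of $C(R)$. In one of the two branches of the max, a factor $(1-\beta_1)^{1/2}$ is absorbed differently, depending on whether this term is compared to $D(1-\beta_1)^{-3/4}$ or to something else.

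\item \textbf{Residual part.} On the excess $\rvv_s-\overline{\rvv}_s$, we bound the contribution crudely by $\beta_1^{t-s}\|\rvv_s-\overline{\rvv}_s\|\,\|\rvz_s-\rvz_{s+1}\|$, or simply by $\beta_1^{t-s}\|\rvv_s\|(\|\rvz_s\|+\|\rvz_{s+1}\|)$. We then use the deterministic bound $\|\rvz_s\|\le\eta\sqrt{C(\beta)}$ from Proposition~\ref{prop_staticincremental_bound}. The clipping event $\|\rvv_s\|>\sqrt{\sum_{n<s}\beta_2^{2s-2n}\|\rvv_n\|^2}$ is where the excess lives, and summing $\beta_1^{t-s}\|\rvv_s\|$ is controlled by Cauchy--Schwarz together with Corollary~\ref{corollary_cost_verctor_tail_norm_bound}. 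This gives roughly $\sqrt{1/(1-\beta_1)}\,(G+2\sigma(1-\beta_2)^{-(1/p-1/2)})$ times $\eta\sqrt{C(\beta)}$, up to a factor $\sqrt 8$ from combining the two $\rvz$'s and the excess. After substituting $\eta$, this gives the $\sqrt{8C(\beta)}$ (resp.\ $\sqrt8$) contribution at order $D(1-\beta_1)^{-3/4}$.

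\item \textbf{Comparator part of the excess.} The term $\langle\rvv_s-\overline{\rvv}_s,-\rvu\rangle$ is likewise bounded by $D\|\rvv_s\|$ on the same events and absorbed the same way.
\end{itemize}

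Collecting the three contributions and taking the max over the two ways of bounding the middle term yields the stated $C(R)$.

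The main obstacle is making the residual term sum to the right order. The heavy-tailed $\rvv_s$ has no second moment, so every estimate must pass through $\sqrt{\sum\beta^{t-s}\|\rvv_s\|^2}$ and Corollary~\ref{corollary_cost_verctor_tail_norm_bound}, rather than through $\mathbb{E}\|\rvv_s\|^2$. Reconciling the $(1-\beta_1)^{-1/2}$ from the discounted sum with the $(1-\beta_1)^{-1/4}$ in $\eta$ to land exactly on $(1-\beta_1)^{-3/4}$ is where we expect the most care to be needed.
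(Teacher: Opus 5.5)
Your skeleton matches the paper's: FTRL with non-increasing $\alpha_s$, the comparator term $\beta_1^t\|\rvu\|^2/(2\alpha_{t+1})$ bounded through Proposition~\ref{proposition_heavy_tail_control}, and a clipping split of the stability term $\langle\beta_1^{-s}\rvv_s,\rvz_s-\rvz_{s+1}\rangle$. The clipped half gives $\frac{\alpha_s}{2}\beta_1^{-2s}\|\overline{\rvv}_s\|^2$ and goes to Proposition~\ref{prop_most_difficult_one}.

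The gap is in the residual, exactly where you expected trouble. Suppose you bound it by $\beta_1^{t-s}\|\rvv_s\|(\|\rvz_s\|+\|\rvz_{s+1}\|)$ and sum. Cauchy--Schwarz cannot then produce $(1-\beta_1)^{-1/2}$. Even for noiseless $\|\rvv_s\|\equiv G$ one has $\sum_{s\le t}\beta_1^{t-s}\|\rvv_s\|\approx G/(1-\beta_1)$. Cauchy--Schwarz plus Corollary~\ref{corollary_cost_verctor_tail_norm_bound} only gives $\sqrt{\sum_s\beta_1^{t-s}}\cdot(1-\beta_1)^{-1/2}(G+\cdots)\approx(1-\beta_1)^{-1}(G+\cdots)$. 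Restricting to the clipping events does not help unless you also bound their discounted number, and your plan does not do this. Multiplied by $\|\rvz_s\|\le\eta\sqrt{C(\beta)}=\sqrt{C(\beta)}D(1-\beta_1)^{-1/4}$, this lands at $D(1-\beta_1)^{-5/4}$, so the stated bound does not follow.

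The missing idea is a self-normalizing telescoping bound that keeps the excess rather than $\|\rvv_s\|$. Set $R_s:=\sqrt{\sum_{n<s}\beta_2^{2s-2n}\|\rvv_n\|^2}$ and $S_s:=\sqrt{\sum_{n\le s}\beta_1^{-2n}\|\rvv_n\|^2}$. The paper argues in three steps:
\begin{itemize}
\item it uses $\|\rvv_s-\mathrm{clip}_{R_s}(\rvv_s)\|=(\|\rvv_s\|-R_s)_+$;
\item it uses $\beta_1\le\beta_2$ to get $\beta_1^{-s}R_s\ge S_{s-1}$;
\item it applies $(a-b)_+\le\sqrt{a^2+b^2}-b$ to obtain $\beta_1^{-s}(\|\rvv_s\|-R_s)_+\le S_s-S_{s-1}$.
\end{itemize}
Summing telescopes, so the whole residual is at most $2\max_s\|\rvz_s\|\,\beta_1^tS_t=2\max_s\|\rvz_s\|\sqrt{\sum_{s\le t}\beta_1^{2(t-s)}\|\rvv_s\|^2}$. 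Corollary~\ref{corollary_cost_verctor_tail_norm_bound} with discount $\beta_1^2$ bounds its expectation by $(1-\beta_1^2)^{-1/2}\bigl(G+2\sigma(1-\beta_1^2)^{-(1/p-1/2)}\bigr)$. This is the $(1-\beta_1)^{-1/2}$ you need, and combined with $2\eta\sqrt{C(\beta)}$ it yields the $\sqrt{8C(\beta)}$ term.

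Two smaller corrections:
\begin{itemize}
\item There is no comparator part of the excess. The comparator enters only via $F_{t+1}(\rvz_{t+1})\le F_{t+1}(\rvu)$, so the split must be done inside the stability term. Splitting $\langle\rvv_s,\rvz_s-\rvu\rangle$ itself would leave a clipped piece that is not an FTRL regret, because $\rvz_s$ is built from the unclipped $\rvv_s$.
\item For this theorem only the first branch of the max in $C(R)$ is used. The second branch is the constant for the clipped variant with $\eta=D/\sqrt{1-\beta_1}$.
\end{itemize}
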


\begin{proof}
We first establish the deterministic intermediate regret bound in \eqref{eq_important_deter_result}, inspired by techniques presented in~\citet{orabona2019modern, ahn2024adam, ahn2024understanding, Tim2021Why}, and then control each component appearing in this bound.

\paragraph{Part 1. Intermediate result (deterministic bound) for $\text{Regret}^{[\beta]}_{t}(\rvu)$.}
Define $F_s(\rvz):=\frac{1}{2\alpha_s}\|\rvz\|^2
+\sum_{n=1}^{s-1}\langle\beta_1^{-n}\rvv_n,\rvz\rangle$
and $\alpha_s:=\eta_s\frac{(1-\beta_1)\beta_1^{s-1}}
{(1-\beta_1^{s-1})(\sqrt{b_s}+\epsilon)}$,
so
\[
  \rvz_s=\argmin F_s(\rvz)
  =
  -\alpha_s\sum_{n=1}^{s-1}\beta_1^{-n}\rvv_n
\]
is the Adam update.

Using $\sum(F_s(\rvz_s)-F_{s+1}(\rvz_{s+1})) =F_1(\rvz_1)-F_{t+1}(\rvz_{t+1})$,
$F_1(\rvz_1)=\min_{\rvx}\frac{1}{2\alpha_1}\|\rvx\|^2$, and $F_{t+1}(\rvz_{t+1})\le F_{t+1}(\rvu)$:
\begin{align*}
&\sum_{s=1}^{t}\langle\beta_1^{-s}\rvv_s,\rvz_s-\rvu\rangle\\
&\qquad \le \frac{\|\rvu\|^2}{2\alpha_{t+1}} +\sum_{s=1}^t\left[
     \underbrace{F_s(\rvz_s)-F_{s+1}(\rvz_{s+1})
       +\langle\beta_1^{-s}\rvv_s,\rvz_s\rangle}_{\text{Term B}}\right].
\end{align*}

Using $F_s(\rvz_{s+1})-F_{s+1}(\rvz_{s+1}) =-\langle\beta_1^{-s}\rvv_s,\rvz_{s+1}\rangle +\frac{1}{2\alpha_s}\|\rvz_{s+1}\|^2 -\frac{1}{2\alpha_{s+1}}\|\rvz_{s+1}\|^2$ and $\alpha_{s} \ge \alpha_{s+1}$ (Proposition~\ref{prop_non_increase_step_size}),
\begin{align*}
\text{Term B}
&\le
F_s(\rvz_s)-F_s(\rvz_{s+1})
+
\langle\beta_1^{-s}\rvv_s, \rvz_s-\rvz_{s+1}\rangle\\
&=
\underbrace{F_s(\rvz_s)-F_s(\rvz_{s+1})
+
\langle\beta_1^{-s}\overline{\rvv}_s, \rvz_s-\rvz_{s+1}\rangle}_{\text{B.1}} \\
&\qquad +
\underbrace{\langle\beta_1^{-s}\rvv_s - \beta_1^{-s}\overline{\rvv}_s, \rvz_s-\rvz_{s+1}\rangle}_{\text{B.2}}
\end{align*}
where $\overline{\rvv}_s:= \text{clip}_{\sqrt{\sum_{n=1}^{s-1}\beta_2^{2s-2n}\|\rvv_n\|^2}}(\rvv_{s})$.

\paragraph{Term~B.1:}
\begin{itemize}
\item $F_s(\rvz_s)-F_s(\rvz_{s+1}) \le -\frac{\|\rvz_s-\rvz_{s+1}\|^2}{2\alpha_s}$
by strong convexity and the variational inequality for the minimizer $\rvz_s = \argmin F_s(\rvz)$. \hfill(R1)
\item Employing Young's inequality,
$\langle\beta_1^{-s}\overline{\rvv}_s,\rvz_s-\rvz_{s+1}\rangle
\le \frac{\alpha_s}{2}\beta_1^{-2s}\|\overline{\rvv}_s\|^2
+\frac{\|\rvz_s-\rvz_{s+1}\|^2}{2\alpha_s}$.\hfill(R2)
\end{itemize}
Combining (R1)+(R2):
$\text{Term B.1 }\le \frac{\alpha_s}{2}\beta_1^{-2s}\|\overline{\rvv}_s\|^2$.

\paragraph{Term~B.2:}
\begin{align*}
    &\underbrace{\sum_{s=1}^{t}\beta_{1}^{-s}\langle\rvv_{s} - \overline{\rvv}_s, \rvz_{s} - \rvz_{s+1}\rangle}_{\text{Part B.2}} \\
    & \le \sum_{s=1}^{t}\beta_{1}^{-s}\|\rvv_{s} - \text{clip}_{\sqrt{\sum_{n=1}^{s-1}\beta_2^{2s-2n}\|\rvv_n\|^2}}(\rvv_{s})\| \|\rvz_{s} - \rvz_{s+1}\|\\
    & \stackrel{(a)}{\le} 2\max_{s\in [t]}\|\rvz_{s}\|\sum_{s =1}^{t}\beta_{1}^{-s}\left(\|\rvv_{s}\| - \sqrt{\sum_{n=1}^{s-1}\beta_2^{2s-2n}\|\rvv_n\|^2}\right)_{+} \\
    & \stackrel{(b)}{\le} 2\max_{s\in [t]}\|\rvz_{s}\|\sum_{s =1}^{t}\left(\beta_{1}^{-s}\|\rvv_{s}\| - \sqrt{\sum_{n=1}^{s-1}\beta_1^{-2n}\|\rvv_n\|^2}\right)_{+}\\
    & \le 2\max_{s\in [t]}\|\rvz_{s}\|\sum_{s =1}^{t}\left(\sqrt{\sum_{n=1}^{s}\beta_{1}^{-2n}\|\rvv_{n}\|^{2}} - \sqrt{\sum_{n=1}^{s-1}\beta_{1}^{-2n}\|\rvv_{n}\|^{2}}\right)\\
    & \le 2\max_{s\in [t]}\|\rvz_{s}\| \sqrt{\sum_{s=1}^{t}\beta_{1}^{-2s}\|\rvv_{s}\|^{2}},
\end{align*}
where $(a)$ uses the identity $\|\rvv-\mathrm{clip}_{R}(\rvv)\|=(\|\rvv\|-R)_+$, and $(b)$ uses $\beta_{1}\le\beta_{2}$.

Combining the results of Term B.1 and Term B.2, it suffices to have
\begin{align*}
\sum_{s=1}^{t}\langle\beta_1^{-s}\rvv_s,\rvz_s-\rvu\rangle
&\le \frac{\|\rvu\|^2}{2\alpha_{t+1}}
 + \sum_{s=1}^{t}\frac{\alpha_s}{2}\beta_1^{-2s}\|\overline{\rvv}_s\|^2 \\
& \qquad + 2\max_{s\in [t]}\|\rvz_{s}\| \sqrt{\sum_{s=1}^{t}\beta_{1}^{-2s}\|\rvv_{s}\|^{2}}.
\end{align*}
Multiplying $\beta_{1}^{t}$ on both sides gives 
\begin{align}
\label{eq_important_deter_result}
\mathrm{Regret}_t^{[\beta]}(\rvu) &\le\frac{\beta_1^t\|\rvu\|^2}{2\alpha_{t+1}}+ \sum_{s=1}^{t}\frac{\alpha_s}{2}\beta_1^{t-2s}\|\overline{\rvv}_s\|^2 \nonumber\\
&\quad + 2\max_{s\in [t]}\|\rvz_{s}\| \sqrt{\sum_{s=1}^{t}\beta_{1}^{2t-2s}\|\rvv_{s}\|^{2}}
\end{align}
where $\overline{\rvv}_s:= \mathrm{clip}_{\sqrt{\sum_{n=1}^{s-1}\beta_2^{2s-2n}\|\rvv_n\|^2}}(\rvv_{s})$. 

\paragraph{Part 2. Final regret bound.}
Taking expectation over \eqref{eq_important_deter_result},
\begin{align*}
&\mathbb{E}\left[\text{Regret}^{[\beta]}_{t}(\rvu) \right] \nonumber\\
&\le \frac{D^{2}}{2\eta(1 - \beta_{1})}\left(G
+
2\sigma
(1-\beta_2)^{-\left(\frac1p-\frac12\right)}\right) \nonumber\\
& \qquad + \eta\frac{2\sqrt{\beta_2}(1-\beta_1)}{\beta_1(1-\beta_2)}\left(G + 2\sigma(1 - \beta_{2})^{-\left(\frac1p-\frac12\right)}\right)\nonumber\\
& \qquad + 2\eta\sqrt{C(\beta)}\sqrt{\frac{1}{1 - \beta_{1}^{2}}}\left(G + 2\sigma(1 - \beta_{1}^{2})^{-\left(\frac1p-\frac12\right)}\right)
\end{align*}
Here, the first term on the right-hand side uses Proposition~\ref{proposition_heavy_tail_control}, the second term uses Proposition~\ref{prop_most_difficult_one}, and the last term uses Corollary~\ref{corollary_cost_verctor_tail_norm_bound} and the deterministic increment bound in Proposition~\ref{prop_staticincremental_bound}.

The above inequality can be further reformulated as
\begin{align}
\label{eq_regret_before_incremental_operation}
&\mathbb{E}\left[\text{Regret}^{[\beta]}_{t}(\rvu) \right] \nonumber\\
&\le \left(\frac{D^{2}}{2\eta(1 - \beta_{1})} +  \eta\frac{2\sqrt{\beta_2}(1-\beta_1)}{\beta_1(1-\beta_2)} + \eta\sqrt{C(\beta)}\sqrt{\frac{8}{1 - \beta_{1}}}\right)\nonumber\\
&\qquad \times\left(G
+
2\sigma
(1-\beta_2)^{-\left(\frac1p-\frac12\right)} + \epsilon\right).
\end{align}
Setting $\eta = \frac{D}{(1 - \beta_{1})^{1/4}}$ gives
\begin{align*}
\mathbb{E}\left[\text{Regret}^{[\beta]}_{t}(\rvu)\right] &\le \frac{D\left(G+2\sigma(1-\beta_2)^{-\left(\frac1p-\frac12\right)} + \epsilon\right)}{(1 - \beta_{1})^{3/4}}\\
&\quad \times \left(\frac{1}{2} + \frac{2\sqrt{\beta_2}(1-\beta_1)^{3/2}}{\beta_1(1-\beta_2)} + \sqrt{8C(\beta)}\right),
\end{align*}
which concludes the proof.
\end{proof}

Next, we consider the regret bound when the learner output is explicitly controlled by the known radius $D$.
\begin{corollary}
\label{corollary_inc_norm_bound}
Assume Assumption~\ref{ass_vector_assumption} holds.
If the learner is the constrained-FTRL form of Adam over the ball $\{\|\rvz\|\le D\}$, equivalently the Adam increment is clipped as $\rvz\leftarrow\mathrm{clip}_{D}(\rvz)$, and $\eta = \frac{D}{\sqrt{1 - \beta_{1}}}$, then
\[
\mathbb{E}\left[\text{Regret}^{[\beta]}_{t}(\rvu) \right] \le \frac{C(R)D\left(G+2\sigma(1-\beta_2)^{-\left(\frac1p-\frac12\right)} + \epsilon\right)}{\sqrt{1 - \beta_{1}}}.
\]
where $C(R): = \max\big(\frac{1}{2} + \frac{2\sqrt{\beta_2}(1-\beta_1)^{3/2}}{\beta_1(1-\beta_2)} + \sqrt{8C(\beta)}, \frac{1}{2} + \frac{2\sqrt{\beta_2}(1-\beta_1)}{\beta_1(1-\beta_2)} + \sqrt{8}\big)$.
\end{corollary}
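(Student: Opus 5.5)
We would reuse the proof of Theorem~\ref{theorem_regret_bound} almost verbatim. The changes are confined to two places: the FTRL potential argument in Part~1, where the learner is now constrained, and the bound on $\max_{s}\|\rvz_s\|$ in Part~2.

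\textbf{Part 1 (constrained FTRL).} Replace $F_s$ by $F_s+\iota_{\mathcal{D}}$, where $\iota_{\mathcal{D}}$ is the indicator of the ball $\{\|\rvz\|\le D\}$. We then set $\rvz_s=\argmin_{\|\rvz\|\le D}F_s(\rvz)$. Since $F_s$ is an isotropic quadratic plus a linear term, this constrained minimizer is exactly the projection of the unconstrained minimizer onto the ball, i.e.\ $\mathrm{clip}_D$ of the Adam increment. This justifies the stated equivalence.

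The telescoping step goes through unchanged:
\begin{itemize}
\item $F_1(\rvz_1)=0$;
\item $F_{t+1}(\rvz_{t+1})\le F_{t+1}(\rvu)$ holds because $\rvu\in\mathcal{D}$;
\item the monotonicity $\alpha_s\ge\alpha_{s+1}$ from Proposition~\ref{prop_non_increase_step_size} still depends only on $b_s$, so it is unaffected.
\end{itemize}

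Step (R1) remains valid because $F_s+\iota_{\mathcal{D}}$ is still $1/\alpha_s$-strongly convex. The first-order optimality condition over the convex set gives $F_s(\rvz_s)-F_s(\rvz_{s+1})\le-\|\rvz_s-\rvz_{s+1}\|^2/(2\alpha_s)$, since $\rvz_{s+1}\in\mathcal{D}$. Steps (R2) and B.2 are pure inequalities, so \eqref{eq_important_deter_result} holds as before. The only difference is that we now have $\max_s\|\rvz_s\|\le D$ deterministically.

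\textbf{Part 2 (expectations).} Take expectations of the three terms in \eqref{eq_important_deter_result}.
\begin{itemize}
\item The first term is controlled by Proposition~\ref{proposition_heavy_tail_control}, as before.
\item The second term is controlled by Proposition~\ref{prop_most_difficult_one}. Its proof only uses the form of $\alpha_s$ and the clipped vectors $\overline{\rvv}_s$, not the value of $\rvz_s$, so it carries over.
\item The third term becomes $2D\,\mathbb{E}\sqrt{\sum_s\beta_1^{2t-2s}\|\rvv_s\|^2}$. By Corollary~\ref{corollary_cost_verctor_tail_norm_bound}, applied with discount $\beta_1^2$ and using $1-\beta_1^2\ge1-\beta_1$ and $1-\beta_1^2\ge 1-\beta_2$ (the latter from $\beta_2\le\ldots$), this is at most $D\sqrt{8/(1-\beta_1)}\,(G+2\sigma(1-\beta_2)^{-(1/p-1/2)})$.
\end{itemize}
For the last comparison, the exponent $(1/p-1/2)\ge0$, and $(1-\beta_1^2)^{-(\cdot)}\le(1-\beta_1)^{-(\cdot)}\le(1-\beta_2)^{-(\cdot)}$ because $\beta_1\le\beta_2$. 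This is the same absorption step already used in Theorem~\ref{theorem_regret_bound}.

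Putting the three terms together gives
\[
\Big(\tfrac{D^2}{2\eta(1-\beta_1)}+\eta\tfrac{2\sqrt{\beta_2}(1-\beta_1)}{\beta_1(1-\beta_2)}+D\sqrt{\tfrac{8}{1-\beta_1}}\Big)\big(G+2\sigma(1-\beta_2)^{-(\frac1p-\frac12)}+\epsilon\big).
\]

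\textbf{Choice of $\eta$.} Plugging in $\eta=D/\sqrt{1-\beta_1}$, the three summands become
\[
\frac{D}{2\sqrt{1-\beta_1}},\qquad \frac{D}{\sqrt{1-\beta_1}}\cdot\frac{2\sqrt{\beta_2}(1-\beta_1)}{\beta_1(1-\beta_2)},\qquad \frac{D\sqrt8}{\sqrt{1-\beta_1}}.
\]
Their sum is $\frac{D}{\sqrt{1-\beta_1}}$ times the second expression inside the $\max$ defining $C(R)$, which is at most $C(R)$. This yields the claim.

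\textbf{Main obstacle.} The delicate step is verifying (R1) and the Term~B manipulation under the constraint. In particular, we must check that $F_s(\rvz_{s+1})-F_{s+1}(\rvz_{s+1})$ is unchanged, since both iterates lie in $\mathcal{D}$ and the indicator vanishes there. We also need to confirm that Proposition~\ref{prop_most_difficult_one} never used the unclipped form of $\rvz_s$. If that proposition did rely on the unclipped form, the fallback is to bound the second term directly through $\alpha_s$ alone, as its statement suggests.
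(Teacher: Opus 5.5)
Your proposal is correct and follows the paper's proof. The paper likewise reruns Part~1 of Theorem~\ref{theorem_regret_bound} for the constrained FTRL learner, replaces the increment bound $\eta\sqrt{C(\beta)}$ by $D$ in \eqref{eq_regret_before_incremental_operation}, and sets $\eta = D/\sqrt{1-\beta_1}$ to land on the second entry of the max in $C(R)$. Your check of (R1) via the variational inequality over the ball, and of $F_{t+1}(\rvz_{t+1})\le F_{t+1}(\rvu)$ via $\rvu\in\mathcal{D}$, is more careful than the paper's one-line ``the same decomposition applies.''

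One caveat: because you invoke Proposition~\ref{prop_most_difficult_one}, your bound holds only for $t\ge T_0$. The corollary inherits this restriction implicitly from Theorem~\ref{theorem_regret_bound}, and the proof of Corollary~\ref{corollary_with_knowledge_D} uses it in exactly that form, so you should state it explicitly.
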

\begin{proof}
For the clipped method, $\rvz_s$ is the FTRL minimizer over the ball $\{\|\rvz\|\le D\}$, which is equivalent to clipping the unconstrained Adam increment. The same decomposition as in Part 1 of Theorem~\ref{theorem_regret_bound} applies, and we begin from \eqref{eq_regret_before_incremental_operation} but substitute deterministic increment bound term $\eta\sqrt{C(\beta)}$ as $D$
\begin{align*}
&\mathbb{E}\left[\text{Regret}^{[\beta]}_{t}(\rvu) \right] \\
&\le \left(\frac{D^{2}}{2\eta(1 - \beta_{1})} +  \eta\frac{2\sqrt{\beta_2}(1-\beta_1)}{\beta_1(1-\beta_2)} + \sqrt{\frac{8}{1 - \beta_{1}}}D\right)\\
&\qquad \times\left(G
+
2\sigma
(1-\beta_2)^{-\left(\frac1p-\frac12\right)} + \epsilon\right).
\end{align*}
Setting $\eta = \frac{D}{\sqrt{1 - \beta_{1}}}$ gives
\begin{align*}
\mathbb{E}\left[\text{Regret}^{[\beta]}_{t}(\rvu) \right] &\le \frac{D\left(G+2\sigma(1-\beta_2)^{-\left(\frac1p-\frac12\right)} + \epsilon\right)}{\sqrt{1 - \beta_{1}}}\\
&\qquad \times\left(\frac{1}{2} + \frac{2\sqrt{\beta_2}(1-\beta_1)}{\beta_1(1-\beta_2)} + \sqrt{8}\right).
\end{align*}
This concludes the proof.
\end{proof}

In the stationarity results below, $\epsilon$ denotes the target accuracy. The numerical stabilizer in the Adam denominator is treated as a fixed small constant and is omitted from the displayed convergence rates to avoid overloading notation.

\subsection{Convergence Guarantee under Heavy Tails}
\label{sec_convergence}

We now combine the discounted regret bound (Theorem~\ref{theorem_regret_bound}) with the heavy-tailed discounted-to-nonconvex conversion (Section~\ref{sec_dtnc}) to obtain a $(\rho,\epsilon)$-stationarity guarantee for Adam.
At a high level, the conversion inequality~\eqref{eq_stationary_point_guarantee} mainly involves two terms:
(i) a regret term,
and (ii) a variance term controlled by $\sup_t\mathbb{E}\|\rvz_t\|^2$.
Theorem~\ref{theorem_regret_bound} controls (i), Proposition~\ref{prop_staticincremental_bound} controls (ii), and we choose $(\beta_1,D,T)$ to balance all contributions.

\begin{restatable}{theorem}{TheoremFinalConvergence}
\label{theorem_final_convergence}
Assume $F$ satisfies Assumption~\ref{ass_func} and fix any $\rho>0$.
Run Algorithm~\ref{alg_dtnc} with online learner Adam as in Algorithm~\ref{alg_adam_learner}.
Then, for any $p\in (\frac{4}{3}, 2]$, there exists a random index $\tau\in[T]$ such that
\begin{align*}
\mathbb{E}_{\tau}\left[\Vert\nabla F(\widetilde{\rvx}_{\tau})\Vert^{[\rho]}\right] \leq \left(
8+C(R)+
2C(R)C(\beta)^{\frac1p-\frac12}
\right)\epsilon,
\end{align*}
where $\tau$ is a random index among $[T]$ such that $\mathbb{P}(\tau = t) = \begin{cases}
\frac{1-\beta_{1}^{t}}{T} & \text{if}\quad t = 1,\cdots,T-1,\\
\frac{1 - \beta_{1}^{T}}{(1-\beta_{1})T} & \text{if} \quad t = T
\end{cases}$.
Moreover, the $\rvy_t$ is randomly distributed over $\{\rvx_{s}\}_{s=1}^{t}$ as $\mathbb{P}(\rvy_t=\rvx_s) \;=\; \frac{(1-\beta_{1})\beta_{1}^{t-s}}{1-\beta_{1}^t}, s=1,\ldots,t$. 

In particular, it suffices to choose: 

$\beta_{1} 
=
1-
\left(
\frac{\epsilon}{G+\sigma}
\right)^{\frac{4p}{3p-4}}$, 
$\beta_2$ so that
\(
1-\beta_1 = C(\beta)(1-\beta_{2})
\)
for a fixed constant $C(\beta)\ge1$
,
$D=
\frac{(1-\beta_1)^{5/4}\epsilon^{1/2}}
{C(\beta)\rho^{1/2}}$, and 

$T = \mathcal{O}\Big(\max \Big\{\left(C(R)G+\sigma\right)^{\frac{8p- 4}{3p-4}}
\epsilon^{-\frac{8p -4}{3p-4}}\log\frac{G + \sigma}{\epsilon}, \\\Delta\rho^{\frac{1}{2}}(C(R)G+\sigma)^{\frac{5p}{3p-4}}\epsilon^{-\left(\frac{5p}{3p-4}+\frac32\right)} \Big\}\Big)$.

Moreover, in the special case $p=2$, the bound further reduces to
$T = \mathcal{O}\Big(\max \Big\{\left(C(R)G+\sigma\right)^{6}
\epsilon^{-6}\log\frac{G + \sigma}{\epsilon}, \\\Delta\rho^{\frac{1}{2}}(C(R)G+\sigma)^{5}\epsilon^{-\frac{13}{2}} \Big\}\Big)$.
\end{restatable}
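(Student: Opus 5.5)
We plug the Adam regret bound of Theorem~\ref{theorem_regret_bound} and the output bound of Proposition~\ref{prop_staticincremental_bound} into the conversion inequality~\eqref{eq_stationary_point_guarantee}. Then we choose $(\beta_1,D,T)$ so that every term is $\mathcal{O}(\epsilon)$. Throughout, $\beta=\beta_1$ and the comparator is fixed at $\rvu_t=\rvu$ with $\|\rvu\|\le D$.

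The regret bound only holds for $t\ge T_0$. We therefore split $\sum_{t=1}^T\mathbb{E}\mathrm{Regret}^{[\beta]}_t(\rvu_t)$ at $T_0$. For $t<T_0$ we use a crude bound: each regret is at most $\sum_{s\le t}\beta_1^{t-s}\mathbb{E}|\langle\rvv_s,\rvz_s-\rvu\rangle|\le t(G+\sigma)(\eta\sqrt{C(\beta)}+D)$, by Jensen ($\mathbb{E}\|\rvv_s\|\le G+\sigma$) and Proposition~\ref{prop_staticincremental_bound}. Since $\beta_2=1-(1-\beta_1)/C(\beta)$, we have $\log(\sqrt{\beta_2}/\beta_1)\gtrsim(1-\beta_1)$, so $T_0=\mathcal{O}\big((1-\beta_1)^{-1}\log\frac{1}{1-\beta_1}\big)$. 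Its contribution, divided by $DT$, is small once $T\gtrsim T_0^2(1-\beta_1)^{3/4}$ times constants, and this is where the logarithmic first term of the $T$ requirement should come from. For $t\ge T_0$, Theorem~\ref{theorem_regret_bound} and $(1-\beta_2)^{-(1/p-1/2)}=C(\beta)^{1/p-1/2}(1-\beta_1)^{-(1/p-1/2)}$ give
\[
\frac{(1-\beta_1)\sum_t\mathbb{E}\mathrm{Regret}_t+\beta_1\mathbb{E}\mathrm{Regret}_T}{DT}\lesssim C(R)\big(G+2\sigma C(\beta)^{\frac1p-\frac12}(1-\beta_1)^{-(\frac1p-\frac12)}\big)\Big((1-\beta_1)^{1/4}+\frac{1}{T(1-\beta_1)^{3/4}}\Big).
\]
The variance term is bounded by $2\rho\eta^2C(\beta)/(1-\beta_1)^2=2\rho D^2C(\beta)(1-\beta_1)^{-5/2}$. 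The noise term is $\lesssim\sigma(1-\beta_1)^{1-1/p}$ once $T\ge(1-\beta_1)^{-1}$. The remaining term is $\Delta/(DT)$.

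Next we balance. The dominant regret piece is $\sigma(1-\beta_1)^{1/4-1/p+1/2}=\sigma(1-\beta_1)^{(3p-4)/(4p)}$. This is where $p>4/3$ is needed, and setting it equal to $\epsilon/(G+\sigma)$ yields $1-\beta_1=(\epsilon/(G+\sigma))^{4p/(3p-4)}$. The $G(1-\beta_1)^{1/4}$ piece and the $\sigma(1-\beta_1)^{1-1/p}$ noise piece are then smaller, since their exponents on $1-\beta_1$ are larger. Choosing $D=(1-\beta_1)^{5/4}\epsilon^{1/2}/(C(\beta)\rho^{1/2})$ makes the variance term equal $2\epsilon/C(\beta)\le2\epsilon$. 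The $\Delta/(DT)\le\epsilon$ requirement gives $T\ge\Delta\rho^{1/2}C(\beta)\epsilon^{-3/2}(1-\beta_1)^{-5/4}=\Delta\rho^{1/2}((G+\sigma)/\epsilon)^{5p/(3p-4)}\epsilon^{-3/2}$, which is the second term. The $1/(T(1-\beta_1)^{3/4})$ regret piece and the $T_0$ burn-in need $T\gtrsim(1-\beta_1)^{-2}\log$, i.e.\ $((G+\sigma)/\epsilon)^{8p/(3p-4)}$ up to logs. This should be reconciled with the stated $(8p-4)/(3p-4)$ exponent by tracking exactly how the burn-in regret scales. Collecting constants gives $(8+C(R)+2C(R)C(\beta)^{1/p-1/2})\epsilon$, and setting $p=2$ gives exponents $5$ and $13/2$.

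The main obstacle is the burn-in handling for $t<T_0$. We need the cheapest valid bound on early regrets and a precise count of $T_0$ in terms of $1-\beta_1$, and this is what fixes the exact exponent in the first term of $T$. If the crude bound is too lossy, we would instead use the deterministic inequality~\eqref{eq_important_deter_result}, which holds for all $t$, and bound its middle sum directly via the clipping $\|\overline{\rvv}_s\|\le\sqrt{\sum_n\beta_2^{2s-2n}\|\rvv_n\|^2}$ rather than Proposition~\ref{prop_most_difficult_one}.
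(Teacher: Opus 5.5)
Your plan follows the same route as the paper's proof: split the regret sum at $T_0$, use a crude bound for $t<T_0$, Theorem~\ref{theorem_regret_bound} for $t\ge T_0$, and Proposition~\ref{prop_staticincremental_bound} for the variance term, then pick $\beta_1$ by balancing $\sigma(1-\beta_1)^{\frac34-\frac1p}$ against $\epsilon$. Your choices of $\beta_1$ and $D$, the variance bound $2\beta_1\epsilon/C(\beta)$, and the $\Delta$-requirement are all correct. The gap is the burn-in term, which produces the first entry of the stated $T$ and which you leave unresolved.

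Your final accounting asks for $T\gtrsim(1-\beta_1)^{-2}\log$, i.e.\ $\big((G+\sigma)/\epsilon\big)^{\frac{8p}{3p-4}}$. This is not a cosmetic mismatch. Since $\frac{8p}{3p-4}>\frac{5p}{3p-4}+\frac32$ for every $p<4$, that requirement would dominate the $\Delta$-term and turn the claimed $\epsilon^{-13/2}$ at $p=2$ into $\epsilon^{-8}$. So, as written, the proposal does not establish the stated complexity. It also contradicts your own earlier estimate $T\gtrsim T_0^2(1-\beta_1)^{3/4}$, which itself silently dropped the non-constant factor $(G+\sigma)/\epsilon$.

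The repair is bookkeeping. Your per-round bound $t(G+\sigma)(\eta\sqrt{C(\beta)}+D)$, carried through correctly, contributes at most $\frac{(1-\beta_1)T_0^2}{2T}(G+\sigma)\big(1+\sqrt{C(\beta)}(1-\beta_1)^{-1/4}\big)$. You then need $T\gtrsim(1-\beta_1)^{3/4}T_0^2(G+\sigma)/\epsilon$. With $T_0=O\big((1-\beta_1)^{-1}\log\frac{1}{1-\beta_1}\big)$, this is $\big((G+\sigma)/\epsilon\big)^{\frac{8p-4}{3p-4}}$ times $\log^2$. That is the right exponent but one logarithm too many, because replacing $\sum_{s\le t}\beta_1^{t-s}$ by $t$ discards the discount.

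The paper keeps the discount and exchanges the order of summation:
\[
(1-\beta_1)\sum_{t<T_0}\sum_{s\le t}\beta_1^{t-s}\mathbb{E}\|\rvv_s\|
=\sum_{s<T_0}\Big((1-\beta_1)\sum_{t=s}^{T_0-1}\beta_1^{t-s}\Big)\mathbb{E}\|\rvv_s\|
\le T_0(G+\sigma).
\]
The burn-in then costs $\frac{T_0}{T}(G+\sigma)\big(1+\sqrt{C(\beta)}(1-\beta_1)^{-1/4}\big)$. You need $T\gtrsim T_0(G+\sigma)\epsilon^{-1}(1-\beta_1)^{-1/4}$, whose exponent is $\frac{4p}{3p-4}+1+\frac{p}{3p-4}=\frac{8p-4}{3p-4}$ with a single logarithm ($\epsilon^{-6}\log$ at $p=2$). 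This condition also enforces $T\ge T_0$.

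The $\beta_1\,\mathrm{Regret}_T/(DT)$ piece, proportional to $1/(T(1-\beta_1)^{3/4})$, should not be lumped in with the burn-in. Its worst ($\sigma$) part, $\sigma(1-\beta_1)^{-(\frac14+\frac1p)}/T$, only needs $T\gtrsim\big((G+\sigma)/\epsilon\big)^{\frac{4p}{3p-4}}$. With these corrections, no fallback to inequality~(\ref{eq_important_deter_result}) is needed.

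A minor point: the comparator in the conversion is the trajectory-dependent direction $-D\,\frac{\sum_{t=1}^{n}\beta_1^{n-t}\nabla F(\rvx_t)}{\|\sum_{t=1}^{n}\beta_1^{n-t}\nabla F(\rvx_t)\|}$, not a fixed $\rvu$. This is harmless, since every regret bound you use depends on the comparator only through $\|\rvu\|\le D$.
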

For readability, the full max-form conditions on $T$, including lower-order and burn-in terms, and the proof of Theorem~\ref{theorem_final_convergence}, are given in Appendix~\ref{append_sec_guarantees}.

\paragraph{Discussion of rates.}
Theorem~\ref{theorem_final_convergence} implies that Adam converges in expectation to a $(\rho, \epsilon)$-stationary with an $\mathcal{O}\left(\epsilon^{-3}\right)$ gap compared to the optimal rate under the same framework~\citep{zhang2024random, liu2025online}, in the bounded-variance case $p=2$.
This provides a rigorous convergence guarantee for the \emph{plain} Adam update, albeit at a sub-optimal rate.

\subsection{Guarantee with known domain radius}
\label{sec_gurantee_with_D}

In this case, we use the constrained-FTRL version of Adam over $\{\|\rvz\|\le D\}$, equivalently clipping the online-learner output as $\rvz_t \leftarrow \mathrm{clip}_D(\rvz_t)$.
This directly controls the variance term in the conversion bound and improves the overall iteration complexity.

\begin{restatable}{corollary}{CorollaryWithKnowledgeD}
\label{corollary_with_knowledge_D}
Use the same notation and assumptions as in Theorem~\ref{theorem_final_convergence}.
Suppose the Adam online learner is run in constrained form over $\{\|\rvz\|\le D\}$, equivalently replacing $\rvz_t$ by $\mathrm{clip}_{D}(\rvz_t)$ in the conversion algorithm.
Then, for any $p \in (1,2]$, the method guarantees a $(\rho, \epsilon)$-stationary point 
when choosing 

$\beta_1
=
1-
\left(
\frac{\epsilon}{G+\sigma}
\right)^{\frac{p}{p-1}}$, $\beta_2$ so that
\(
1-\beta_1 = C(\beta)(1-\beta_{2})
\)
for a fixed constant $C(\beta)\ge1$,
$D=
\frac{(1-\beta_1)\epsilon^{1/2}}
{\rho^{1/2}}$, and

$T = \mathcal{O}\Big(\max \Big\{\left(G+\sigma\right)^{\frac{p}{p-1} + 1}
\epsilon^{-\left(\frac{p}{p-1}+1\right)}\log\frac{G + \sigma}{\epsilon}, \\\Delta\rho^{\frac{1}{2}}
\left(C(R)G+\sigma\right)^{\frac{p}{p-1}}
\epsilon^{-\left(\frac{p}{p-1}+\frac32\right)} \Big\}\Big)$.

Moreover, in the special case $p=2$, the bound further reduces to
$T = \mathcal{O}\Big(\max \Big\{\left(G+\sigma\right)^{3}
\epsilon^{-3}\log\frac{G + \sigma}{\epsilon}, \\\Delta\rho^{\frac{1}{2}}(G +\sigma)^{2}\epsilon^{-\frac{7}{2}} \Big\}\Big)$.
\end{restatable}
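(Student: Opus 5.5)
The plan is to plug Corollary~\ref{corollary_inc_norm_bound} into the conversion inequality \eqref{eq_stationary_point_guarantee}, which rests on Proposition~\ref{prop_heavy_tail_conversion} and Lemma~\ref{lemma_bounding_second_term}. We then substitute the stated choices of $\beta_1$, $\beta_2$ and $D$, and check that each of the four resulting terms is $\mathcal{O}(\epsilon)$ once $T$ is as large as stated.

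\textbf{Variance term.} Clipping makes $\|\rvz_t\|\le D$ deterministically, so Lemma~\ref{lemma_bounding_second_term} bounds this term by $2\rho\beta_1 D^2/(1-\beta_1)^2$. With $D=(1-\beta_1)\epsilon^{1/2}\rho^{-1/2}$ this is at most $2\epsilon$. This is exactly where knowing $D$ helps: in Theorem~\ref{theorem_final_convergence} we could only use $\|\rvz_t\|\le \eta\sqrt{C(\beta)}$, which forced the extra power $(1-\beta_1)^{5/4}$ in $D$.

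\textbf{Noise term.} The term $2(1-\beta_1+\beta_1/T)(1-\beta_1)^{-1/p}\sigma$ is at most $4(1-\beta_1)^{1-1/p}\sigma$ as soon as $T\ge 1/(1-\beta_1)$. With $1-\beta_1=(\epsilon/(G+\sigma))^{p/(p-1)}$ we get $(1-\beta_1)^{(p-1)/p}=\epsilon/(G+\sigma)$, so this term is at most $4\epsilon$.

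\textbf{Regret term.} Apply Corollary~\ref{corollary_inc_norm_bound} to each $\mathrm{Regret}_t^{[\beta]}(\rvu_t)$ with $t\ge T_0$, taking the comparator $\rvu_t$ in the ball of radius $D$. Weighting by $(1-\beta_1)\sum_t$ plus $\beta_1$ times the last regret, and dividing by $DT$, gives
\[
(1+\beta_1)\,C(R)\bigl(G+2\sigma(1-\beta_2)^{-(1/p-1/2)}\bigr)(1-\beta_1)^{-1/2}.
\]
Here I use the rate as it comes out of \eqref{eq_regret_before_incremental_operation} with $\eta=D/\sqrt{1-\beta_1}$; I also need the standard fact that the per-round regret is scaled by $(1-\beta_1)$ inside the conversion. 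Since $1-\beta_2=(1-\beta_1)/C(\beta)$, the factor $(1-\beta_2)^{-(1/p-1/2)}$ becomes $C(\beta)^{1/p-1/2}(1-\beta_1)^{-(1/p-1/2)}$. The $\sigma$ part then scales as $(1-\beta_1)^{1/2}(1-\beta_1)^{-(1/p-1/2)}=(1-\beta_1)^{1-1/p}$, which equals $\epsilon/(G+\sigma)$. So that piece is $\mathcal{O}(C(R)C(\beta)^{1/p-1/2}\epsilon)$, which matches the constant appearing in Theorem~\ref{theorem_final_convergence}. The $G$ part similarly needs $(1-\beta_1)^{1/2}G\lesssim\epsilon$, which holds because $p/(p-1)\ge2$.

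\textbf{Burn-in rounds.} The rounds $t<T_0$ contribute regret of order $T_0 D(G+\sigma)$, using the crude bound $\|\rvz_t-\rvu\|\le 2D$. After the $1/(DT)$ normalization this requires $T\gtrsim T_0(G+\sigma)/\epsilon$. Since $T_0=\mathcal{O}(\log(1/(1-\beta_1))/(1-\beta_1))$, this produces the term $(G+\sigma)^{p/(p-1)+1}\epsilon^{-(p/(p-1)+1)}\log\frac{G+\sigma}{\epsilon}$.

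\textbf{$\Delta$ term.} We need $\Delta/(DT)\le\epsilon$, that is $T\ge \Delta\rho^{1/2}\epsilon^{-3/2}(1-\beta_1)^{-1}$, which is exactly $\Delta\rho^{1/2}(G+\sigma)^{p/(p-1)}\epsilon^{-(p/(p-1)+3/2)}$.

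Taking the maximum of these requirements gives $T$, and setting $p=2$ yields $\epsilon^{-3}\log$ and $\epsilon^{-7/2}$.

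\textbf{Main obstacle.} The delicate step is the regret term. It requires careful bookkeeping of the $(1-\beta_1)$ weighting, the $C(\beta)$ conversion of the $\beta_2$-dependent heavy-tail factor, and the pre-$T_0$ rounds where Proposition~\ref{prop_most_difficult_one} does not apply. I would handle those early rounds with the crude $2D(G+\sigma)$-per-round bound, noting that the discount weights are at most $1$.
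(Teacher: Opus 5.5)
Your proposal is correct and follows the paper's proof essentially step for step. The shared steps are: the conversion bound \eqref{eq_stationary_point_guarantee}; Corollary~\ref{corollary_inc_norm_bound} for $t\ge T_0$; a crude $2D$-per-round bound for the burn-in rounds; the clipped variance bound $2\rho\beta_1 D^2/(1-\beta_1)^2\le 2\epsilon$; the $C(\beta)$ rewriting of $(1-\beta_2)^{-(1/p-1/2)}$; and the same choices of $\beta_1$, $D$ and $T$. Two small points. First, your displayed regret contribution should carry $(1-\beta_1)^{1/2}$, not $(1-\beta_1)^{-1/2}$, once the $(1-\beta_1)$ weighting and $T\ge 1/(1-\beta_1)$ are applied; your subsequent computation already uses the correct power. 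Second, the burn-in estimate of order $T_0D(G+\sigma)$ comes from swapping the sums and using $(1-\beta_1)\sum_{t\ge s}\beta_1^{t-s}\le 1$, as the paper does. Bounding the discount weights by $1$ alone would cost an extra logarithmic factor.
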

The proof of Corollary~\ref{corollary_with_knowledge_D} is deferred to Appendix~\ref{append_sec_guarantees}.

\begin{remark}
Corollary~\ref{corollary_with_knowledge_D} matches the known optimal heavy-tail exponent for stochastic non-smooth non-convex optimization under the same conversion framework, up to constants and lower-order terms~\citep{zhang2024random, ahn2024adam, liu2025online}.

Moreover, under additional smoothness assumptions, these rates translate into $\mathcal{O}(\epsilon^{-4})$ complexity for standard first-order stationarity, and admit natural extensions to coordinate-wise versions of Adam targeting $L_1$-stationarity under standard coordinate-wise assumptions; we omit the details since the extension follows existing arguments~\citep{ahn2024adam}.
\end{remark}

\section{Conclusion and Discussion}
In this work, we established convergence guarantees for Adam under heavy-tailed noise.
Our analysis proceeds by (i) extending the discounted-to-nonconvex conversion framework to heavy-tailed martingale-difference noise, and (ii) developing a discounted regret analysis for exact vector-form Adam. Together, these ingredients yield $(\rho,\epsilon)$-stationarity guarantees under only bounded $p$-th moments.
For plain Adam, the current analysis applies for $p>4/3$ and gives the leading exponent $\frac{5p}{3p-4}+\frac32$ in the target accuracy.
When the domain radius is known and used to control the online-learner output, the exponent improves to the optimal heavy-tail rate $\frac{p}{p-1}+\frac32$ for all $p\in(1,2]$.
In particular, for $p=2$, the plain-Adam bound scales as $\mathcal{O}(\epsilon^{-13/2})$, whereas the known-radius version scales as $\mathcal{O}(\epsilon^{-7/2})$.

\paragraph{On the selection of $\beta_{2}$ and numerical stabilizer.}
Our regret analysis uses a mild relation between $\beta_1$ and $\beta_2$ ($\beta_{1}\le \beta_{2}$ in Remark~\ref{remark_beta2_Cbeta}), written as $\frac{1-\beta_1}{1-\beta_2}\le C(\beta)$, to obtain deterministic control of the Adam increment.
This condition allows $C(\beta)$ to be a small constant and does not require the restrictive coupling.
The numerical stabilizer in Adam’s denominator is treated as a fixed small constant and is not required to scale with either $G$ or $\sigma$.

\paragraph{An open question.}
Corollary~\ref{corollary_with_knowledge_D} shows that explicit norm control of the increment suffices to close the optimality gap, yielding optimal iteration complexity.
This raises a natural question:
\emph{Can the plain Adam update, or a closely related variant such as AdamW, without access to $D$ for output clipping, still attain optimal convergence rates under heavy-tailed noise?}
Answering this question would clarify whether explicit norm control is fundamentally necessary, or merely an artifact of the current analytical techniques.

\bibliography{references}

@article{orabona2019modern,
  title={A modern introduction to online learning},
  author={Orabona, Francesco},
  journal={arXiv preprint arXiv:1912.13213},
  year={2019}
}

@article{zhang2019gradient,
  title={Why gradient clipping accelerates training: A theoretical justification for adaptivity},
  author={Zhang, Jingzhao and He, Tianxing and Sra, Suvrit and Jadbabaie, Ali},
  journal={arXiv preprint arXiv:1905.11881},
  year={2019}
}

@article{yu2026sign,
  title={Sign-Based Optimizers Are Effective Under Heavy-Tailed Noise},
  author={Yu, Dingzhi and Tao, Hongyi and Wan, Yuanyu and Luo, Luo and Zhang, Lijun},
  journal={arXiv preprint arXiv:2602.07425},
  year={2026}
}

@article{chezhegov2024clipping,
  title={Clipping improves adam-norm and adagrad-norm when the noise is heavy-tailed},
  author={Chezhegov, Savelii and Klyukin, Yaroslav and Semenov, Andrei and Beznosikov, Aleksandr and Gasnikov, Alexander and Horv{\'a}th, Samuel and Tak{\'a}{\v{c}}, Martin and Gorbunov, Eduard},
  journal={arXiv preprint arXiv:2406.04443},
  year={2024}
}

@article{pinelis2015best,
  title={Best possible bounds of the von Bahr--Esseen type},
  author={Pinelis, Iosif},
  journal={Annals of Functional Analysis},
  volume={6},
  number={4},
  pages={1--29},
  year={2015},
  publisher={Tusi Mathematical Research Group}
}

@article{ahn2024general,
  title={General framework for online-to-nonconvex conversion: Schedule-free SGD is also effective for nonconvex optimization},
  author={Ahn, Kwangjun and Magakyan, Gagik and Cutkosky, Ashok},
  journal={arXiv preprint arXiv:2411.07061},
  year={2024}
}

@article{Tim2021Why,
  author = {Tim, van Erven},
  title = {Why FTRL Is Better than Online Mirror Descent},
  year = 2021,
  journal = {https://www.timvanerven.nl/blog/ftrl-vs-omd/},
  urldate = {2021-09-21}
}

@article{ahn2024adam,
  title={Adam with model exponential moving average is effective for nonconvex optimization},
  author={Ahn, Kwangjun and Cutkosky, Ashok},
  journal={arXiv preprint arXiv:2405.18199},
  year={2024}
}

@article{ahn2024understanding,
  title={Understanding Adam Optimizer via Online Learning of Updates: Adam is FTRL in Disguise},
  author={Ahn, Kwangjun and Zhang, Zhiyu and Kook, Yunbum and Dai, Yan},
  journal={arXiv preprint arXiv:2402.01567},
  year={2024}
}

@inproceedings{cutkosky2023optimal,
  title={Optimal stochastic non-smooth non-convex optimization through online-to-non-convex conversion},
  author={Cutkosky, Ashok and Mehta, Harsh and Orabona, Francesco},
  booktitle={International Conference on Machine Learning},
  pages={6643--6670},
  year={2023},
  organization={PMLR}
}

@article{zhang2024random,
  title={Random Scaling and Momentum for Non-smooth Non-convex Optimization},
  author={Zhang, Qinzi and Cutkosky, Ashok},
  journal={arXiv preprint arXiv:2405.09742},
  year={2024}
}

@article{ilboudo2023adaterm,
  title={Adaterm: Adaptive t-distribution estimated robust moments for noise-robust stochastic gradient optimization},
  author={Ilboudo, Wendyam Eric Lionel and Kobayashi, Taisuke and Matsubara, Takamitsu},
  journal={Neurocomputing},
  volume={557},
  pages={126692},
  year={2023},
  publisher={Elsevier}
}

@article{ahn2023linear,
  title={Linear attention is (maybe) all you need (to understand transformer optimization)},
  author={Ahn, Kwangjun and Cheng, Xiang and Song, Minhak and Yun, Chulhee and Jadbabaie, Ali and Sra, Suvrit},
  journal={arXiv preprint arXiv:2310.01082},
  year={2023}
}

@inproceedings{liu2023breaking,
  title={Breaking the lower bound with (little) structure: Acceleration in non-convex stochastic optimization with heavy-tailed noise},
  author={Liu, Zijian and Zhang, Jiawei and Zhou, Zhengyuan},
  booktitle={The Thirty Sixth Annual Conference on Learning Theory},
  pages={2266--2290},
  year={2023},
  organization={PMLR}
}

@article{zhang2022parameter,
  title={Parameter-free regret in high probability with heavy tails},
  author={Zhang, Jiujia and Cutkosky, Ashok},
  journal={Advances in Neural Information Processing Systems},
  volume={35},
  pages={8000--8012},
  year={2022}
}

@article{fradin2025tight,
  title={Tight Lower Bounds and Optimal Algorithms for Stochastic Nonconvex Optimization with Heavy-Tailed Noise},
  author={Fradin, Adrien and Sadiev, Abdurakhmon and Condat, Laurent and Richt{\'a}rik, Peter},
  journal={arXiv preprint arXiv:2512.18713},
  year={2025}
}

@article{sun2024gradient,
  title={Gradient normalization provably benefits nonconvex sgd under heavy-tailed noise},
  author={Sun, Tao and Liu, Xinwang and Yuan, Kun},
  journal={arXiv preprint arXiv:2410.16561},
  pages={5},
  year={2024}
}

@article{he2025complexity,
  title={Complexity of normalized stochastic first-order methods with momentum under heavy-tailed noise},
  author={He, Chuan and Lu, Zhaosong and Sun, Defeng and Deng, Zhanwang},
  journal={arXiv preprint arXiv:2506.11214},
  year={2025}
}

@article{robbins1951stochastic,
  title={A stochastic approximation method},
  author={Robbins, Herbert and Monro, Sutton},
  journal={The annals of mathematical statistics},
  pages={400--407},
  year={1951},
  publisher={JSTOR}
}

@article{zhang2020adaptive,
  title={Why are adaptive methods good for attention models?},
  author={Zhang, Jingzhao and Karimireddy, Sai Praneeth and Veit, Andreas and Kim, Seungyeon and Reddi, Sashank and Kumar, Sanjiv and Sra, Suvrit},
  journal={Advances in Neural Information Processing Systems},
  volume={33},
  pages={15383--15393},
  year={2020}
}

@inproceedings{garg2021proximal,
  title={On proximal policy optimization’s heavy-tailed gradients},
  author={Garg, Saurabh and Zhanson, Joshua and Parisotto, Emilio and Prasad, Adarsh and Kolter, Zico and Lipton, Zachary and Balakrishnan, Sivaraman and Salakhutdinov, Ruslan and Ravikumar, Pradeep},
  booktitle={International Conference on Machine Learning},
  pages={3610--3619},
  year={2021},
  organization={PMLR}
}

@inproceedings{battash2024revisiting,
  title={Revisiting the noise model of stochastic gradient descent},
  author={Battash, Barak and Wolf, Lior and Lindenbaum, Ofir},
  booktitle={International Conference on Artificial Intelligence and Statistics},
  pages={4780--4788},
  year={2024},
  organization={PMLR}
}

@article{nguyen2023improved,
  title={Improved convergence in high probability of clipped gradient methods with heavy tails},
  author={Nguyen, Ta Duy and Ene, Alina and Nguyen, Huy L},
  journal={arXiv preprint arXiv:2304.01119},
  year={2023}
}

@article{gorbunov2020stochastic,
  title={Stochastic optimization with heavy-tailed noise via accelerated gradient clipping},
  author={Gorbunov, Eduard and Danilova, Marina and Gasnikov, Alexander},
  journal={Advances in Neural Information Processing Systems},
  volume={33},
  pages={15042--15053},
  year={2020}
}

@article{goldstein1977optimization,
  title={Optimization of Lipschitz continuous functions},
  author={Goldstein, Allen A},
  journal={Mathematical Programming},
  volume={13},
  pages={14--22},
  year={1977},
  publisher={Springer}
}

@inproceedings{jordan2023deterministic,
  title={Deterministic nonsmooth nonconvex optimization},
  author={Jordan, Michael and Kornowski, Guy and Lin, Tianyi and Shamir, Ohad and Zampetakis, Manolis},
  booktitle={The Thirty Sixth Annual Conference on Learning Theory},
  pages={4570--4597},
  year={2023},
  organization={PMLR}
}

@inproceedings{tian2022finite,
  title={On the finite-time complexity and practical computation of approximate stationarity concepts of lipschitz functions},
  author={Tian, Lai and Zhou, Kaiwen and So, Anthony Man-Cho},
  booktitle={International Conference on Machine Learning},
  pages={21360--21379},
  year={2022},
  organization={PMLR}
}

@article{kingma2014adam,
  title={Adam: A method for stochastic optimization},
  author={Kingma, Diederik P},
  journal={arXiv preprint arXiv:1412.6980},
  year={2014}
}

@article{liu2025online,
  title={Online Convex Optimization with Heavy Tails: Old Algorithms, New Regrets, and Applications},
  author={Liu, Zijian},
  journal={arXiv preprint arXiv:2508.07473},
  year={2025}
}

@article{von1965inequalities,
  title={Inequalities for the rth absolute moment of a sum of random variables, 1 $\le$ r $\le$ 2},
  author={von Bahr, Bengt and Esseen, Carl-Gustav},
  journal={The Annals of Mathematical Statistics},
  pages={299--303},
  year={1965},
  publisher={JSTOR}
}

\clearpage

\onecolumn

\title{Supplementary Material}
\maketitle

\appendix
\section{Missing Proofs of Section~\ref{sec_dtnc}}
\label{append_sec_dtnc}

\LemmaDiscountedSumIID*
\begin{proof}
Using $\|\rvu_n\|\le D$ and Cauchy--Schwarz, we have
\[
\sum_{t=1}^{n}\beta^{n-t}\langle \boldsymbol{\xi}_t,\rvu_n\rangle
=
\Bigl\langle \sum_{t=1}^{n}\beta^{n-t}\boldsymbol{\xi}_t,\ \rvu_n\Bigr\rangle
\le
D\Bigl\|\sum_{t=1}^{n}\beta^{n-t}\boldsymbol{\xi}_t\Bigr\|.
\]
Taking expectation and applying Jensen's inequality (since $x\mapsto x^p$ is convex for $p\ge 1$) yields
\begin{align*}
\mathbb{E}\sum_{t=1}^{n}\beta^{n-t}\langle \boldsymbol{\xi}_t,\rvu_n\rangle
&\le
D\,\mathbb{E}\left[\Bigl\|\sum_{t=1}^{n}\beta^{n-t}\boldsymbol{\xi}_t\Bigr\|\right]
\le
D\Biggl(\mathbb{E}\left[\Bigl\|\sum_{t=1}^{n}\beta^{n-t}\boldsymbol{\xi}_t\Bigr\|^p\right]\Biggr)^{1/p}. \tag{R1}
\end{align*}

We now bound the $p$-moment of the weighted sum using the unbiasedness condition
$\mathbb{E}[\boldsymbol{\xi}_t\mid\mathcal{F}_{t-1}]=\mathbf{0}$ and $p\in(1,2]$.
Define $\rvd_t \triangleq \beta^{n-t}\boldsymbol{\xi}_t$. Since $\beta^{n-t}$ is deterministic,
$\mathbb{E}[\rvd_t\mid\mathcal{F}_{t-1}]=\beta^{n-t}\mathbb{E}[\boldsymbol{\xi}_t\mid\mathcal{F}_{t-1}]=\mathbf{0}$,
so $\{\rvd_t,\mathcal{F}_t\}$ is a martingale-difference sequence.

By a von Bahr--Esseen type inequality for martingale differences for $p\in(1,2]$~\citep{von1965inequalities, pinelis2015best}, there exists a constant $C_p\le 2$ such that
\begin{align*}
\mathbb{E}\Bigl\|\sum_{t=1}^{n}\beta^{n-t}\boldsymbol{\xi}_t\Bigr\|^p
=
\mathbb{E}\Bigl\|\sum_{t=1}^{n}\rvd_t\Bigr\|^p
&\le
C_p\sum_{t=1}^{n}\mathbb{E}\|\rvd_t\|^p
=
C_p\sum_{t=1}^{n}\beta^{p(n-t)}\,\mathbb{E}\|\boldsymbol{\xi}_t\|^p \\
&\le
C_p\,\sigma^p\sum_{k=0}^{n-1}\beta^{pk}
\le
\frac{C_p\,\sigma^p}{1-\beta^p}
\le
\frac{C_p\,\sigma^p}{1-\beta}. \tag{R2}
\end{align*}

Combining (R1) and (R2) gives
\begin{align*}
\mathbb{E}\sum_{t=1}^{n}\beta^{n-t}\langle \boldsymbol{\xi}_t,\rvu_n\rangle
&\le
D\cdot \frac{C_p^{1/p}\sigma}{(1-\beta^p)^{1/p}}
\le
D\cdot \frac{C_p^{1/p}\sigma}{(1-\beta)^{1/p}}
\le
2(1-\beta)^{-1/p}\sigma D,
\end{align*}
where the last inequality uses $C_p\le 2$.
\end{proof}

\begin{restatable}[]{lemma}{LemmaKeySingleInConversion}
\label{lem_key_single_in_conversion}
For $\beta \in (0,1)$, consider the iterates generated as per~\ref{alg_dtnc}. Then, $\forall n\in [T]$, we have
\begin{align*}
\mathbb{E}\left[\sum_{t=1}^{n}\beta^{n-t}\bigl(F(\rvx_t)-F(\rvx_{t-1})\bigr)\right]\le
-D\frac{1 - \beta^{n}}{1 - \beta}\mathbb{E}\Bigl\|\mathbb{E}_{\rvy_n}\nabla F(\rvy_n)\Bigr\| + 2(1 - \beta)^{-\frac{1}{p}}\sigma D + \mathbb{E}\mathrm{Regret}^{[\beta]}_n(\rvu_n),
\end{align*}
where $\rvy_n$ is randomly distributed over $\{\rvx_{t}\}_{t=1}^{n}$ as $\mathbb{P}(\rvy_n=\rvx_t) \;=\; \frac{(1-\beta)\beta^{n-t}}{1-\beta^n}, t=1,\ldots,n$.
\end{restatable}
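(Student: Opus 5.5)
The plan is to run the standard discounted-to-nonconvex argument (as in Lemma 7 of \citet{ahn2024adam}). We telescope function decrements through the well-behaved integral identity and exploit the exponential random scaling $\kappa_t$. The only change is in the noise term, which we handle with Lemma~\ref{lemma_discounted_sum_iid}.

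\textbf{Step 1 (exact per-step identity).} Since $\rvx_t=\rvx_{t-1}+\kappa_t\rvz_t$ with $\kappa_t\sim\mathrm{Exp}(1)$ independent, the well-behavedness in Assumption~\ref{ass_func} gives $F(\rvx_t)-F(\rvx_{t-1})=\int_0^1\langle\nabla F(\rvx_{t-1}+s\kappa_t\rvz_t),\kappa_t\rvz_t\rangle ds$. The memoryless property of the exponential distribution (Lemma 3.1 of \citet{zhang2024random}) then yields
\[
\mathbb{E}[F(\rvx_t)-F(\rvx_{t-1})]=\mathbb{E}\langle\nabla F(\rvx_t),\rvz_t\rangle .
\]
Because $\rvg_t$ is queried at $\rvx_t$ and is conditionally unbiased, this equals $\mathbb{E}\langle\rvg_t,\rvz_t\rangle$. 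In the notation of Assumption~\ref{ass_vector_assumption}, with $\rvv_t=\rvg_t$ and $\boldsymbol{\mu}_t=\nabla F(\rvx_t)$, we have $\mathbb{E}\langle\boldsymbol{\xi}_t,\rvz_t\rangle=0$. This holds because $\rvz_t$ is $\mathcal{F}_{t-1}$-measurable, where the filtration also contains $\kappa_t$.

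\textbf{Step 2 (introduce comparator and regret).} Multiply by $\beta^{n-t}$, sum over $t\le n$, and add and subtract $\rvu_n$:
\[
\sum_{t=1}^n\beta^{n-t}\langle\rvg_t,\rvz_t\rangle=\mathrm{Regret}^{[\beta]}_n(\rvu_n)+\sum_{t=1}^n\beta^{n-t}\langle\nabla F(\rvx_t),\rvu_n\rangle+\sum_{t=1}^n\beta^{n-t}\langle\boldsymbol{\xi}_t,\rvu_n\rangle .
\]
We take the comparator
\[
\rvu_n=-D\,\frac{\sum_t\beta^{n-t}\nabla F(\rvx_t)}{\|\sum_t\beta^{n-t}\nabla F(\rvx_t)\|},
\]
which is allowed because the statement holds for any comparator and this one lies in $\mathcal{D}$. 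The middle term then equals $-D\|\sum_t\beta^{n-t}\nabla F(\rvx_t)\|$. By the definition of $\rvy_n$, this is $-D\frac{1-\beta^n}{1-\beta}\|\mathbb{E}_{\rvy_n}\nabla F(\rvy_n)\|$.

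\textbf{Step 3 (noise term; main obstacle).} The delicate point is that $\rvu_n$ depends on the whole trajectory, including $\boldsymbol{\xi}_t$ for $t\le n$. So the noise term is not a martingale sum against a fixed vector, and its expectation does not vanish. We avoid needing it to vanish. Lemma~\ref{lemma_discounted_sum_iid} is proved via Cauchy--Schwarz, giving $\sum\beta^{n-t}\langle\boldsymbol{\xi}_t,\rvu_n\rangle\le D\|\sum\beta^{n-t}\boldsymbol{\xi}_t\|$ pathwise for any $\rvu_n\in\mathcal{D}$, adapted or not. The subsequent von Bahr--Esseen bound then yields $2(1-\beta)^{-1/p}\sigma D$.

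Taking expectations and combining Steps 1--3 gives the claimed inequality. The only other care needed is a measurability check in Step 1: $\rvz_t$ must be determined before $\kappa_t$ and $\rvg_t$ are drawn, which holds by the construction of Algorithm~\ref{alg_dtnc}.
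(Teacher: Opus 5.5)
Your proposal is correct and follows the paper's proof essentially step for step: the exponential-scaling identity (Lemma 3.1 of Zhang--Cutkosky), the same three-way split of $\mathbb{E}\langle \nabla F(\rvx_t),\rvz_t\rangle$ with the normalized comparator $\rvu_n=-D\sum_t\beta^{n-t}\nabla F(\rvx_t)/\|\sum_t\beta^{n-t}\nabla F(\rvx_t)\|$, Lemma~\ref{lemma_discounted_sum_iid} for the noise term, and the regret term left as is. Your Step~3 remark is correct and is a point the paper passes over silently: the trajectory-dependent $\rvu_n$ is covered by the pathwise Cauchy--Schwarz step inside Lemma~\ref{lemma_discounted_sum_iid}, although note that the lemma is established only for this particular $\rvu_n$, not for an arbitrary comparator in $\mathcal{D}$.
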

\begin{proof}
Following the fact about exponential random variable, Lemma 3.1 in~\cite{zhang2024random}, $\mathbb{E}\bigl[F(\rvx_t)-F(\rvx_{t-1})\bigr]= \mathbb{E}\langle \nabla F(\rvx_t),\rvz_t\rangle$:
\begin{align}
\mathbb{E}\bigl[F(\rvx_t)-F(\rvx_{t-1})\bigr]
&=
\mathbb{E}\langle \nabla F(\rvx_t),\rvz_t\rangle
\nonumber\\
&=\mathbb{E}\langle \nabla F(\rvx_t),\rvu_n\rangle
+
\mathbb{E}\langle \nabla F(\rvx_t)-\rvg_t, \rvz_{t}-\rvu_n\rangle
+
\mathbb{E}\langle \rvg_t, \rvz_t-\rvu_n\rangle \nonumber\\
&=
\underbrace{\mathbb{E}\langle \nabla F(\rvx_t),\rvu_n\rangle}_{\text{\textcircled{1}}}
+
\underbrace{\mathbb{E}\langle \nabla F(\rvx_t)-\rvg_t,-\rvu_n\rangle}_{\text{\textcircled{2}}}
+
\underbrace{\mathbb{E}\langle \rvg_t, \rvz_t-\rvu_n\rangle}_{\text{\textcircled{3}}},
\label{eq:decomp_A}
\end{align}
where we used the fact $\mathbb{E}\langle \nabla F(\rvx_t)-\rvg_t, \rvz_t\rangle = 0$ for the last equality.

We bound the three pieces.

\medskip
\noindent\textcircled{1}:\;
Define
$
\rvu_n \;\triangleq\; -D\,\frac{\sum_{t=1}^{n}\beta^{n-t}\nabla F(\rvx_t)}{\left\|\sum_{t=1}^{n}\beta^{n-t}\nabla F(\rvx_t)\right\|},
$
then 
\begin{align*}
\mathbb{E}\sum_{t=1}^{n}\beta^{n-t}\langle \nabla F(\rvx_t),\rvu_n\rangle
&=
\mathbb{E}\Bigg\langle
\sum_{t=1}^{n}\beta^{n-t}\nabla F(\rvx_t),
\,-D\frac{\sum_{t=1}^{n}\beta^{n-t}\nabla F(\rvx_t)}{\left\|\sum_{t=1}^{n}\beta^{n-t}\nabla F(\rvx_t)\right\|}
\Bigg\rangle
\nonumber\\
&=
-D\mathbb{E}\Bigl\|\sum_{t=1}^{n}\beta^{n-t}\nabla F(\rvx_t)\Bigr\|\\
&\le
-D\frac{1 - \beta^{n}}{1 - \beta}\mathbb{E}\Bigl\|\mathbb{E}_{\rvy_n}\nabla F(\rvy_n)\Bigr\|
\end{align*}
where the last equality follows by probability conversion: the $\rvy_n$ is randomly distributed over $\{\rvx_{t}\}_{t=1}^{n}$ as $\mathbb{P}(\rvy_n=\rvx_t) \;=\; \frac{(1-\beta)\beta^{n-t}}{1-\beta^n}, t=1,\ldots,n$.

\medskip
\noindent\textcircled{2}:\;
\begin{equation*}
\mathbb{E}\sum_{t=1}^{n}\beta^{n-t}\langle \nabla F(\rvx_t) - \rvg_t, -\rvu_n\rangle = \mathbb{E}\sum_{t=1}^{n}\beta^{n-t}\langle \boldsymbol{\xi}_t,\rvu_n\rangle \le 2(1 - \beta)^{-\frac{1}{p}}\sigma D,
\end{equation*}
where the last inequality uses the result $\mathbb{E}\sum_{s=1}^{t}\beta^{t-s}\langle\boldsymbol{\xi}_{s}, \rvu\rangle \le 2(1 - \beta)^{-\frac{1}{p}}\sigma D$ in Lemma~\ref{lemma_discounted_sum_iid};

\medskip
\noindent\textcircled{3}:\;
\begin{equation*}
\mathbb{E}\sum_{t=1}^{n}\beta^{n-t}\langle \rvg_t, \rvz_t-\rvu_n\rangle
=
\mathbb{E}\mathrm{Regret}^{[\beta]}_n(\rvu_n).
\end{equation*}

\medskip
Combining the results in above three steps gives
\begin{align}
\mathbb{E}\left[\sum_{t=1}^{n}\beta^{n-t}\bigl(F(\rvx_t)-F(\rvx_{t-1})\bigr)\right]\le
-D\frac{1 - \beta^{n}}{1 - \beta}\mathbb{E}\Bigl\|\mathbb{E}_{\rvy_n}\nabla F(\rvy_n)\Bigr\| + 2(1 - \beta)^{-\frac{1}{p}}\sigma D + \mathbb{E}\mathrm{Regret}^{[\beta]}_n(\rvu_n),
\end{align}
which completes the proof.
\end{proof}


\LemmaHeavyTailConversion*
\begin{proof}
We start from a change of summation and telescoping,
\begin{align*}
\sum_{n=1}^{T}\sum_{t=1}^{n}(1-\beta)\beta^{n-t}\bigl(F(\rvx_t)-F(\rvx_{t-1})\bigr)
&=
F(\rvx_T)-F(\rvx_0)-\sum_{t=1}^{T}\beta^{T-t+1}\bigl(F(\rvx_t)-F(\rvx_{t-1})\bigr).
\end{align*}
Rearranging and using $F(\rvx_0)-F(\rvx_T)\le \Delta$ gives
\begin{align}
-\Delta \le \mathbb{E}\Bigg[
\underbrace{\sum_{n=1}^{T}\sum_{t=1}^{n}(1-\beta)\beta^{n-t}\bigl(F(\rvx_t)-F(\rvx_{t-1})\bigr)}_{A}
\Bigg]
+
\mathbb{E}\Bigg[
\underbrace{\sum_{t=1}^{T}\beta^{T-t+1}\bigl(F(\rvx_t)-F(\rvx_{t-1})\bigr)}_{B}
\Bigg].
\label{eq:AB_start}
\end{align}

Using the result in Lemma~\ref{lem_key_single_in_conversion}, we have that
\begin{align*}
-\Delta &\le (1-\beta)\sum_{t=1}^{T}\left(-D\frac{1 - \beta^{t}}{1 - \beta}\mathbb{E}\Bigl\|\mathbb{E}_{\rvy_t}\nabla F(\rvy_t)\Bigr\| + 2(1 - \beta)^{-\frac{1}{p}}\sigma D + \mathbb{E}\mathrm{Regret}^{[\beta]}_t(\rvu_t),\right) \\
&\qquad + \beta\left(-D\frac{1 - \beta^{T}}{1 - \beta}\mathbb{E}\Bigl\|\mathbb{E}_{\rvy_T}\nabla F(\rvy_T)\Bigr\| + 2(1 - \beta)^{-\frac{1}{p}}\sigma D + \mathbb{E}\mathrm{Regret}^{[\beta]}_{T}(\rvu_T),\right)
\end{align*}

Rearranging, we obtain
\begin{align*}
&D\sum_{t=1}^{T}(1 - \beta^{t})\mathbb{E}\Bigl\|\mathbb{E}_{\rvy_t}\nabla F(\rvy_t)\Bigr\| + D\frac{\beta - \beta^{T+1}}{1 - \beta}\mathbb{E}\left\|\mathbb{E}_{\rvy_T}\nabla F(\rvy_T)\right\|  \\
& \qquad\le (1-\beta)\sum_{t=1}^{T}\mathbb{E}\left[\mathrm{Regret}^{[\beta]}_t(\rvu_t)\right] + \beta\mathbb{E}\left[\mathrm{Regret}^{[\beta]}_T(\rvu_T)\right] + 2((1-\beta)T + \beta)(1 - \beta)^{-\frac{1}{p}}\sigma D + \Delta 
\end{align*}
By probability conversion over left hand side: since $\sum_{t=1}^{T}(1 - \beta^{t}) + \frac{\beta - \beta^{T+1}}{1 - \beta} = T$, $\tau$ is a random index among $[T]$ such that $\mathbb{P}(\tau = t) = \begin{cases}
\frac{1-\beta^{t}}{T} & \text{if}\quad t = 1,\cdots,T-1,\\
\frac{1 - \beta^{T}}{(1-\beta)T} & \text{if} \quad t = T
\end{cases}$. And, expectations are written as $\mathbb{E}_{\rvy_{\tau}}[\cdot]$ since we condition on the trajectory for the conversion analysis, and the only remaining randomness is the sampling of $\rvy_{\tau}$. 
It suffices to have
\begin{align*} &DT\mathbb{E}_{\tau}\left\|\mathbb{E}_{\rvy_{\tau}}\nabla F(\rvy_{\tau})\right\|  \le (1-\beta)\sum_{t=1}^{T}\mathbb{E}\left[\mathrm{Regret}^{[\beta]}_t(\rvu_t)\right] + \beta\mathbb{E}\left[\mathrm{Regret}^{[\beta]}_T(\rvu_T)\right] + 2((1-\beta)T + \beta)(1 - \beta)^{-\frac{1}{p}}\sigma D + \Delta \\
&\mathbb{E}_{\tau}\left\|\mathbb{E}_{\rvy_{\tau}}\nabla F(\rvy_{\tau})\right\| \le \frac{1}{DT}\left((1-\beta)\sum_{t=1}^{T}\mathbb{E}\left[\mathrm{Regret}^{[\beta]}_t(\rvu_t)\right] + \beta\mathbb{E}\left[\mathrm{Regret}^{[\beta]}_T(\rvu_T)\right]\right) + 2(1-\beta + \frac{\beta}{T})(1 - \beta)^{-\frac{1}{p}}\sigma + \frac{\Delta}{DT}.
\end{align*}
This concludes the proof.
\end{proof}

\section{Missing Proof of Section~\ref{sec_alg}}
\label{append_sec_alg}

\PropNonincreasingStepSize*
\begin{proof}
Let $c_{s}: = \frac{(1 - \beta_{1})\beta_{1}^{s-1}}{(1 - \beta_{1}^{s-1})(\sqrt{b_{s}} + \epsilon)}, \quad s\ge 2$, and set $c_{1}: = c_{2}$. Since $\alpha_{s} = \eta c_{s}$, it suffices to prove $c_{s} \le c_{s-1}$ for $s \ge 3$. The case $s = 2$ holds by initialization.

Fix $s\ge 3$ and set $k = s- 2$. By bias correction,
\[
b_{s} = w_{s}b_{s-1} + (1 - w_{s})\|\rvv_{s-1}\|^{2}, \quad w_{s}=\frac{\beta_{2}(1 - \beta_{2}^{k})}{1 - \beta_{2}^{k+1}},
\]
hence $b_{s} \ge w_{s}b_{s-1}$. Define $\phi_{k}(\beta):=\frac{\beta(1 - \beta^{k})}{1 - \beta^{k+1}}$, so that $w_{s} = \phi_{k}(\beta_{2})$.

Using $\frac{x + \epsilon}{\sqrt{w_{s}}x + \epsilon} \le \frac{1}{\sqrt{w_{s}}}$ for $x \ge 0$,
\[
\frac{c_{s}}{c_{s-1}} \le \frac{\beta_{1}(1 - \beta_{1}^{k})}{1 - \beta_{1}^{k+1}}\cdot\frac{\sqrt{b_{s-1}} + \epsilon}{\sqrt{w_{s}b_{s-1}} + \epsilon} \le \frac{\phi_{k}(\beta_{1})}{\sqrt{\phi_{k}(\beta_{2})}}.
\]

It remains to show $\phi_{k}(\beta_{1})^{2} \le \phi_{k}(\beta_{2})$. Since $\phi_{k}(\beta) = \frac{\sum_{j=1}^{k}\beta^{j}}{\sum_{j=0}^{k}\beta^{j}} = 1 - \frac{1}{\sum_{j=0}^{k}\beta^{j}}$, we have $0 < \phi_{k}(\beta) < 1$, and $\phi_{k}$ is increasing on $0, 1$. Thus, using $\beta_{1} \le \beta_{2}$, $\phi_{k}(\beta_{1})^{2} \le \phi_{k}(\beta_{1}) \le \phi_{k}(\beta_{2})$.

hence $c_{s}/c_{s-1}\le 1$, so $c_{s} \le c_{s-1}$ for all $s\ge 2$. Therefore $\alpha_{s} = \eta c_{s}$ is non-increasing.
\end{proof}

\PropStaticIncrementalBound*
\begin{proof}
Recall that
\[
\rvz_t
=
-\eta_t
\frac{\sum_{s=1}^{t-1} w_{1,t-1,s}\rvv_s}
{\sqrt{\sum_{s=1}^{t-1} w_{2,t-1,s}\|\rvv_s\|^2} + \epsilon},
\]
where
\[
w_{1,t,s} := \frac{(1-\beta_1)\beta_1^{t-s}}{1-\beta_1^t},
\quad
w_{2,t,s} := \frac{(1-\beta_2)\beta_2^{t-s}}{1-\beta_2^t}.
\]

Then discarding the $\epsilon$ term, we obtain
\begin{align*}
\|\rvz_t\|^{2}
&\le
\eta_t^2
\frac{\bigl\|\sum_{s=1}^{t-1} w_{1,t-1,s}\rvv_s\bigr\|^2}
{\sum_{s=1}^{t-1} w_{2,t-1,s}\|\rvv_s\|^2} 
\le
\eta_t^2
\frac{\sum_{s=1}^{t-1} w_{1,t-1,s}\|\rvv_s\|^2}
{\sum_{s=1}^{t-1} w_{2,t-1,s}\|\rvv_s\|^2},
\end{align*}
where the inequality follows from Jensen’s inequality since $\sum_s w_{1,t-1,s}=1$.

Rewriting the numerator gives
\[
\|\rvz_t\|^2
\le
\eta_t^2
\max_{1\le s\le t-1}
\frac{w_{1,t-1,s}}{w_{2,t-1,s}}.
\]
A direct computation yields
\begin{align*}
\max_{1\le s\le t-1}
\frac{w_{1,t-1,s}}{w_{2,t-1,s}}
&=
\frac{1-\beta_1}{1-\beta_2}
\frac{1-\beta_2^{t-1}}{1-\beta_1^{t-1}}
\max_{1\le s\le t-1}
\Bigl(\frac{\beta_1}{\beta_2}\Bigr)^{t-1-s}\le
\frac{1-\beta_1}{1-\beta_2},
\end{align*}

where we used $\beta_1\le \beta_2$.
Finally, Remark~\ref{remark_beta2_Cbeta} ensures
\(
\frac{1-\beta_1}{1-\beta_2}
\le
C(\beta).
\)

It suffice to have $\|\rvz_t\|^2 \le \eta_{t}^{2}C(\beta)$.
Taking square roots completes the proof.
\end{proof}

\PropHeavyTailControl*
\begin{proof}
Recall that
\[
b_{t+1}
=
\sum_{s=1}^t w_{s,t}\|\rvv_s\|^2,
\qquad
w_{s,t}
=
\frac{1-\beta_2}{1-\beta_2^t}\beta_2^{t-s}.
\]
Since $\rvv_s=\boldsymbol{\mu}_s+\boldsymbol{\xi}_s$, we have
\[
\sqrt{b_{t+1}}
=
\left(
\sum_{s=1}^t w_{s,t}\|\boldsymbol{\mu}_s+\boldsymbol{\xi}_s\|^2
\right)^{1/2}.
\]
By the triangle inequality in the product Hilbert space,
\[
\sqrt{b_{t+1}}
\le
\left(
\sum_{s=1}^t w_{s,t}\|\boldsymbol{\mu}_s\|^2
\right)^{1/2}
+
\left(
\sum_{s=1}^t w_{s,t}\|\boldsymbol{\xi}_s\|^2
\right)^{1/2}.
\]
Since $\|\boldsymbol{\mu}_s\|\le G$ and $\sum_s w_{s,t}=1$,
$
\left(
\sum_{s=1}^t w_{s,t}\|\boldsymbol{\mu}_s\|^2
\right)^{1/2}
\le G.
$
Therefore,
\[
\mathbb{E}\sqrt{b_{t+1}}
\le
G
+
\mathbb{E}
\left(
\sum_{s=1}^t w_{s,t}\|\boldsymbol{\xi}_s\|^2
\right)^{1/2}.
\]
By Lyapunov's inequality,
\[
\mathbb{E}
\left(
\sum_{s=1}^t w_{s,t}\|\boldsymbol{\xi}_s\|^2
\right)^{1/2}
\le
\left[
\mathbb{E}
\left(
\sum_{s=1}^t w_{s,t}\|\boldsymbol{\xi}_s\|^2
\right)^{p/2}
\right]^{1/p}.
\]
Because $p/2\le 1$, the map $x\mapsto x^{p/2}$ is subadditive on $\mathbb{R}_+$, and hence
\[
\left(
\sum_{s=1}^t w_{s,t}\|\boldsymbol{\xi}_s\|^2
\right)^{p/2}
\le
\sum_{s=1}^t w_{s,t}^{p/2}\|\boldsymbol{\xi}_s\|^p.
\]
Taking expectations and using
$\mathbb{E}\|\boldsymbol{\xi}_s\|^p\le \sigma^p$ gives
\[
\mathbb{E}
\left(
\sum_{s=1}^t w_{s,t}\|\boldsymbol{\xi}_s\|^2
\right)^{p/2}
\le
\sigma^p
\sum_{s=1}^t w_{s,t}^{p/2}.
\]
Thus
\[
\mathbb{E}\sqrt{b_{t+1}}
\le
G
+
\sigma
\left(
\sum_{s=1}^t w_{s,t}^{p/2}
\right)^{1/p}.
\]

It remains to compute the weight sum. We have
\[
\sum_{s=1}^t w_{s,t}^{p/2}
=
\left(
\frac{1-\beta_2}{1-\beta_2^t}
\right)^{p/2}
\sum_{s=1}^t \beta_2^{(t-s)p/2}
=
\left(
\frac{1-\beta_2}{1-\beta_2^t}
\right)^{p/2}
\frac{1-\beta_2^{tp/2}}{1-\beta_2^{p/2}} .
\]
Therefore,
\[
\mathbb{E}\sqrt{b_{t+1}}
\le
G
+
\sigma
\left[
\left(
\frac{1-\beta_2}{1-\beta_2^t}
\right)^{p/2}
\frac{1-\beta_2^{tp/2}}{1-\beta_2^{p/2}}
\right]^{1/p}.
\]
Since for $q=p/2\in(0,1]$ and $x\in[0,1]$,
$
1-x^q\le (1-x)^q,
$
we get
$
\frac{1-\beta_2^{tp/2}}{(1-\beta_2^t)^{p/2}}
\le 1.
$
Hence
\[
\left(
\sum_{s=1}^t w_{s,t}^{p/2}
\right)^{1/p}
\le
\frac{(1-\beta_2)^{1/2}}
{(1-\beta_2^{p/2})^{1/p}}.
\]
Finally, by concavity of $x^{p/2}$,
$
1-\beta_2^{p/2}
\ge
\frac{p}{2}(1-\beta_2),
$
so
\[
\frac{(1-\beta_2)^{1/2}}
{(1-\beta_2^{p/2})^{1/p}}
\le
\left(\frac{2}{p}\right)^{1/p}
(1-\beta_2)^{\frac12-\frac1p}.
\]
The claim follows.
\end{proof}

\section{Missing Proofs of Section~\ref{sec_guarantees}}
\label{append_sec_guarantees}

\LemmaMostDifficultOne*
\begin{proof}
The proof consists of three parts.
\paragraph{Part 1.}
We begin with
\begin{align*}
&\frac{\alpha_{s}}{2}\beta_{1}^{-2s}\|\overline{\rvv}_{s}\|^2 \\
& \le \frac{\eta}{2}\frac{1-\beta_{1}}{1-\beta_{1}^{s-1}}
  \frac{\sqrt{1-\beta_{2}^{s-1}}}{\sqrt{1-\beta_{2}}}
  \frac{1}{\sqrt{\beta_{2}^{s}}}
  \frac{\beta_{1}^{-s-1}\|\overline{\rvv}_{s}\|^2}
       {\sqrt{\sum_{n=1}^{s-1}\beta_{2}^{-n-1}\|\rvv_{n}\|^{2}}}\\
& \stackrel{(a)}{\le}
  \frac{\eta}{2}\frac{1-\beta_{1}}{1-\beta_{1}^{s-1}}
  \frac{\sqrt{1-\beta_{2}^{s-1}}}{\sqrt{1-\beta_{2}}}
  \frac{\beta_{1}^{-s-1}}{\beta_{2}^{-s-1}}
  \frac{1}{\sqrt{\beta_{2}^{s}}}
  \frac{\sqrt{2}\,\beta_{2}^{-s-1}\|\overline{\rvv}_{s}\|^2}
       {\sqrt{\beta_{2}^{-s-1}\|\overline{\rvv}_{s}\|^2
         +\sum_{n=1}^{s-1}\beta_{2}^{-n-1}\|\rvv_{n}\|^{2}}}\\
&\stackrel{(b)}{\le}
  \frac{\eta}{2}
  \frac{1-\beta_{1}}{\sqrt{1-\beta_{2}}}
  \frac{\sqrt{1-\beta_{2}^{s-1}}}{1-\beta_{1}^{s-1}}
  \frac{\beta_{1}^{-s-1}}{\beta_{2}^{-s-1}}
  \frac{1}{\sqrt{\beta_{2}^{s}}}
  \cdot 2\sqrt{2}
    \Bigl(\sqrt{\beta_{2}^{-s-1}\|\overline{\rvv}_{s}\|^2
         +\sum_{n=1}^{s-1}\beta_{2}^{-n-1}\|\rvv_{n}\|^{2}}
    -\sqrt{\textstyle\sum_{n=1}^{s-1}\beta_{2}^{-n-1}\|\rvv_{n}\|^{2}}\Bigr) \\
&\le
  \frac{\eta}{2}
  \frac{1-\beta_{1}}{\sqrt{1-\beta_{2}}}
  \frac{\sqrt{1-\beta_{2}^{s-1}}}{1-\beta_{1}^{s-1}}
  \frac{\beta_{1}^{-s-1}}{\beta_{2}^{-s-1}}
  \frac{1}{\sqrt{\beta_{2}^{s}}}
  \cdot 2\sqrt{2}
  \underbrace{
    \Bigl(\sqrt{\textstyle\sum_{n=1}^{s}\beta_{2}^{-n-1}\|\rvv_{n}\|^{2}}
    -\sqrt{\textstyle\sum_{n=1}^{s-1}\beta_{2}^{-n-1}\|\rvv_{n}\|^{2}}\Bigr)
  }_{=\;Q_s\,-\,Q_{s-1}} \\
  &= \underbrace{
    \sqrt{2}\eta\,
    \frac{(1-\beta_{1})\sqrt{1-\beta_{2}^{s-1}}}
         {(1-\beta_{1}^{s-1})\sqrt{1-\beta_{2}}}\,
    \frac{\beta_{2}^{s/2+1}}{\beta_{1}^{s+1}}
  }_{=\;P_s}
  (Q_{s} - Q_{s-1})
\end{align*}
where 
$(a)$ uses
$\beta_{2}^{-s-1}\|\overline{\rvv}_{s}\|^2 \le \sum_{n=1}^{s-1}\beta_{2}^{s-2n-1}\|\rvv_{n}\|^{2}
\le \sum_{n=1}^{s-1}\beta_{2}^{-n-1}\|\rvv_{n}\|^{2}$, which follows from the definition of $\overline{\rvv}_s$;
$(b)$ uses
$\frac{x}{\sqrt{x+y}}\le 2(\sqrt{x+y}-\sqrt{y})$
for $x,y>0$.

From the result above, we have
\begin{align*}
\sum_{s=1}^{t}\frac{\alpha_s}{2}\beta_1^{t-2s}\|\overline{\rvv}_s\|^2  \le \beta_{1}^{t}\left(\sum_{s=1}^{t}P_{s}(Q_{s} - Q_{s-1})\right) = \beta_{1}^{t}\left(P_t Q_t
  +\sum_{s=1}^{t-1}(P_s-P_{s+1})Q_s\right).
\end{align*}

\paragraph{Part 2.}

Next, we show that the remainder $R(t): = \sum_{s=1}^{t-1}(P_s-P_{s+1})Q_s$ is non-positive for $t\ge T_{0}: = 2+\frac{\log(1/(1 - \beta_{1}))}{\log(\sqrt{\beta_{2}}/\beta_{1})}$.

\paragraph{Part 2.1. The sequence \(\{P_s\}_{s\ge2}\) decreases and then increases.}
Extend
$h(s):=\log P_s$ to real $s\ge2$. Let
$\lambda_i:=-\log\beta_i$ and $\tau:=\lambda_1/\lambda_2\ge 1$. Then
\[
h'(s)
=
\frac{\lambda_2}{2}
\left[
(2\tau-1)
+\frac{1}{e^y-1}
-\frac{2\tau}{e^{\tau y}-1}
\right],
\qquad
y:=\lambda_2(s-1).
\]
After multiplying by the positive quantity
$2(e^y-1)(e^{\tau y}-1)/\lambda_2$, the sign of $h'(s)$ is the sign of
\[
G(y)
:=
(2\tau-1)(e^y-1)(e^{\tau y}-1)
+(e^{\tau y}-1)
-2\tau(e^y-1).
\]

To verify the claimed sign pattern, write \(x=e^y\ge1\). A direct
calculation gives
\[
G'(y)=x\Phi(x),
\]
where
\[
\Phi(x)
=
(2\tau-1)(\tau+1)x^\tau
-2\tau(\tau-1)x^{\tau-1}
-(4\tau-1).
\]
Moreover,
\[
\Phi(1)=-\tau<0,\qquad \Phi(x)\to\infty,
\]
and
\[
\Phi'(x)
=
\tau x^{\tau-2}
\left((2\tau-1)(\tau+1)x-2(\tau-1)^2\right)>0
\quad (x\ge1),
\]
because
\[
(2\tau-1)(\tau+1)-2(\tau-1)^2=5\tau-3>0.
\]
Thus \(G'\) has exactly one positive zero. Since \(G(0)=0\),
\(G'(0)<0\), and \(G(y)\to\infty\), it follows that \(G\) has exactly
one positive zero. Hence \(h'\) changes sign once, from negative to
positive.

Therefore \(h\) first decreases and then increases on \([2,\infty)\).
Consequently, there exists an integer \(s_\star\ge2\) such that
\(P_{s+1}\ge P_s\), we have
\[
P_{s+1}<P_s \quad (2\le s<s_\star),
\qquad
P_{s+1}\ge P_s \quad (s\ge s_\star).
\]

\paragraph{Part 2.2.}
Write
\[
\rho:=\frac{\sqrt{\beta_2}}{\beta_1}>1,
\qquad
C_L:=\frac{\sqrt{2}\eta(1-\beta_1)\beta_2}{\beta_1}.
\]
From the definition of $P_s$,
\[
P_s\ge C_L\rho^s,
\qquad
P_2=\frac{C_L\rho^2}{1-\beta_1}.
\]
Hence $t\ge T_0$ implies $P_t\ge P_2$. 
By the monotonicity pattern established above, this gives
$P_j\le P_t$ for all $2\le j\le t$.
Indeed, terms before the turning point are at most \(P_2\), while terms
after the turning point are nondecreasing and hence at most \(P_t\). Therefore,
\begin{align*}
\sum_{s=1}^{t-1}(P_s-P_{s+1})Q_s
&=
-P_tQ_{t-1}
+
\sum_{s=1}^{t-2}P_{s+1}(Q_{s+1}-Q_s) \\
&\le
-P_tQ_{t-1}
+
P_t\sum_{s=1}^{t-2}(Q_{s+1}-Q_s) \\
&=
-P_tQ_1
\le0.
\end{align*}
Thus, for $t\ge T_0$,
\[
\sum_{s=1}^{t}
\frac{\alpha_s}{2}\beta_1^{t-2s}
\|\overline{\rvv}_s\|^2
\le
\beta_1^tP_tQ_t .
\]

\paragraph{Part 3. }

It remains to bound the expectation of the terminal term. Expanding $P_tQ_t$,
\[
\beta_1^tP_tQ_t
=
\frac{\sqrt{2\beta_2}\eta(1-\beta_1)}
{\beta_1\sqrt{1-\beta_2}}
\frac{\sqrt{1-\beta_2^{t-1}}}
{1-\beta_1^{t-1}}
\sqrt{
\sum_{n=1}^{t}\beta_2^{t-n}\|\rvv_n\|^2
}.
\]
By Corollary~\ref{corollary_cost_verctor_tail_norm_bound},
\begin{align*}
\mathbb E\!\left[
\sqrt{
\sum_{n=1}^{t}\beta_2^{t-n}\|\rvv_n\|^2
}
\right]
&\le
\sqrt{\frac{1 - \beta_{2}^{t}}{1 - \beta_{2}}}\left(G + 2\sigma(1 - \beta_{2})^{-\left(\frac1p-\frac12\right)}\right).
\end{align*}
Therefore,
\begin{align*}
\mathbb E\!\left[\sum_{s=1}^{t}
\frac{\alpha_s}{2}\beta_1^{t-2s}
\|\overline{\rvv}_s\|^2\right] \le \mathbb E[\beta_1^tP_tQ_t]
&\le
\frac{\sqrt{2\beta_2}\eta(1-\beta_1)}
{\beta_1(1-\beta_2)}
\frac{
\sqrt{(1-\beta_2^{t-1})(1-\beta_2^t)}
}
{1-\beta_1^{t-1}}
\left(G + 2\sigma(1 - \beta_{2})^{-\left(\frac1p-\frac12\right)}\right)\\
&\le\eta\frac{2\sqrt{\beta_2}(1-\beta_1)}{\beta_1(1-\beta_2)}\left(G + 2\sigma(1 - \beta_{2})^{-\left(\frac1p-\frac12\right)}\right).
\end{align*}
where the last inequality uses (1) $1-\beta_2^t\le 2(1-\beta_2^{t-1})$ for $t\ge2$; (2) $\frac{1-\beta_2^{t-1}}{1-\beta_1^{t-1}}\le1$ for $\beta_1 \le \beta_2$.

This concludes the proof. 
\end{proof}

\TheoremFinalConvergence*
\begin{proof}
Start from the conversion bound~\eqref{eq_stationary_point_guarantee}
(with $\beta=\beta_1$):
\begin{align*}
\mathbb{E}_{\tau}\Vert\nabla F(\widetilde{\rvx}_{\tau})\Vert^{[\rho]}
&\le
\frac{1}{DT}
\left(
\beta_1\mathbb{E}\left[\mathrm{Regret}^{[\beta]}_T(\rvu_T)\right]
+
(1-\beta_1)\sum_{t=1}^{T}
\mathbb{E}\left[\mathrm{Regret}^{[\beta]}_t(\rvu_t)\right]
\right)
\nonumber
\\
&\qquad
+
2\left(1-\beta_1+\frac{\beta_1}{T}\right)
(1-\beta_1)^{-\frac1p}\sigma
+
\frac{\Delta}{DT}
+
\frac{2\rho\beta_1\sup_{t\in[T]}\mathbb{E}\left[\|\rvz_t\|^2\right]}
{(1-\beta_1)^2}.
\end{align*}
Let
\[
T_0
:=
2+\frac{\log(1/(1-\beta_1))}
{\log(\sqrt{\beta_2}/\beta_1)} ,
\]
where \(T_0\) is rounded up if necessary. We assume
\(\beta_1<\sqrt{\beta_2}\), so that the denominator is positive.

For the first \(T_0\) rounds, we use a crude bound inferred directly
from the regret definition. Since
\[
\|\rvz_s\|\le \frac{\sqrt{C(\beta)}D}{(1-\beta_1)^{1/4}},
\qquad
\|\rvu_t\|\le D,
\]
we have, for any \(t<T_0\),
\begin{align*}
\mathbb{E}\left[\mathrm{Regret}^{[\beta]}_t(\rvu_t)\right]
&=
\mathbb{E}\left[
\sum_{s=1}^{t}
\beta_1^{t-s}
\langle \rvv_s,\rvz_s-\rvu_t\rangle
\right]
\\
&\le
D\left(1+\sqrt{C(\beta)}(1-\beta_1)^{-1/4}\right)
\mathbb{E}\left[
\sum_{s=1}^{t}\beta_1^{t-s}\|\rvv_s\|
\right].
\end{align*}
By Assumption~\ref{ass_vector_assumption},
\[
\mathbb{E}\|\rvv_s\|
\le
\|\boldsymbol{\mu}_s\|
+
\mathbb{E}\|\boldsymbol{\xi}_s\|
\le
G+
\left(\mathbb{E}\|\boldsymbol{\xi}_s\|^p\right)^{1/p}
\le
G+\sigma .
\]
Therefore, summing the first \(T_0\) rounds inside the conversion bound gives
\begin{align*}
&\frac{1-\beta_1}{DT}
\sum_{t<T_0}
\mathbb{E}\left[\mathrm{Regret}^{[\beta]}_t(\rvu_t)\right]
\\
&\qquad\le
\frac{1-\beta_1}{T}
\left(1+\sqrt{C(\beta)}(1-\beta_1)^{-1/4}\right)
\sum_{t<T_0}\sum_{s=1}^{t}
\beta_1^{t-s}\mathbb{E}\|\rvv_s\|
\\
&\qquad=
\frac{1}{T}
\left(1+\sqrt{C(\beta)}(1-\beta_1)^{-1/4}\right)
\sum_{s<T_0}
\left((1-\beta_1)\sum_{t=s}^{T_0-1}\beta_1^{t-s}\right)
\mathbb{E}\|\rvv_s\|
\\
&\qquad\le
\frac{T_0}{T}
\left(1+\sqrt{C(\beta)}(1-\beta_1)^{-1/4}\right)
(G+\sigma).
\end{align*}
For \(t\ge T_0\), Theorem~\ref{theorem_regret_bound} gives
\[
\mathbb{E}\left[\mathrm{Regret}^{[\beta]}_t(\rvu_t)\right]
\le
\frac{
DC(R)\left(G+2\sigma(1-\beta_2)^{-\left(\frac1p-\frac12\right)}\right)
}
{(1-\beta_1)^{3/4}} .
\]
Thus, for \(T\ge T_0\),
\begin{align*}
&\frac{1}{DT}
\left(
\beta_1\mathbb{E}\left[\mathrm{Regret}^{[\beta]}_T(\rvu_T)\right]
+
(1-\beta_1)\sum_{t=1}^{T}
\mathbb{E}\left[\mathrm{Regret}^{[\beta]}_t(\rvu_t)\right]
\right)
\\
&\qquad\le
\left(1-\beta_1+\frac{\beta_1}{T}\right)
\frac{
C(R)\left(G+2\sigma(1-\beta_2)^{-\left(\frac1p-\frac12\right)}\right)
}
{(1-\beta_1)^{3/4}}
\\
&\qquad\quad
+
\frac{T_0}{T}
\left(1+\sqrt{C(\beta)}(1-\beta_1)^{-1/4}\right)
(G+\sigma).
\end{align*}

Substituting the increment bound from Proposition~\ref{prop_staticincremental_bound} and $\eta = \frac{D}{(1 - \beta_{1})^{1/4}}$,
\[
\|\rvz_t\|
\le \sqrt{C(\beta)}\eta
=
\frac{\sqrt{C(\beta)}D}{(1-\beta_1)^{1/4}},
\]
and choosing
\(
D=
\frac{(1-\beta_1)^{5/4}\epsilon^{1/2}}
{C(\beta)\rho^{1/2}},
\)
we obtain
\begin{align*}
\mathbb{E}_{\tau}\Vert\nabla F(\widetilde{\rvx}_{\tau})\Vert^{[\rho]}
&\le
\left(1-\beta_1+\frac{\beta_1}{T}\right)
\frac{
C(R)\left(G+2\sigma(1-\beta_2)^{-\left(\frac1p-\frac12\right)}\right)
}
{(1-\beta_1)^{3/4}}
\\
&\qquad
+
\frac{T_0}{T}
\left(1+\sqrt{C(\beta)}(1-\beta_1)^{-1/4}\right)
(G+\sigma)
\\
&\qquad
+
2\left(1-\beta_1+\frac{\beta_1}{T}\right)
(1-\beta_1)^{-1/p}\sigma
+
\frac{\sqrt{C(\beta)}\rho^{1/2}\Delta}
{(1-\beta_1)^{5/4}\epsilon^{1/2}T}
+
2\epsilon .
\end{align*}
Expanding the first and third terms gives
\begin{align*}
\mathbb{E}_{\tau}\Vert\nabla F(\widetilde{\rvx}_{\tau})\Vert^{[\rho]}
&\le
C(R)\left(G+2\sigma(1-\beta_2)^{-\left(\frac1p-\frac12\right)}\right)
(1-\beta_1)^{1/4}
+
2\sigma(1-\beta_1)^{1-\frac1p}
\\
&\qquad
+
\frac{
C(R)\beta_1
\left(G+2\sigma(1-\beta_2)^{-\left(\frac1p-\frac12\right)}\right)
}
{(1-\beta_1)^{3/4}T}
+
\frac{2\beta_1\sigma}
{(1-\beta_1)^{1/p}T}
\\
&\qquad
+
\frac{\sqrt{C(\beta)}\rho^{1/2}\Delta}
{(1-\beta_1)^{5/4}\epsilon^{1/2}T}
+
\frac{T_0}{T}
\left(1+\sqrt{C(\beta)}(1-\beta_1)^{-1/4}\right)
(G+\sigma)
+
2\epsilon .
\end{align*}
Hence, using
\(
1-\beta_1\le C(\beta)(1-\beta_2),
\)
we have
\begin{align*}
\mathbb{E}_{\tau}\Vert\nabla F(\widetilde{\rvx}_{\tau})\Vert^{[\rho]}
&\le
C(R)G(1-\beta_1)^{1/4}
+
2C(R)C(\beta)^{\frac1p-\frac12}
\sigma(1-\beta_1)^{\frac34-\frac1p}
+
2\sigma(1-\beta_1)^{1-\frac1p}
\\
&\qquad
+
\frac{C(R)\beta_1G}
{(1-\beta_1)^{3/4}T}
+
\frac{
2C(R)C(\beta)^{\frac1p-\frac12}\beta_1\sigma
}
{(1-\beta_1)^{\frac14+\frac1p}T}
+
\frac{2\beta_1\sigma}
{(1-\beta_1)^{1/p}T}
\\
&\qquad
+
\frac{\sqrt{C(\beta)}\rho^{1/2}\Delta}
{(1-\beta_1)^{5/4}\epsilon^{1/2}T}
+
\frac{T_0}{T}
\left(1+\sqrt{C(\beta)}(1-\beta_1)^{-1/4}\right)
(G+\sigma)
+
2\epsilon .
\end{align*}

Now assume \(p>\frac43\), \(0<\epsilon\le C(R)G+\sigma\), and
\(\beta_1 \le \beta_{2}<\sqrt{\beta_2}\). Choose
\[
\beta_1
=
1-
\left(
\frac{\epsilon}{G+\sigma}
\right)^{\frac{4p}{3p-4}} .
\]
Then
\(
(1-\beta_1)^{1/4}
\le
\frac{\epsilon}{G+\sigma}
\)
,
\(
(1-\beta_1)^{\frac34-\frac1p}
=
\frac{\epsilon}{G+\sigma}
\)
, and 
\(
(1-\beta_1)^{1-\frac1p}
\le
\frac{\epsilon}{G+\sigma}.
\)
Thus, the first three non-\(T^{-1}\) terms are bounded by
\(
C(R)\epsilon
+
2C(R)C(\beta)^{\frac1p-\frac12}\epsilon
+
2\epsilon .
\)
Therefore,
\begin{align*}
\mathbb{E}_{\tau}\Vert\nabla F(\widetilde{\rvx}_{\tau})\Vert^{[\rho]}
&\le
\frac{
C(R)\beta_1G
\left(C(R)G+\sigma\right)^{\frac{3p}{3p-4}}
}
{
\epsilon^{\frac{3p}{3p-4}}T
}
+
\frac{
2C(R)C(\beta)^{\frac1p-\frac12}\beta_1\sigma
\left(C(R)G+\sigma\right)^{\frac{p+4}{3p-4}}
}
{
\epsilon^{\frac{p+4}{3p-4}}T
}
\\
&\qquad
+
\frac{
2\beta_1\sigma
\left(C(R)G+\sigma\right)^{\frac{4}{3p-4}}
}
{
\epsilon^{\frac{4}{3p-4}}T
}
+
\frac{
\sqrt{C(\beta)}\rho^{1/2}\Delta
\left(C(R)G+\sigma\right)^{\frac{5p}{3p-4}}
}
{
\epsilon^{\frac{5p}{3p-4}+\frac12}T
}
\\
&\qquad
+
\frac{(G+\sigma)T_{0}}{T}\cdot
\left(
1+
\sqrt{C(\beta)}
\left(C(R)G+\sigma\right)^{\frac{p}{3p-4}}
\epsilon^{-\frac{p}{3p-4}}
\right)
\\
&\qquad
+ C(R)\epsilon
+
2C(R)C(\beta)^{\frac1p-\frac12}\epsilon
+
4\epsilon .
\end{align*}
It is sufficient to choose
\begin{align*}
T
\ge
\max\Bigg\{
&
2\left(1+C(R)C(\beta)^{\frac1p-\frac12}\right)
\left(C(R)G+\sigma\right)^{\frac{4p}{3p-4}}
\epsilon^{-\frac{4p}{3p-4}},
\\
&
\sqrt{C(\beta)}\rho^{1/2}\Delta
\left(C(R)G+\sigma\right)^{\frac{5p}{3p-4}}
\epsilon^{-\left(\frac{5p}{3p-4}+\frac32\right)},
\\
&
(G+\sigma)\epsilon^{-1}T_{0} + 
\sqrt{C(\beta)}
\left(C(R)G+\sigma\right)^{\frac{4p-4}{3p-4}}
\epsilon^{-\frac{4p-4}{3p-4}}T_{0}
\Bigg\},
\end{align*}
where the last term can be further reformulated as $\mathcal{O}\left(\left(C(R)G+\sigma\right)^{\frac{8p- 4}{3p-4}}
\epsilon^{-\frac{8p -4}{3p-4}}\log\frac{G + \sigma}{\epsilon}\right)$
due to $T_0
:=
2+\frac{\log(1/(1-\beta_1))}
{\log(\sqrt{\beta_2}/\beta_1)} = \mathcal{O}\left(\left(\frac{G+\sigma}{\epsilon}\right)^{\frac{4p}{3p-4}}\log\frac{G + \sigma}{\epsilon}\right)$.

The third candidate ensures both \(T\ge T_0\) and that the burn-in term is
at most \(\epsilon\). The first candidate controls the first three
\(T^{-1}\) terms by at most \(2\epsilon\), and the second candidate controls
the \(\Delta\)-term by at most \(\epsilon\). 
Consequently,
\[
\mathbb{E}_{\tau}\Vert\nabla F(\widetilde{\rvx}_{\tau})\Vert^{[\rho]}
\le
\left(
8+
C(R)+
2C(R)C(\beta)^{\frac1p-\frac12}
\right)\epsilon .
\]

Meanwhile, we observe that the second term in the selection of T dominates under fixed constants and nondegenerate $\Delta, \rho$ asymptotically.
This proves the desired stationarity guarantee for \(p>\frac43\).
\end{proof}

\CorollaryWithKnowledgeD*
\begin{proof}
Start from the unsimplified conversion bound~\eqref{eq_stationary_point_guarantee}
(with $\beta=\beta_1$):
\begin{align*}
\mathbb{E}_{\tau}\Vert\nabla F(\widetilde{\rvx}_{\tau})\Vert^{[\rho]}
&\le
\frac{1}{DT}
\left(
\beta_1\mathbb{E}\left[\mathrm{Regret}^{[\beta]}_T(\rvu_T)\right]
+
(1-\beta_1)\sum_{t=1}^{T}
\mathbb{E}\left[\mathrm{Regret}^{[\beta]}_t(\rvu_t)\right]
\right)
\nonumber
\\
&\qquad
+
2\left(1-\beta_1+\frac{\beta_1}{T}\right)
(1-\beta_1)^{-\frac1p}\sigma
+
\frac{\Delta}{DT}
+
\frac{2\rho\beta_1\sup_{t\in[T]}\mathbb{E}\left[\|\rvz_t\|^2\right]}
{(1-\beta_1)^2}.
\end{align*}
Recall
\(
T_0
:=
2+\frac{\log(1/(1-\beta_1))}
{\log(\sqrt{\beta_2}/\beta_1)} ,
\)
where \(T_0\) is rounded up if necessary. 

For the first \(T_0\) rounds, we use a crude bound inferred directly
from the regret definition. Since
\[
\|\rvz_s\|\le D,
\qquad
\|\rvu_t\|\le D,
\]
we have, for any \(t<T_0\),
\begin{align*}
\mathbb{E}\left[\mathrm{Regret}^{[\beta]}_t(\rvu_t)\right]
&=
\mathbb{E}\left[
\sum_{s=1}^{t}
\beta_1^{t-s}
\langle \rvv_s,\rvz_s-\rvu_t\rangle
\right]\le
2D
\mathbb{E}\left[
\sum_{s=1}^{t}\beta_1^{t-s}\|\rvv_s\|
\right].
\end{align*}
By Assumption~\ref{ass_vector_assumption},
\[
\mathbb{E}\|\rvv_s\|
\le
\|\boldsymbol{\mu}_s\|
+
\mathbb{E}\|\boldsymbol{\xi}_s\|
\le
G+
\left(\mathbb{E}\|\boldsymbol{\xi}_s\|^p\right)^{1/p}
\le
G+\sigma .
\]
Therefore, summing the first \(T_0\) rounds inside the conversion bound gives
\begin{align*}
\frac{1-\beta_1}{DT}
\sum_{t<T_0}
\mathbb{E}\left[\mathrm{Regret}^{[\beta]}_t(\rvu_t)\right]
&\le
\frac{2(1-\beta_1)}{T}
\sum_{t<T_0}\sum_{s=1}^{t}
\beta_1^{t-s}\mathbb{E}\|\rvv_s\|
\\
&=
\frac{2}{T}
\sum_{s<T_0}
\left((1-\beta_1)\sum_{t=s}^{T_0-1}\beta_1^{t-s}\right)
\mathbb{E}\|\rvv_s\|
\\
&\le
\frac{2T_0}{T}(G+\sigma).
\end{align*}
For \(t\ge T_0\), Corollary~\ref{corollary_inc_norm_bound} gives
\[
\mathbb{E}\left[\mathrm{Regret}^{[\beta]}_t(\rvu_t)\right]
\le
\frac{
DC(R)\left(G+2\sigma(1-\beta_2)^{-\left(\frac1p-\frac12\right)}\right)
}
{(1-\beta_1)^{1/2}} .
\]
Thus, for \(T\ge T_0\),
\begin{align*}
&\frac{1}{DT}
\left(
\beta_1\mathbb{E}\left[\mathrm{Regret}^{[\beta]}_T(\rvu_T)\right]
+
(1-\beta_1)\sum_{t=1}^{T}
\mathbb{E}\left[\mathrm{Regret}^{[\beta]}_t(\rvu_t)\right]
\right)
\\
&\qquad\le
\left(1-\beta_1+\frac{\beta_1}{T}\right)
\frac{
C(R)\left(G+2\sigma(1-\beta_2)^{-\left(\frac1p-\frac12\right)}\right)
}
{(1-\beta_1)^{1/2}}
+
\frac{2T_0}{T}(G+\sigma).
\end{align*}

Substituting the clipped increment bound \(\|\rvz_t\|\le D\), and choosing
\(
D=
\frac{(1-\beta_1)\epsilon^{1/2}}
{\rho^{1/2}},
\)
we obtain
\begin{align*}
\mathbb{E}_{\tau}\Vert\nabla F(\widetilde{\rvx}_{\tau})\Vert^{[\rho]}
&\le
\left(1-\beta_1+\frac{\beta_1}{T}\right)
\frac{
C(R)\left(G+2\sigma(1-\beta_2)^{-\left(\frac1p-\frac12\right)}\right)
}
{(1-\beta_1)^{1/2}}
\\
&\qquad
+
\frac{2T_0}{T}(G+\sigma)
+
2\left(1-\beta_1+\frac{\beta_1}{T}\right)
(1-\beta_1)^{-1/p}\sigma
\\
&\qquad
+
\frac{\rho^{1/2}\Delta}
{(1-\beta_1)\epsilon^{1/2}T}
+
2\epsilon .
\end{align*}
Expanding the first and third terms gives
\begin{align*}
\mathbb{E}_{\tau}\Vert\nabla F(\widetilde{\rvx}_{\tau})\Vert^{[\rho]}
&\le
C(R)\left(G+2\sigma(1-\beta_2)^{-\left(\frac1p-\frac12\right)}\right)
(1-\beta_1)^{1/2}
+
2\sigma(1-\beta_1)^{1-\frac1p}
\\
&\qquad
+
\frac{
C(R)\beta_1
\left(G+2\sigma(1-\beta_2)^{-\left(\frac1p-\frac12\right)}\right)
}
{(1-\beta_1)^{1/2}T}
+
\frac{2\beta_1\sigma}
{(1-\beta_1)^{1/p}T}
\\
&\qquad
+
\frac{\rho^{1/2}\Delta}
{(1-\beta_1)\epsilon^{1/2}T}
+
\frac{2T_0}{T}(G+\sigma)
+
2\epsilon .
\end{align*}
Using
\(
1-\beta_1\le C(\beta)(1-\beta_2),
\)
we obtain
\begin{align*}
\mathbb{E}_{\tau}\Vert\nabla F(\widetilde{\rvx}_{\tau})\Vert^{[\rho]}
&\le
C(R)G(1-\beta_1)^{1/2}
+
2C(R)C(\beta)^{\frac1p-\frac12}
\sigma(1-\beta_1)^{1-\frac1p}
+
2\sigma(1-\beta_1)^{1-\frac1p}
\\
&\qquad
+
\frac{C(R)\beta_1G}
{(1-\beta_1)^{1/2}T}
+
\frac{
2\left(1+C(R)C(\beta)^{\frac1p-\frac12}\right)\beta_1\sigma
}
{(1-\beta_1)^{1/p}T}
\\
&\qquad
+
\frac{\rho^{1/2}\Delta}
{(1-\beta_1)\epsilon^{1/2}T}
+
\frac{2T_0}{T}(G+\sigma)
+
2\epsilon .
\end{align*}

Now assume \(0<\epsilon\le G+\sigma\), and choose
\(
\beta_1
=
1-
\left(
\frac{\epsilon}{G+\sigma}
\right)^{\frac{p}{p-1}} .
\)
Then
\(
(1-\beta_1)^{1-\frac1p}
=
\frac{\epsilon}{G+\sigma}
\)
and 
\(
(1-\beta_1)^{1/2}
\le
\frac{\epsilon}{G+\sigma}.
\)
Thus, the first three non-\(T^{-1}\) terms are bounded by
\(
C(R)\epsilon+
2C(R)C(\beta)^{\frac1p-\frac12}\epsilon
+
2\epsilon .
\)
Therefore,
\begin{align*}
\mathbb{E}_{\tau}\Vert\nabla F(\widetilde{\rvx}_{\tau})\Vert^{[\rho]}
&\le
\frac{
C(R)\beta_1G
\left(C(R)G+\sigma\right)^{\frac{p}{2p-2}}
}
{
\epsilon^{\frac{p}{2p-2}}T
}
+
\frac{
2\left(1+C(R)C(\beta)^{\frac1p-\frac12}\right)\beta_1\sigma
\left(C(R)G+\sigma\right)^{\frac{1}{p-1}}
}
{
\epsilon^{\frac{1}{p-1}}T
}
\\
&\qquad
+
\frac{
\rho^{1/2}\Delta
\left(C(R)G+\sigma\right)^{\frac{p}{p-1}}
}
{
\epsilon^{\frac{p}{p-1}+\frac12}T
}
+
\frac{2(G+\sigma)T_{0}}{T}
+
2C(R)C(\beta)^{\frac1p-\frac12}\epsilon
+
4\epsilon .
\end{align*}
It is sufficient to choose
\begin{align*}
T
\ge
\max\Bigg\{
&
2\left(1+C(R)C(\beta)^{\frac1p-\frac12}\right)
\left(C(R)G+\sigma\right)^{\frac{p}{p-1}}
\epsilon^{-\frac{p}{p-1}},
\\
&
\rho^{1/2}\Delta
\left(C(R)G+\sigma\right)^{\frac{p}{p-1}}
\epsilon^{-\left(\frac{p}{p-1}+\frac32\right)},
\\
&
\left(1+2(G+\sigma)\epsilon^{-1}\right)T_{0}
\Bigg\},
\end{align*}
where the last term can be further reformulated as $\mathcal{O}\left(\left(G+\sigma\right)^{\frac{p}{p-1} + 1}
\epsilon^{-\left(\frac{p}{p-1}+1\right)}\log\frac{G + \sigma}{\epsilon}\right)$ due to $T_0
:=
2+\frac{\log(1/(1-\beta_1))}
{\log(\sqrt{\beta_2}/\beta_1)} = \mathcal{O}\left(\left(\frac{G+\sigma}{\epsilon}\right)^{\frac{p}{p-1}}\log\frac{G + \sigma}{\epsilon}\right)$ when choosing $\beta_1
=
1-
\left(
\frac{\epsilon}{G+\sigma}
\right)^{\frac{p}{p-1}}$.

The third candidate ensures both \(T\ge T_0\) and that the burn-in term is
at most \(\epsilon\). The first candidate controls the first two
\(T^{-1}\) terms by at most \(2\epsilon\), and the second candidate controls
the \(\Delta\)-term by at most \(\epsilon\).  
Consequently,
\[
\mathbb{E}_{\tau}\Vert\nabla F(\widetilde{\rvx}_{\tau})\Vert^{[\rho]}
\le
\left(
8+
C(R)+
2C(R)C(\beta)^{\frac1p-\frac12}
\right)\epsilon .
\]

Meanwhile, we observe that the second term in the selection of T dominates w.r.t. $\epsilon$ under fixed constants and nondegenerate $\Delta, \rho$ asymptotically.
This proves the clipped stationarity guarantee for \(p\in(1,2]\).
\end{proof}

\end{document}